\DeclareRobustCommand*\cal{\@fontswitch\relax\mathcal}
\numberwithin{equation}{section}
\numberwithin{figure}{section}
\providecommand{\Lemmaname}{Lemma}
\providecommand{\assumptionname}{Assumption}
\providecommand{\definitionname}{Definition}
\providecommand{\remarkname}{Remark}
\providecommand{\theoremname}{Theorem}
\providecommand{\corollaryname}{Corollary}
\theoremstyle{plain}
\newtheorem{theorem}{\protect\theoremname}
\theoremstyle{definition}
\theoremstyle{plain}
\theoremstyle{remark}
\theoremstyle{plain}
\newtheorem{Lemma}[theorem]{\protect\Lemmaname}
\newtheorem{corollary}[theorem]{\protect\corollaryname}
\newcommand{\random}{\texttt{random}}
\global\long\def\R{\mathbb{R}}%
\global\long\def\e{{\mathbf{e}}}%
\global\long\def\x{{\mathbf{x}}}%
\global\long\def\w{{\mathbf{w}}}%
\global\long\def\b{{\mathbf{b}}}%
\global\long\def\u{{\mathbf{u}}}%
\global\long\def\y{{\mathbf{y}}}%
\global\long\def\z{{\mathbf{z}}}%
\global\long\def\v{{\mathbf{v}}}%
\global\long\def\s{{\mathbf{s}}}%
\global\long\def\rb{{\mathbf{r}}}%
\global\long\def\mat#1{{\ensuremath{\bm{\mathrm{#1}}}}}%
\global\long\def\vtheta{\mat{\theta}}%
\global\long\def\matA{\ensuremath{{\bm{\mathrm{A}}}}}%
\global\long\def\matB{\ensuremath{{\bm{\mathrm{B}}}}}%
\global\long\def\matC{\ensuremath{{\bm{\mathrm{C}}}}}%
\global\long\def\matD{\ensuremath{{\bm{\mathrm{D}}}}}%
\global\long\def\matP{\ensuremath{{\bm{\mathrm{P}}}}}%
\global\long\def\matU{\ensuremath{{\bm{\mathrm{U}}}}}%
\global\long\def\matV{\ensuremath{{\bm{\mathrm{V}}}}}%
\global\long\def\matM{\ensuremath{{\bm{\mathrm{M}}}}}%
\global\long\def\matW{\mat W}%
\global\long\def\matX{\mat X}%
\global\long\def\matI{\mat I}%
\global\long\def\matZ{\mat Z}%
\global\long\def\TNormS#1{\|#1\|_{2}^{2}}%
\global\long\def\TNorm#1{\|#1\|_{2}}%
\global\long\def\T{\textsc{T}}%
\global\long\def\range#1{{\bf range}\left(#1\right)}%
\title{On the Role of Initialization on the Implicit Bias in Deep Linear Networks}
\author{
	Oria Gruber \\ Tel Aviv University\\ \texttt{oriagruber@gmail.com}
	\and
	Haim Avron\\ Tel Aviv University\\ \texttt{haimav@tauex.tau.ac.il}
}
\begin{document}

\maketitle

\begin{abstract}
Despite Deep Learning's (DL) empirical success, our theoretical understanding of its efficacy remains limited. One notable paradox is that while conventional wisdom discourages perfect data fitting, deep neural networks are designed to do just that, yet they generalize effectively. This study focuses on exploring this phenomenon attributed to the implicit bias at play. Various sources of implicit bias have been identified, such as step size, weight initialization, optimization algorithm, and number of parameters.
In this work, we focus on investigating the implicit bias originating from weight initialization. To this end, we examine the problem of solving underdetermined linear systems in various contexts, scrutinizing the impact of initialization on the implicit regularization when using deep networks to solve such systems. Our findings elucidate the role of initialization in the optimization and generalization paradoxes, contributing to a more comprehensive understanding of DL's performance characteristics.

%The empirical success of deep learning in recent years is at a disconnect with our lack of deep theoretical understanding of that subject matter. An example of this disconnect, and the one that motivated this study is as follows. While common wisdom is that we should never interpolate or fit the data perfectly, deep neural networks are trained to do just that, and evidently, they generalize well. This must be true due to some implicit bias at play.%changed because abstract limited to 1 paragraph

%In this work, our main interest is in studying implicit bias stemming from weight initialization. To do so, we consider the problem of solving an underdetermined linear system of equations in several different ways, and consider the impact of initialization on the implicit regularization of these methods.
\end{abstract}

\section{Introduction}

Deep Learning (DL) has revolutionized many fields and is poised to radically transform the modern world. DL is quickly becoming the best practice for many computer vision problems in commerce, finance, medicine, entertainment, and many more fields that shape our daily lives. This explosion in popularity in recent years, both in academia and in industry, is due to its practical success. Unfortunately, our understanding of DL and why it is so successful is lagging far behind. Simply put, we do not have satisfactory explanations for why it performs so well, or why it is even possible to optimize such non-convex models. These are two key examples of unanswered questions on this subject matter, among many others. 

When studying neural networks, it is tempting to consider underdetermined linear systems as an exploratory model, as it retains many key characteristics that make Deep Neural Networks difficult to analyze (non-convexity, overparameterization) while also having the advantages of being a simple linear model (and thus theorems from linear algebra easily apply). Furthermore, there is a growing consensus that wide neural networks are approximately linear and operate in the so-called lazy regime \citep{liu2022transition}. Highly overparameterized models that operate in the lazy regime are approximately Gaussian Processes, and thus can be linked to kernel methods, which are linear in the weights (but not the input). \citet{https://doi.org/10.48550/arxiv.1711.00165} have established the link between wide networks and kernel machines. Thus, one can expect observations on linear models to apply, at least approximately, on non-linear deep networks, which further motivates us to explore overparametrized linear models and kernel methods.

A linear model attempts to find a weight vector $\y$ such that given an example-by-feature matrix $\matA$ (or some non-linear transformation of an original example-by-feature matrix, in the case of kernel machines) and a target vector $\b$, we have $\matA\y=\b$. In other words, a system of linear equations.
In this work, we consider the problem of solving underdetermined systems of linear equations through the lens of DL, and specifically through fully connected linear neural networks.

A fully connected neural network is a model where we are given some example-by-feature matrix $\matA$ and ground truth vector $\b$, and our goal is to find weights $\matW_1, \matW_2, \dots, \matW_h, \x$ such that
\begin{equation*}
    L_{\matA, \b}(\matW_1, \matW_2, \dots, \matW_h, \x) = \frac{1}{2}\TNormS{\sigma(\sigma(\dots(\sigma(\matA \matW_1)\matW_2)\dots\matW_h)\x) - \b} 
\end{equation*} is minimized, where $\sigma$ is some activation function. Common choices are $\text{ReLU}(x) = \text{max}(0, x)$ or $\text{Sigmoid}(x) = \frac{1}{1+e^{-x}}$. The success of these models in real life scenarios can not be overstated.

However, as previously stated, there are several open questions about this framework that require answers. Before we mention the most interesting questions and the connection to this work, a few key insights are given:
\begin{enumerate}
    \item Deep networks are highly over-parametrized. Many possible minimizers, some better, some worse.
    \item If $h > 1$ then $L_{\matA, \b}$ is non-convex, even if $\sigma(z)=z$
\end{enumerate}
These properties alongside the empirical success of of deep networks go against  prevailing common wisdoms in machine learning and statistical inference, that over-parametrized models tend to overfit, and that minimizing a non convex objective is difficult. However, when we consider the success of DL, this intuition seems incorrect. Deep networks are highly overparameterized, often having tens of billions of parameters, but, surprisingly, they often predict well (the \emph{generalization paradox}). Deep networks are optimized by minimizing a non-convex function, yet, often a minimizer is found quickly (the \emph{optimization paradox}). These two paradoxes have suscited great interest and a vast literature that explores them. 

Much like this work, contemporary research frequently focuses explicitly on linear networks since it is an excellent model problem to comprehend for the reasons described above. In a linear model, the activation function $\sigma$ is simply $\sigma(x)=x$, and so \begin{equation*}
    L_{\matA, \b}(\matW_1, \matW_2, \dots, \matW_h, \x) = \frac{1}{2}\TNormS{\matA \matW_1 \matW_2 \dots \matW_h\x - \b}
\end{equation*}
These linear models seem useless at first glance, as composition of linear functions is still linear.  However, from an optimization perspective, they can behave quite differently. Indeed, the objective function is non-convex if $h > 1$ with many possible saddle points, including a trivial one $(\forall i: \matW_i = 0)$. The trivial saddle point $(\forall i: \matW_i = 0)$ proves non-convexity for $h>1$ and highlights a key difference between the naive linear model and a deep linear model. Interestingly, this model can have advantages over the shallow model in certain scenarios.
For example, \citet{10.1093/imaiai/iaaa039} 
 show that optimizing a deep linear network is equivalent to Riemannian gradient flow on a manifold of low-rank matrices, with a suitable Riemannian metric. They show that this Riemannian optimization converges to a global optimum of $L^1$ loss with very high probability (given the rank constraint), and when the depth of the linear network is two, then with very high probability it minimizes $L^2$ loss.

In this work, we attempt to shed light on interesting properties that arise from initialization in several different linear network scenarios. To this end, we study ordinary linear regression in an overparameterized setting. We prove a condition for convergence to optimal solution with respect to the Euclidean norm (which is in line with the study in \citet{bartlett2020benign}), provide an expression for the converged solution as a function of the initial gradient descent guess, and outline an algorithm that is capable of controlling to which solution gradient descent will converge (see Section \ref{subsec:deep-linear-networks}).
The reason we focus specifically on initialization is due to both industry experience that this seemingly innocent choice can have a drastic effect on generalization and theoretical results that used specific initialization schemes \citep{https://doi.org/10.48550/arxiv.2001.05992, Belkin15849, bartlett2020benign}.

Next, we prove similar results for an overparameterized linear model that has a single hidden layer. We show that it is possible to find a point where the solution gradient descent converges to is optimal, as well as having every weight be optimal with respect to the other weights. We provide algorithms that take advantage of this optimality to reduce the dimensionality of the problem (see Section \ref{sec:role-initialization-one}).
We then proceed to studying deep linear networks, proving a condition for when gradient descent converges to optimum, providing an argument for why a balanced optimum point is unlikely to be found using our method, with more than two hidden layers. We then study the stability of deep linear networks and prove properties regarding weight norms (Section \ref{sec:role-initialization-deep}).
Finally, we provide motivation and explanation for linear Riemannian models, study properties of such models, and perform experiments that emphasize the interesting traits of such models (these results are reported in Section \ref{sec:riemannian}).

Our results hint at the importance of initialization when designing deep learning solutions. Proper initialization of learning can be treated as another component controlled by a practitioner. By carefully selecting initialization, the practitioner can bias the result towards desired outcomes (e.g. perhaps using data specific initialization), reduce number of parameters, and accelerate convergence. However, additional research is required to determine how the aforementioned ideas can be translated into practice. 

\subsection{Related Work}
\label{subsec:related-work}

Our work is closely related to, and inspired by, Bartlett et al.'s work on benign overfitting in a shallow, ordinary linear regression setting \citep{bartlett2020benign}. It is shown there that in some cases, depending on the dimensionality of the problem and the spectrum of the noise covariance, the minimum norm interpolating solution generalizes well, in stark contradiction to common wisdom that says we should never interpolate, certainly in noisy settings. In this work, we focus on investigating the implicit bias that arises from initialization in deep linear networks, and it turns out that it is possible to easily bias the solution towards the minimum norm interpolant. 

Our work is also related to the work of \citet{Belkin15849} which attempts to explain the disconnect between classical theory and the success of interpolating overparameterized solutions in practice, by suggesting a single unified "double descent" performance curve. We also investigate performance curves of interpolating overparametricized solutions but in a strictly linear setting. This is in contrast to \citet{Belkin15849} which considered random Fourier features, which are non-linear in the input.

Related is also a series of papers by \citet{arora2018optimization, NEURIPS2019_c0c783b5} on the effects of depth on generalization, optimization, and specifically on weight norms. They suggest that contrary to common wisdom, overparameterization accelerates the convergence of optimization, which is something we also explore, even managing to collapse a deep model into a shallow model, similarly to \citet{ablin2020deep}
which shows that deep orthogonal linear networks are shallow. Furthermore, \citet{ NEURIPS2019_c0c783b5} explores the effects of overparameterization via depth on which type of solution we converge to in matrix factorization, which is a continuation of the foundation that \citet{NIPS2017_58191d2a} laid out, and whether we are implicitly biased to minimize norm (and which type of norm), or rank. 

Our work goes somewhat against \citet{NEURIPS2020_f21e255f} and focuses exclusively on norms. That work has shown that in a matrix completion setting (which is different from our setting), there are natural problems where we can implicitly bias towards a solution that generalizes well, but that bias is not towards minimum norm, but rather it minimizes rank, even at the cost of pushing the nuclear norm towards infinity. This is another notion of implicit regularization, which does not apply in our setting (since we are dealing with linear regression where the solution is a one-dimensional column vector), and we do not focus on it at all.

\section{Preliminaries}

\subsection{Notation}
We denote scalars using Greek letters or $x, y, \dots$. Vectors are denoted by $\x,\y, \dots$ and matrices by $\matA, \matB, \dots$. The $s \times s$ identity matrix is denoted by $\matI_s$. If $\matA$ is a $n \times d$ matrix then it has a singular value decomposition $\matA = \matU \mat\Sigma \matV^\T$ where $\matU \in \R^{n \times n}$ is orthogonal, $\mat\Sigma \in \R^{n \times d}$ is rectangular diagonal and $\matV \in \R^{d \times d}$ orthogonal. For simplicity, we differentiate between column vectors and row vectors explicitly. A $s$ dimensional row vector is denoted as being in $\R^{1 \times s}$, and a $s$ dimensional column vector is denoted as being in $\R^{s \times 1}$. If $\x$ is a vector of any shape or dimension, we use $\TNorm{\x}$ for the Euclidean norm. If $\matX$ is a matrix of any shape or dimension, we use $\|\matX\|$ for the operator (spectral) norm and $\|{\matX}\|_F$ to mean the Frobenius norm $(\|{\matX}\|_F = \sqrt{\text{trace}(\matX\matX^\T)})$.
The Moore-Penrose pseudoinverse of a matrix $\matM$ with is denoted by $\matM^{+}$. If the columns of $\matM$ are independent, it is equal to $\matM^+ = (\matM^\T \matM)^{-1}\matM^\T$. 

When solving a linear system of equations, $\matA \in \R^{n \times d}$ is our coefficient matrix where we assume $d>n$, and $\text{rank}(\matA) = n$ unless otherwise stated. We denote by $\b \neq 0 \in \R^{n \times 1}$ the target vector. We denote by $\alpha > 0$ the step size or the learning rate in the gradient descent iteration. We define $\vtheta^\star$ to be the minimum norm solution
\begin{equation*}
    \vtheta^\star := \operatorname*{arg\,min}_{\matA \x=\b} \TNorm{\x} = \matA^\T(\matA \matA^\T)^{-1}\b
\end{equation*}

If $f$ is a function of two or more variables, we will explicitly write $\nabla_\x f$ to refer to the gradient with respect to the variable $\x$, and so on.  In deep models with hidden weights, all hidden layer weights are assumed to be $d \times d$ unless otherwise stated. The subscripts will be used to denote gradient descent iterations, so $\matW_k$ is the weight matrix $\matW$ in iteration $k$, and we denote $\matW_{\infty} := \lim_{k \to \infty} \matW_k$ if such a limit exists. In Section \ref{sec:role-initialization-deep} we make a slight change of notation where $\matW_{i}^{(k)}$ stands for the value of the matrix $\matW_i$ in iteration $k$, and $\matW_{i}^{(\infty)} = \lim_{k \to \infty} \matW_{i}^{(k)}$ 

\subsection{Solving Underdetermined Least Squares using Gradient Descent}
\label{subsec:solving-underdetermined}

In this section we consider the classical task of finding a single vector $\y \in \R^{d \times 1}$ such that $\matA\y = \b$, where we assume this is accomplished by defining a loss function \begin{equation*}
L_{\matA,\b}(\y) = \frac{1}{2}\TNormS{\matA\y-\b}
\end{equation*} and applying gradient descent with fixed step size $\alpha > 0$. That is, an initial guess $\y_0$ is picked, and then in each iteration the algorithm moves in the direction directly opposite to the gradient \begin{equation*}
    \nabla L_{\matA,\b}(\y) = \matA^\T(\matA\y-\b)
\end{equation*} with fixed step size $\alpha$. Thus, the iteration is \begin{equation*}
    \y_{k+1} = \y_k - \alpha \matA^\T(\matA\y_k-\b)
\end{equation*}

In many applications, it can occur that there is a single minimizer. However, since we are dealing with underdetermined systems ($d > n$) and assume $\text{rank}(\matA) = n$, there is an infinite set of solutions: $\vtheta = \vtheta^\star + \z$ where $\z$ is any vector in $\text{ker}(\matA)$.

Suppose $\matA, \b$ were sampled from some population of features and targets for a problem for which we wish to build a predictive model. Common statistical wisdom is that complex prediction rules are inferior to simple ones. In this context, simplicity can refer to the solution vector's norm. Hence, not all solutions to $\matA \y = \b$ will be as useful for predictive purposes. Our goal and objective in many scenarios is to find $\vtheta^\star$.

If the iteration starts from an arbitrary $\y_0$, it will converge to an arbitrary solution, and we can expect poor generalization unless some explicit regularization is used. However, we now show that if we initialize smartly, we can ensure convergence to $\vtheta^\star$, or any other predetermined solution.

The following lemma, which shows that the minimum norm solution $\vtheta^\star$ is the {\em only} solution in row-space of $\matA$, is a fundamental and classic result on underdetermined linear systems, which we make extensive use of throughout this work. We stress at the outset that this lemma is true regardless of the optimization algorithm used to reach a solution of the system.
\begin{Lemma}\label{lem:only-sol-in-rowspace}
If $\matA \y=\b$ and $\y \in \range{\matA^\T}$ then $\y = \vtheta^\star$.
\end{Lemma}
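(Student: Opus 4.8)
The plan is to prove this by direct substitution, exploiting the single structural fact that $\matA$ has full row rank $n$, so that the $n\times n$ Gram matrix $\matA\matA^\T$ is symmetric positive definite and hence invertible. The membership hypothesis $\y\in\range{\matA^\T}$ is exactly what is needed to convert the underdetermined system into a uniquely solvable one.

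Concretely, I would first write $\y = \matA^\T\z$ for some $\z\in\R^{n\times 1}$, which is the meaning of $\y\in\range{\matA^\T}$. Substituting into the equation $\matA\y=\b$ gives $\matA\matA^\T\z = \b$. Since $\rank{\matA}=n$, the matrix $\matA\matA^\T$ is invertible, so $\z$ is forced to be $\z = (\matA\matA^\T)^{-1}\b$. Back-substituting, $\y = \matA^\T\z = \matA^\T(\matA\matA^\T)^{-1}\b$, which is precisely the definition of $\vtheta^\star$ given in the Preliminaries. That closes the argument.

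An equivalent, more conceptual route is to observe that the solution set of $\matA\y=\b$ is the affine space $\vtheta^\star + \text{ker}(\matA)$, that $\vtheta^\star$ as defined already lies in $\range{\matA^\T}$, and that $\text{ker}(\matA) = \range{\matA^\T}^{\perp}$, so $\range{\matA^\T}$ intersects the solution set in the unique point $\vtheta^\star$; any $\y$ meeting both hypotheses must coincide with it. I do not anticipate a real obstacle here — the only point needing a word of justification is the invertibility of $\matA\matA^\T$, which is immediate from the standing full-row-rank assumption — so I would present the first (substitution) route as the clean proof.
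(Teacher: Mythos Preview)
Your proof is correct and is essentially identical to the paper's: both write $\y=\matA^\T\v$, substitute to get $\matA\matA^\T\v=\b$, and use invertibility of $\matA\matA^\T$ (from the full-row-rank assumption) to recover $\y=\matA^\T(\matA\matA^\T)^{-1}\b=\vtheta^\star$. The only cosmetic difference is that the paper multiplies $\matA\matA^\T\v=\b$ on the left by $\matA^\T(\matA\matA^\T)^{-1}$ in one step rather than first isolating $\v$ and then back-substituting.
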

\begin{proof}
Since $\y \in \range{\matA^\T}$ there exists a $\v \in \R^{n \times 1}$ such that $\y = \matA^\T \v$. So $\matA \matA^\T\v=\b$. Multiply the last equation on the left by $\matA^\T(\matA\matA^\T)^{-1}$ to get $\y = \vtheta^\star$.
\end{proof}

The next lemma is specific to gradient descent. It shows that being in the row-space of $\matA$ is conserved throughout gradient descent iterations. 
\begin{Lemma}\label{lem:gd-stays}
If $\y_k \in \range{\matA^\T}$ for some $k$ then $\y_{k+1} \in \range{\matA^\T}$.
\end{Lemma}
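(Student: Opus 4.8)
The plan is to use nothing more than the algebraic form of the gradient descent update together with the fact that $\range{\matA^\T}$ is a linear subspace. Recall that the iteration is $\y_{k+1} = \y_k - \alpha \matA^\T(\matA\y_k - \b)$. The first observation is that the increment $-\alpha\matA^\T(\matA\y_k - \b)$ is, by inspection, of the form $\matA^\T \w$ for the vector $\w = -\alpha(\matA\y_k - \b) \in \R^{n\times 1}$, hence it lies in $\range{\matA^\T}$ no matter what $\y_k$ is. The second observation is that, by hypothesis, $\y_k \in \range{\matA^\T}$ as well.

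From here I would simply invoke closure of a subspace under addition: $\y_{k+1}$ is the sum of two elements of $\range{\matA^\T}$, so it is again in $\range{\matA^\T}$. If one prefers an explicit witness, write $\y_k = \matA^\T \v$ with $\v \in \R^{n\times 1}$ (which exists by assumption), and then $\y_{k+1} = \matA^\T\v - \alpha\matA^\T(\matA\y_k - \b) = \matA^\T\bigl(\v - \alpha(\matA\y_k - \b)\bigr)$, exhibiting $\y_{k+1}$ as $\matA^\T$ applied to a concrete vector in $\R^{n\times 1}$.

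There is essentially no obstacle here; the lemma is immediate once the update rule is written out, and the only thing to be careful about is bookkeeping of dimensions (the residual $\matA\y_k - \b$ lives in $\R^{n\times 1}$, so multiplying by $\matA^\T$ lands back in $\R^{d\times 1}$). The reason this trivial-looking statement is worth isolating is its role downstream: combined with Lemma~\ref{lem:only-sol-in-rowspace} and an induction on $k$ starting from $\y_0 \in \range{\matA^\T}$, it will force any limit point of the iteration that solves $\matA\y = \b$ to equal $\vtheta^\star$. So in the write-up I would keep the proof to one or two lines and let the applications do the heavy lifting.
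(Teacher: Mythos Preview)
Your proof is correct and essentially identical to the paper's: both write $\y_k = \matA^\T \v$ and factor the update as $\y_{k+1} = \matA^\T\bigl(\v - \alpha(\matA\y_k - \b)\bigr)$ (the paper additionally substitutes $\matA\y_k = \matA\matA^\T\v$ inside the residual, which is cosmetic). Nothing to change.
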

\begin{proof}
Since $\y_k \in \range{\matA^\T}$ there exists a $\v_k \in \R^{n \times 1}$ such that $\y_k = \matA^\T \v_k$. Then \begin{equation*}
    \y_{k+1} = \y_k - \alpha \matA^\T(\matA\y_k-\b) = \matA^\T(\v_k-\alpha (\matA\matA^\T\v_k-\b)) \in \range{\matA^\T}.
\end{equation*}
\end{proof}

An important consequence of Lemmas \ref{lem:only-sol-in-rowspace} and \ref{lem:gd-stays}, and the fact that $L_{\matA,\b}$ is convex, is the following corollary, which demonstrates a key aspect of our work: if we initialize in an intelligent way, we are guaranteed to converge to the optimal solution with respect to $l_2$ norm.
\begin{corollary}
\label{cor:3}
If $\y_0 \in \range{\matA^\T}$ and $\alpha>0$ is such that the iteration converges to a stationary point, then $\y_\infty = \vtheta^\star$. A trivial choice for $\y_0$ that ensures that $\y_0 \in \range{\matA^\T}$ is $\y_0 = 0$.
\end{corollary}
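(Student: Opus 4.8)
The plan is to chain together the two preceding lemmas with a simple limiting argument. First I would establish, by induction on $k$, that $\y_k \in \range{\matA^\T}$ for every $k \geq 0$: the base case is the hypothesis $\y_0 \in \range{\matA^\T}$, and the inductive step is exactly Lemma~\ref{lem:gd-stays}. Since $\range{\matA^\T}$ is a linear subspace of $\R^{d \times 1}$, hence closed, and the sequence $(\y_k)$ converges to $\y_\infty$ by assumption, I conclude $\y_\infty \in \range{\matA^\T}$ as well.

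Next I would argue that $\y_\infty$ is actually a solution of the system $\matA\y = \b$, not merely a stationary point of $L_{\matA,\b}$. A stationary point satisfies $\nabla L_{\matA,\b}(\y_\infty) = \matA^\T(\matA\y_\infty - \b) = 0$. Because $\matA$ has rank $n$, the matrix $\matA^\T \in \R^{d \times n}$ has trivial kernel, so $\matA^\T \v = 0$ forces $\v = 0$; applying this to $\v = \matA\y_\infty - \b$ gives $\matA\y_\infty = \b$. (Equivalently, one can invoke convexity of $L_{\matA,\b}$: a stationary point of a convex function is a global minimizer, and since the underdetermined system is solvable the global minimum value is $0$, so $\TNorm{\matA\y_\infty - \b} = 0$.)

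Now $\y_\infty$ satisfies both $\matA\y_\infty = \b$ and $\y_\infty \in \range{\matA^\T}$, so Lemma~\ref{lem:only-sol-in-rowspace} applies verbatim and yields $\y_\infty = \vtheta^\star$. Finally, for the trivial initialization, $\y_0 = 0 = \matA^\T \mat{0} \in \range{\matA^\T}$, so the hypothesis of the corollary is met. I do not expect any real obstacle here; the only points requiring a word of care are the passage to the limit (needing closedness of the subspace, which is immediate since it is a finite-dimensional linear subspace) and the step upgrading ``stationary point'' to ``exact solution'', which is where the full-rank assumption on $\matA$ (or, alternatively, convexity together with solvability) is used.
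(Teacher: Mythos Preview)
Your proposal is correct and follows essentially the same route as the paper: induct via Lemma~\ref{lem:gd-stays}, pass to the limit, upgrade ``stationary'' to ``solution'' using the full-rank assumption on $\matA$, and conclude with Lemma~\ref{lem:only-sol-in-rowspace}. The only cosmetic difference is in the limiting step: the paper writes $\y_k = \matA^\T\v_k$, shows $\v_k = (\matA^\T)^+\y_k$ converges, and deduces $\y_\infty = \matA^\T\v_\infty$, whereas you invoke closedness of the finite-dimensional subspace $\range{\matA^\T}$ directly---your version is slightly cleaner but not substantively different.
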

\begin{proof}
Applying Lemma \ref{lem:gd-stays} in an inductive manner, we see that for all $k$ there exists a $\v_k$ such that $\y_k = \matA^\T\v_k$. We also assumed convergence, so we know that $\y_\infty$ exists and is equal to some stationary point $\y$. Note that since $\matA^\T$ has full rank, a stationary point must be a solution to $\matA \x = \b$.
Since $\matA^\T$ has full column rank, we also have $\v_k = (\matA^\T)^+\y_k$, so $\v_\infty$ also exists. So, \begin{equation*} \y = 
    \y_\infty = \lim_{k \to \infty} \matA^\T \v_k = \matA^\T \v_\infty \in \range{\matA^\T}
\end{equation*}
To sum up, $\y$ is a solution and $\y \in \range{\matA^\T}$. Now apply Lemma \ref{lem:only-sol-in-rowspace} to get $\y = \vtheta^\star$
\end{proof}

Though the last results are almost trivial, we mention them nonetheless as they lead to the central theme of this work: smart choices for initializations will bias us towards better solutions, and it is possible to determine properties of solutions we converge to simply by initializing in line with what we want to achieve. Notice that we have not added explicit regularization; this is not ridge regression. Using only a clever initialization, we have biased our solution to tend towards the minimal norm solution. 

Furthermore, in this simple 0-depth case, it is possible to control exactly which solution the iteration will converge to. Theorem \ref{thm:control-zero-depth} proves this is possible and provides an algorithm to do so. This is another example of specific initializations yielding desirable solutions.
\begin{theorem}\label{thm:control-zero-depth}
Suppose that $\matA\in\R^{n \times d}$ is a matrix of any rank, where $d \geq n$, and let
$\matA = \matU\mat\Sigma \matV^\T$ be a singular value decomposition of $\matA$. 
Let $\matV_1$ be the first $n$ columns of $\matV$, and $\matV_2$ the remaining columns. If $\|\matA\|^2 < \frac{2}{\alpha}$ then for any given initial guess $\y_0$ we see that $\y_\infty$ exists and $$\y_\infty = \matV_2\matV_2^\T\y_0+\vtheta^\star$$
\end{theorem}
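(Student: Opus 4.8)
The plan is to diagonalize the gradient-descent recursion with the singular value decomposition, which converts it into a family of decoupled scalar affine recursions that are solved in closed form. Introduce the coordinates $\c_k := \matV^\T\y_k\in\R^{d\times 1}$. Substituting $\matA=\matU\mat\Sigma\matV^\T$ into $\y_{k+1}=\y_k-\alpha\matA^\T(\matA\y_k-\b)$ and using $\matV^\T\matV=\matI_d$, $\matU^\T\matU=\matI_n$, one gets $\c_{k+1}=(\matI_d-\alpha\mat\Sigma^\T\mat\Sigma)\c_k+\alpha\mat\Sigma^\T\matU^\T\b$. Since $\mat\Sigma^\T\mat\Sigma$ is diagonal with entries $\sigma_1^2,\dots,\sigma_n^2,0,\dots,0$, the $i$-th coordinate evolves on its own: for $\sigma_i>0$ it obeys $c_{k+1}^{(i)}=(1-\alpha\sigma_i^2)c_k^{(i)}+\alpha\sigma_i(\matU^\T\b)_i$, and for $\sigma_i=0$ it is frozen, $c_k^{(i)}=c_0^{(i)}$ for all $k$.

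Next I would resolve each scalar recursion. An affine recursion $c_{k+1}=\lambda c_k+\mu$ with $|\lambda|<1$ converges to $\mu/(1-\lambda)$ at rate $|\lambda|^k$ from any starting value. Here $\lambda=1-\alpha\sigma_i^2$, and $|\lambda|<1$ is equivalent to $0<\sigma_i^2<2/\alpha$; the hypothesis $\|\matA\|^2<2/\alpha$ gives $\sigma_i^2\le\|\matA\|^2<2/\alpha$ for every $i$, so every active coordinate ($\sigma_i>0$) converges to $(\matU^\T\b)_i/\sigma_i$ while the frozen coordinates keep $c_0^{(i)}$. Hence $\y_\infty=\matV\c_\infty$ exists, with $\c_\infty$ known explicitly.

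It then remains to recognize $\matV\c_\infty$ as $\matV_2\matV_2^\T\y_0+\vtheta^\star$. Split $\c_\infty$ into its first $n$ entries and its last $d-n$ entries. Under the standing assumption $\rank{\matA}=n$ all of $\sigma_1,\dots,\sigma_n$ are positive, so the last $d-n$ entries are exactly the frozen ones and equal the last $d-n$ entries of $\c_0=\matV^\T\y_0$, i.e. $\matV_2^\T\y_0$; their contribution to $\matV\c_\infty$ is therefore $\matV_2\matV_2^\T\y_0$. The first $n$ entries form the vector with components $(\matU^\T\b)_i/\sigma_i$, and $\matV_1$ times that vector is precisely $\matV\mat\Sigma^{+}\matU^\T\b=\matA^{+}\b$, which equals $\vtheta^\star=\matA^\T(\matA\matA^\T)^{-1}\b$ (plug in the SVD and simplify, using $d\ge n$ and full row rank). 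Adding the two contributions proves the theorem.

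I expect the only delicate points to be the notational bookkeeping with the rectangular $\mat\Sigma$ (and the identity $\matV\mat\Sigma^{+}\matU^\T\b=\vtheta^\star$), and, if the statement is read for genuinely arbitrary rank as the wording suggests, correctly matching the non-contracting eigendirections of $\matI_d-\alpha\matA^\T\matA$ with the block $\matV_2$. A coordinate-free alternative that sidesteps the latter is to unroll the iteration as $\y_k=(\matI_d-\alpha\matA^\T\matA)^k\y_0+\alpha\sum_{j=0}^{k-1}(\matI_d-\alpha\matA^\T\matA)^j\matA^\T\b$, observe that under $\|\matA\|^2<2/\alpha$ the powers $(\matI_d-\alpha\matA^\T\matA)^k$ converge to the orthogonal projector onto $\text{ker}(\matA)$, which is $\matV_2\matV_2^\T$, and note that $\matA^\T\b\in\range{\matA^\T}$, the complementary contracting subspace, so the series sums to $(\matA^\T\matA)^{+}\matA^\T\b=\matA^{+}\b=\vtheta^\star$.
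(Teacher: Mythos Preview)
Your proposal is correct and follows essentially the same route as the paper: change variables to $\matV^\T\y_k$, observe the recursion becomes diagonal in $\mat\Sigma^\T\mat\Sigma$, use $\|\matA\|^2<2/\alpha$ to control the contracting directions, and identify the limit with $\matV_2\matV_2^\T\y_0+\vtheta^\star$. The paper unrolls the iteration and sums the resulting Neumann series in block-matrix form, whereas you solve each scalar affine recursion directly; these are cosmetic variants of the same argument, and your ``coordinate-free alternative'' is in fact exactly the paper's computation. Your remark that the identification of the first $n$ limiting coordinates with $\vtheta^\star$ tacitly uses $\rank{\matA}=n$ (despite the statement's ``any rank'' wording) is well taken---the paper's proof makes the same implicit assumption when it inverts $\tilde{\mat\Sigma}$.
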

\begin{proof}
Write 
$$\mat\Sigma = \begin{bmatrix}\tilde{\mat\Sigma} & 0_{n \times(d-n)}\end{bmatrix}$$ 
where $\tilde{\mat\Sigma} \in \R^{n \times n}$ is diagonal. 
Notice that $\matA = \matU\Tilde{\mat\Sigma}\matV_1^\T$ and $\vtheta^\star = \matV_1 \Tilde{\mat\Sigma}^{-1}\matU^\T\b$. At step $k$ of gradient descent we have 
\begin{align*}
    \y_k &= \y_{k-1}-\alpha \matA^\T(\matA\y_{k-1}-\b) \\
         &= (\matI_d - \alpha\matA^\T\matA)\y_{k-1}+\alpha \matA^\T\b \\
         &= \dots\\
         &= (\matI_d - \alpha \matA^\T\matA)^k\y_0+\alpha \sum_{j=0}^{k-1}(\matI_d - \alpha \matA^\T\matA)^j\matA^\T\b\\
         &= (\matV(\matI_d-\alpha \mat\Sigma ^\T \mat\Sigma)\matV^\T)^k\y_0+\alpha \sum_{j=0}^{k-1}(\matV(\matI_d-\alpha \mat\Sigma^\T\mat\Sigma)\matV^\T)^j\matV\mat\Sigma^\T\matU^\T\b
\end{align*}
Since $\matI_d - \alpha \mat\Sigma^\T \mat\Sigma$ is diagonal and $\matV$ is orthogonal, the last equation simplifies to 
$$\y_k = \matV(\matI_d-\alpha \mat\Sigma ^\T \mat\Sigma)^k\matV^\T\y_0+\alpha \sum_{j=0}^{k-1}\matV(\matI_d-\alpha \mat\Sigma^\T\mat\Sigma)^j\mat\Sigma^\T\matU^\T\b.$$
Denote $\z_k = \matV^\T\y_k$, and multiply the last equation by $\matV^\T$ on the left to get
\begin{align*}
\z_k &=  (\matI_d-\alpha \mat\Sigma ^\T \mat\Sigma)^k\z_0+\alpha \sum_{j=0}^{k-1}(\matI_d-\alpha \mat\Sigma^\T\mat\Sigma)^j\mat\Sigma^\T\matU^\T\b\\
&= \begin{bmatrix}(\matI_n-\alpha \Tilde{\mat\Sigma}^2)^k & 0_{n \times (d-n)}\\ 0_{(d-n) \times n} & \matI_{d-n}\end{bmatrix}\z_0+\alpha \sum_{j=0}^{k-1}\begin{bmatrix}(\matI_n-\alpha \Tilde{\mat\Sigma}^2)^j\Tilde{\mat\Sigma} \\ 0_{(d-n) \times n}\end{bmatrix}\matU^\T\b
\end{align*}
The condition on  $\alpha$ ensures that all eigenvalues of $\matI_n-\alpha \Tilde{\mat\Sigma}^2$ have absolute value strictly smaller than 1, which is a sufficient condition for $\lim_{k \to \infty}(\matI_n-\alpha \Tilde{\mat\Sigma}^2)^k = 0$, which is in turn equivalent to the convergence of the Neumann series $\sum_{j=0}^{\infty}(\matI_n-\alpha \Tilde{\mat\Sigma}^2)^j$ to $(\alpha \Tilde{\mat\Sigma}^2)^{-1}$, so we have:
\begin{align*}
    \lim_{k\to\infty}\z_k &= \begin{bmatrix}0_{n} & 0_{n \times (d-n)}\\ 0_{(d-n) \times n} & \matI_{d-n}\end{bmatrix}\z_0+\alpha \sum_{j=0}^{\infty}\begin{bmatrix}(\matI_n-\alpha \Tilde{\mat\Sigma}^2)^j\Tilde{\mat\Sigma} \\ 0_{(d-n) \times n}\end{bmatrix}\matU^\T\b \\
    &= \begin{bmatrix}0_{n} & 0_{n \times (d-n)}\\ 0_{(d-n) \times n} & \matI_{d-n}\end{bmatrix}\z_0+\begin{bmatrix}\alpha(\alpha \Tilde{\mat\Sigma}^2)^{-1}\Tilde{\mat\Sigma} \\ 0_{(d-n) \times n}\end{bmatrix}\matU^\T\b \\ & =\begin{bmatrix}0_{n} & 0_{n \times (d-n)}\\ 0_{(d-n) \times n} & \matI_{d-n}\end{bmatrix}\z_0+\begin{bmatrix}\Tilde{\mat\Sigma}^{-1} \\ 0_{(d-n) \times n}\end{bmatrix}\matU^\T\b
\end{align*}
Since $\y_k = \matV \z_k$ we have
\begin{align*}
    \lim_{k\to\infty} \y_k &= \matV (\lim_{k\to\infty} \z_k) \\
    &= \begin{bmatrix}\matV_1 & \matV_2\end{bmatrix}\begin{bmatrix}0_{n} & 0_{n \times (d-n)}\\ 0_{(d-n) \times n} & \matI_{d-n}\end{bmatrix}\begin{bmatrix}\matV_1^\T \\ \matV_2^\T\end{bmatrix}\y_0 + \begin{bmatrix}\matV_1 & \matV_2\end{bmatrix}\begin{bmatrix}\Tilde{\mat\Sigma}^{-1} \\ 0_{(d-n) \times n}\end{bmatrix}\matU^\T\b \\
    &= \matV_2 \matV^\T_2 \y_0 + \vtheta^\star
\end{align*}
\end{proof}

As a somewhat esoteric use of the last theorem, we can not only tell in advance to which solution the iteration will converge to, we can also control to which solution. Theorem \ref{thm:control-zero-depth} tells us that if we want to reach a solution $\vtheta$, to get a valid initial guess $\y_0$ that will lead to convergence to $\vtheta$, we need to solve the system $\matV_2\matV_2^\T\y_0 = \vtheta - \vtheta^\star$. This is a $d \times d$ system of rank $d-n$. We claim that this system has infinitely many solutions, because both $\vtheta$ and $\vtheta^\star$ are solutions to $\matA\y=\b$, so $\vtheta-\vtheta^\star$ is a solution to $\matA\y=0$, and so the augmented matrix $\begin{bmatrix}
\matV_2\matV_2^\T & \vtheta - \vtheta^\star\end{bmatrix}$ also has rank $d-n$, then the claim follows from the Rouche-Capelli Theorem. 

To convince ourselves that the augmented matrix $\begin{bmatrix}
\matV_2\matV_2^\T & \vtheta - \vtheta^\star\end{bmatrix}$ indeed has rank $d-n$, we first notice that since $\text{rank}(\matX\matX^\T) = \text{rank}(\matX)$ for any real matrix $\matX$, we have $\text{rank}(\matV_2\matV_2^\T) = \text{rank}(\matV_2) = d-n$, since the columns of $\matV_2$ are orthogonal, so they are also independent. Furthermore, notice that
\begin{eqnarray*}
\matA\matV_2\matV_2^\T & = &\matU\begin{bmatrix}\tilde{\mat\Sigma} & 0_{n \times(d-n)}\end{bmatrix}\begin{bmatrix}\matV_1^\T \\ \matV_2^\T\end{bmatrix}\matV_2\matV_2^\T\\ &  = & \matU\begin{bmatrix}\tilde{\mat\Sigma} & 0_{n \times(d-n)}\end{bmatrix}\begin{bmatrix}0_{n \times(d-n)} \\ \matI_{d-n}\end{bmatrix}\matV_2^\T \\ & = & \matU 0_{n \times (d-n)}\matV_2^\T \\ & = & 0
\end{eqnarray*}

That is to say, the columns of $\matV_2\matV_2^\T$ span a $d-n$ dimensional subspace of vectors, where every vector in that subspace is a solution to $\matA\y = 0$. By the rank-nullity theorem, we know that this subspace of homogeneous solutions is $d-n$ dimensional, so $\{\y: \matA\y = 0\} = \text{span}(\matV_2\matV_2^\T)$, but $(\vtheta - \vtheta^\star) \in \{\y: \matA\y = 0\}$ and so it does not add new information to $\matV_2\matV_2^\T$, hence the rank of the augmented matrix $\begin{bmatrix}
\matV_2\matV_2^\T & \vtheta - \vtheta^\star\end{bmatrix}$ is $d-n$.

\begin{algorithm}
\caption{\label{alg:1} Controlled ordinary linear regression.}
\begin{algorithmic}
\State {Inputs: $\matA \in \R^{n \times d}, \b \in \R^{n \times 1}, \alpha \in \R$, $\vtheta \in \R^{d \times 1}$}
\State $\quad\texttt{\_}, \quad\texttt{\_}, \begin{bmatrix}\matV_1^\T \\ \matV_2^\T\end{bmatrix} \gets \texttt{SVD}(\matA)$
\State $\vtheta^\star \gets \matA^\T(\matA\matA^\T)^{-1}\b$
\State $\z_0 \gets \text{arbitrary}$
\For {iteration $k = 0, 1,\dots$ until convergence}
    \State $\z_{k+1} \gets \z_k-\alpha \matV_2\matV_2^\T(\matV_2\matV_2^\T\z_k-(\vtheta-\vtheta^\star))$
\EndFor
\State $\y_0 \gets \z_k$
\For {iteration $k = 0, 1,\dots$ until convergence}
    \State $\y_{k+1} \gets \y_k-\alpha \matA^\T(\matA\y_k-\b)$
\EndFor
\State {output $\y_k$}
\end{algorithmic}
\end{algorithm}

\newpage
\vfill

This possibility of controlling which solution we converge to is summarized in Algorithm \ref{alg:1}. In Figure \ref{fig:2.1} we illustrate it by solving the trivial system $x+y=0$ with two initializations: one is a random point in $\range{\matA^\T}$ and the other was the initial point suggested by Algorithm \ref{alg:1} when the desired solution was $(10, -10)$. We see that when initialized in $\range{\matA^\T}$ the iteration converges to the minimum norm solution $(0, 0)$ and that Algorithm \ref{alg:1} works as intended. 

The results of this subsection are striking, although simple, examples of the importance of initialization and the role it plays on regularization. Although we did not introduce an explicit regularization at any point, since we initialized in a clever way, we can reach the minimum norm solution. This is the central theme of this work. As a consequence of Corollary \ref{cor:3}, we point out that $\y_0 = 0$ is always a good choice for an initial value if we want to converge to $\vtheta^\star$ for depth-$0$ linear networks.

\begin{figure*}
    \centering
    \includegraphics[scale=0.75]{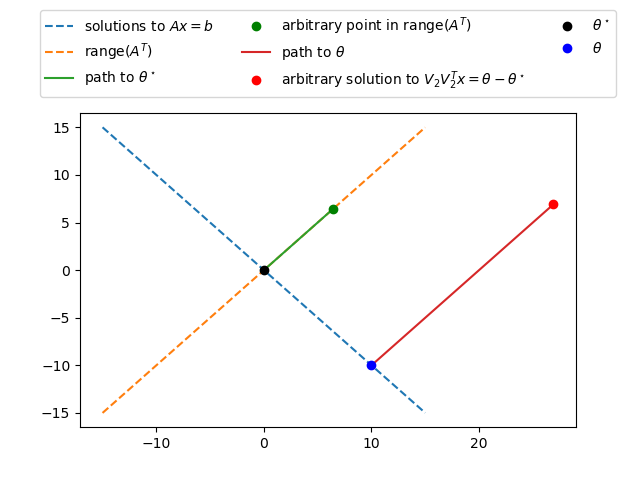}
    \caption{\label{fig:2.1} Illustration of solving $x+y=0$ using two initializations. The desired solution was $(10, -10)$}
\end{figure*}

\subsection{Deep linear networks}
\label{subsec:deep-linear-networks}

A deep linear network of depth $h$ is a machine learning model that attempts to fit $\matA$ to $\b$ by finding $\matW_1, \matW_2, \dots, \matW_h \in \mathbb R^{d \times d}$ and $\x \in \mathbb R^{d \times 1}$ such that
\begin{equation*}
    \TNorm{\matA\matW_1 \matW_2 \dots \matW_h\x - \b}
\end{equation*} is minimized. The minimization is done by gradient descent. The weights $\matW_1, \matW_2, \dots, \matW_h$ are called the hidden layer weights, and the amount of hidden layer weights is defined as the depth of the model. A deep linear network with depth $h$ has $hd^2+d$ trainable weights to optimize. 

In a non-linear network, the bigger $h$ is, the more expressive power the model has. In linear networks, the expressive power remains the same, but there is implicit acceleration at play~\citep{arora2018optimization}, and the model can take a different path from the usual ordinary linear regression. Furthermore, the increase in $h$ over the ordinary linear regression model leads to the inclusion of saddle points (a trivial one is at $\matW_i = 0, \x = 0$),  which makes training more difficult in theory. In the following sections, we explore deep linear networks and how initialization affects their characteristics.

\section{The Role of Initialization in Regularizing One Hidden Layer Linear Networks }
\label{sec:role-initialization-one}

In this section we consider the problem of learning both $\matW$ and $\x$ such that 
\begin{equation*}
    L_{\matA,\b}(\matW,\x) = \frac{1}{2}\TNormS{\matA\matW\x-\b} = \frac{1}{2}\TNormS{\matA\y-\b}
\end{equation*} is minimized, where we define $\y:= \matW\x$. This is equivalent to linear neural network with a single hidden layer. Similar to the previous section, now the gradients are \begin{eqnarray*}
    \nabla_\x L_{\matA,\b}(\matW, \x) & = & \matW^\T\matA^\T(\matA\matW\x-\b)\\ \nabla_\matW L_{\matA,\b}(\matW, \x) & = & \matA^\T(\matA\matW\x-\b)\x^\T
\end{eqnarray*} 
and so the iteration step is
\begin{eqnarray*}
    \x_{k+1} & = & \x_k - \alpha  \matW_k^\T\matA^\T(\matA\matW_k\x_k-\b) \\ \matW_{k+1} & = & \matW_k - \alpha \matA^\T(\matA\matW_k\x_k-\b)\x_k^\T
\end{eqnarray*}

While the use of the hidden layer may seem redundant (since composition of linear functions is still linear), this model has highly non-trivial properties. One clear key difference is a dramatic increase in the level of overparameterization. Indeed, now there are $ d^2+d = O(d^2)$ trainable parameters. This is important, as modern machine learning theory is focused on the advantages in overparameterization (\citet{arora2018optimization, bartlett2020benign, Belkin15849} and many others). Another important difference from the model in Section \ref{subsec:deep-linear-networks}, is that similar to DNNs in any interesting setting, this loss function is non-convex. However, we see that this non-convexity does not harm deep networks too much, empirically \citep{doi:10.1073/pnas.1907373117}, as a good solution is often attained, and saddles and bad local minima are avoided. 

We remark that single hidden layer networks have been studied extensively in the context of Neural Tangent Kernels (NTKs). These models operate in two different regimes, depending on the norm of initialization and how aggressive the overparameterization is in the hidden layer. If the deep model resembles it's linearization, then it operates in the kernel regime, also called lazy training. This shows that there is a connection between deep networks and linear networks, which provides additional motivation for our study.

Indeed, previous works on NTKs show that if the overparameterization (width of the network in this sense) is aggressive enough, and the weights have been initialized from a rotation-invariant distribution such as a standard normal distribution, then the law of large numbers assures us that the model is guaranteed to operate in this lazy regime and behave as a linear model, solving an under-determined system of equations, and has all the disadvantages of that model (like bad local minima, large norm solutions). Hence, while initialization does not matter for operating in this regime, it matters significantly from a generalization point of view when training the model. This is in line with our work, which emphasizes the importance of clever initializations. Furthermore, previous work on NTK also shows that one can choose to train an NTK by solving the linear system indirectly by kernel regression, which is a convex problem with a square matrix and a single solution, and there the initialization truly does not matter. Solving the kernel regression problem yields the minimum norm solution and is equivalent to solving the linear system with a row-space initialization. However, this does not preclude the possibility of other algorithms that will benefit from careful initialization. 

The following lemma is an example of the similarities between the ordinary linear regression model of Section \ref{subsec:deep-linear-networks} and the one hidden layer model. It is the one hidden layer equivalent of Lemma \ref{lem:gd-stays}, giving us an easy and good initialization for gradient descent to reach $\vtheta^\star$.
\begin{Lemma}
\label{lemma:5}
If $\matW_k \in \range{\matA^\T}$ for some $k$ then $\matW_{k+1} \in \range{\matA^\T}$.
\end{Lemma}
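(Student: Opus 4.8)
The plan is to mimic exactly the argument used for Lemma \ref{lem:gd-stays}, interpreting the hypothesis ``$\matW_k \in \range{\matA^\T}$'' columnwise: it means every column of $\matW_k$ lies in $\range{\matA^\T}$, which is equivalent to saying there is a matrix $\matM_k \in \R^{n \times d}$ with $\matW_k = \matA^\T \matM_k$. So the first step is to record this factorization.

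Next I would substitute it into the gradient-descent update for $\matW$, namely $\matW_{k+1} = \matW_k - \alpha \matA^\T(\matA\matW_k\x_k - \b)\x_k^\T$, and simply factor $\matA^\T$ out on the left:
\begin{equation*}
\matW_{k+1} = \matA^\T\matM_k - \alpha \matA^\T(\matA\matW_k\x_k - \b)\x_k^\T = \matA^\T\left(\matM_k - \alpha(\matA\matW_k\x_k - \b)\x_k^\T\right).
\end{equation*}
This exhibits $\matW_{k+1}$ as $\matA^\T$ times an explicit $n \times d$ matrix, hence every column of $\matW_{k+1}$ is in $\range{\matA^\T}$, which is what we want. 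Equivalently, one can phrase it as: the correction term $-\alpha \matA^\T(\cdots)\x_k^\T$ already has $\matA^\T$ as a left factor so its columns lie in $\range{\matA^\T}$, and $\range{\matA^\T}$ is a linear subspace closed under addition, so adding it to $\matW_k$ (also columnwise in $\range{\matA^\T}$) keeps us in $\range{\matA^\T}$.

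There is no real obstacle here: the only thing to be slightly careful about is the convention that ``$\matW_k \in \range{\matA^\T}$'' refers to the column space containment, and that $\x_k$ is arbitrary — it plays no role because the $\matA^\T$ factor sits on the left of the whole rank-one correction regardless of what $\x_k$ is. As with Lemma \ref{lem:gd-stays}, this immediately yields by induction that if $\matW_0 \in \range{\matA^\T}$ (e.g.\ $\matW_0 = 0$) then $\matW_k \in \range{\matA^\T}$ for all $k$, setting up the analogue of Corollary \ref{cor:3} for the one-hidden-layer model.
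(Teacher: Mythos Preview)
Your proposal is correct and is essentially identical to the paper's own proof: the paper also writes $\matW_k = \matA^\T \matZ$ for some $\matZ \in \R^{n\times d}$, substitutes into the update rule, and factors $\matA^\T$ on the left to conclude $\matW_{k+1} \in \range{\matA^\T}$. The only differences are cosmetic (your variable is $\matM_k$ instead of $\matZ$, and you add the explicit columnwise interpretation and the closure-under-addition remark).
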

\begin{proof}
In a similar way to Lemma \ref{lem:gd-stays}, let $\matW_k = \matA^\T\matZ$ for some $\matZ \in \R^{n \times d}$, so
\begin{eqnarray*}
\matW_{k+1} & = & \matW_k - \alpha \matA^\T(\matA\matW_k\x_k-\b)\x_k^\T \\
   & = & \matA^\T(\matZ-\alpha (\matA\matA^\T\matZ\x_k-\b)\x_k^\T) \in \range{\matA^\T}
\end{eqnarray*}
\end{proof}
As for the limit, 
suppose $\matW_{k} = \matA^\T \matZ_k$ for all $k$, and that the limits $\matW_\infty$ and $ \x_\infty$ exist (which also means $\lim_{k \to \infty}\matZ_k = \matZ_{\infty}$ exists since $\matA^\T$ has full column rank). This trivially gives us

\begin{eqnarray*}
    \matW_{\infty} & = & \lim_{k \to \infty} [\matW_k - \alpha \matA^\T (\matA \matW_k \x_k - \b)\x_k^\T]\\ & = & \matA^\T(\matZ_{\infty} - \alpha (\matA \matW_{\infty} \x_{\infty} - \b)\x_{\infty}^\T)
\end{eqnarray*}.

Since we also have $\matW_{\infty} = \matA^\T\matZ_{\infty}$ we find that $(\matA \matW_{\infty} \x_{\infty} - \b)\x_{\infty}^\T = 0$, so $\x_{\infty} \neq 0$ implies $\matA\matW_{\infty}\x_{\infty} = \b$.
We lost convexity, so we are not sure we are converging to a solution (e.g. we could get stuck at a saddle point, a trivial example is $\matW_0 = 0, \x_0 = 0$), but the above discussion shows that if we do converge to a solution, and $\matW_0 \in \range{\matA^\T}$, we are guaranteed to converge to $\vtheta^\star$ regardless of what $\x_0$ was.

Having $\matW_0 \in \range{\matA^\T}$ and assuming that we converge to a solution assures us that $\lim_{k \to \infty} \matW_k\x_k$ will be the minimal solution to the problem $\matA \x = \b$, but each weight individually may not be optimal with respect to the other. That is, it is possible that $\x_\infty$ is not the optimal solution to problem $(\matA \matW_\infty) \x = \b$ and vice versa. 

We refer to a solution pair $(\matW,\x)$ as {\em bi-optimal}, if $\matW\x$ is a minimum norm solution of $\matA\z=\b$, $\matW$ is a minimum Frobenius norm solution of $\matA\matZ\x=\b$ where $\matZ$ is the free parameter and $\x$ is a minimum norm solution to $\matA\matW\z=\b$.   The following theorem provides us with a criterion on the initial $\matW_0$ and $\x_0$ that ensures that $\matW_\infty$ and $\x_\infty$, if they exist, are bi-optimal. 
\begin{theorem}[bi-optimality]
\label{theorem:6}
Let $\matW_0 = \matA^\T\v_0\x_0^\T$ for some $\v_0 \in \R^{n \times 1}$ and suppose $\x_k \neq 0$ for all $k$. Then for all $k$ there exists a $\v_k \in \R^{n \times 1}$ such that $\matW_k = \matA^\T\v_k\x_k^\T$.
\end{theorem}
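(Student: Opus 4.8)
The plan is to argue by induction on $k$. The base case $k=0$ is precisely the hypothesis $\matW_0 = \matA^\T\v_0\x_0^\T$, so the work is entirely in the inductive step: assuming $\matW_k = \matA^\T\v_k\x_k^\T$ for some $\v_k\in\R^{n\times1}$, I would derive the corresponding representation of $\matW_{k+1}$ by examining the two gradient-descent updates together rather than in isolation.

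First I would substitute $\matW_k = \matA^\T\v_k\x_k^\T$ into the $\matW$-update. Since $\x_k^\T\x_k = \TNormS{\x_k}$ is a scalar, the residual becomes $\matA\matW_k\x_k-\b = \TNormS{\x_k}\,\matA\matA^\T\v_k-\b\in\R^{n\times1}$, and pulling $\matA^\T$ out on the left and $\x_k^\T$ out on the right gives $\matW_{k+1} = \matA^\T\tilde{\v}_{k+1}\x_k^\T$ with $\tilde{\v}_{k+1} := \v_k - \alpha\big(\TNormS{\x_k}\,\matA\matA^\T\v_k-\b\big)$. This already has the correct left factor $\matA^\T$ and the correct rank-one shape, but its right factor is $\x_k^\T$ rather than the desired $\x_{k+1}^\T$; reconciling these two is the heart of the argument, and the only place where a nontrivial idea is needed.

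The key observation I would establish next is that the $\x$-iterates stay \emph{collinear}. Substituting $\matW_k = \matA^\T\v_k\x_k^\T$, hence $\matW_k^\T = \x_k\v_k^\T\matA$, into the $\x$-update yields $\x_{k+1} = \x_k - \alpha\,\x_k\,\big(\v_k^\T\matA\matA^\T(\TNormS{\x_k}\matA\matA^\T\v_k-\b)\big)$, and the parenthesised quantity is a $1\times1$ scalar, so $\x_{k+1} = \mu_k\x_k$ with $\mu_k := 1 - \alpha\,\v_k^\T\matA\matA^\T(\TNormS{\x_k}\matA\matA^\T\v_k-\b)\in\R$. Because $\x_k\neq0$ and $\x_{k+1}\neq0$ by hypothesis, $\mu_k\neq0$, so $\x_k^\T = \mu_k^{-1}\x_{k+1}^\T$; substituting this into $\matW_{k+1} = \matA^\T\tilde{\v}_{k+1}\x_k^\T$ gives $\matW_{k+1} = \matA^\T\big(\mu_k^{-1}\tilde{\v}_{k+1}\big)\x_{k+1}^\T$, so the claim holds with $\v_{k+1} := \mu_k^{-1}\tilde{\v}_{k+1}$, which closes the induction.

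The only genuine obstacle is spotting the collinearity of the $\x$-sequence under this parametrization; everything else is careful bookkeeping of which products are scalars and which are vectors or matrices. I would close with a remark connecting this structural fact to the name of the theorem: if the limits exist with $\x_\infty\neq0$, then $\matW_\infty\x_\infty = \TNormS{\x_\infty}\matA^\T\v_\infty\in\range{\matA^\T}$ is a solution of $\matA\z=\b$, so Lemma~\ref{lem:only-sol-in-rowspace} forces $\matW_\infty\x_\infty = \vtheta^\star$, hence $\matW_\infty = \vtheta^\star\x_\infty^\T/\TNormS{\x_\infty}$, which is exactly the minimum Frobenius norm solution of $\matA\matZ\x_\infty=\b$; and since $(\matA\matW_\infty)^\T = \x_\infty\v_\infty^\T\matA\matA^\T$ has $\x_\infty$ in its range while $\matA\matW_\infty\x_\infty=\b$, the vector $\x_\infty$ is the minimum norm solution of $\matA\matW_\infty\z=\b$ — i.e. the pair is bi-optimal.
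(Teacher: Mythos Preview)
Your proof is correct and in fact considerably more streamlined than the paper's. Both arguments proceed by induction and both recognize (after one substitution into the $\matW$-update) that $\matW_{k+1}$ factors as $\matA^\T(\v_k-\alpha\rb_k)\x_k^\T$ with $\rb_k=\matA\matW_k\x_k-\b$; they diverge at the step of replacing $\x_k^\T$ by $\x_{k+1}^\T$. The paper establishes the matrix identity $\TNormS{\x_{k+1}}\matW_{k+1}=\matW_{k+1}\x_{k+1}\x_{k+1}^\T$ by expanding each side into eight additive terms and matching them one by one---a page-long computation---and only then defines $\v_{k+1}$ and verifies $\matA^\T\v_{k+1}\x_{k+1}^\T=\matW_{k+1}$. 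You bypass all of this by observing directly that, under the inductive hypothesis, the $\x$-update collapses to $\x_{k+1}=\mu_k\x_k$ for a nonzero scalar $\mu_k$, so $\x_k^\T$ can be swapped for $\mu_k^{-1}\x_{k+1}^\T$ in one line. This collinearity is in fact the hidden content of the paper's eight-term identity (for a rank-one $\matZ=\c\x_k^\T$ with $\c\neq0$, the relation $\matZ\y\y^\T=(\y^\T\y)\matZ$ forces $\y$ parallel to $\x_k$), but you extract it at the source rather than rediscovering it through brute expansion. The two resulting formulas for $\v_{k+1}$ agree: the paper's $\v_{k+1}=\TNormS{\x_{k+1}}^{-1}(\v_k-\alpha\rb_k)\x_k^\T\x_{k+1}$ equals your $\mu_k^{-1}(\v_k-\alpha\rb_k)$ since $\x_k^\T\x_{k+1}=\mu_k\TNormS{\x_k}$ and $\TNormS{\x_{k+1}}=\mu_k^2\TNormS{\x_k}$.
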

\begin{proof}
To make the proof more readable and less heavy on the use of subscripts, we make the following temporary change of notation:
\begin{eqnarray*}
    \matW & := & \matW_0 \\
    \x & := & \x_0 \\
    \matZ & := & \matW_1\\
    \y & := & \x_1\\
    \v & := & \v_0\\
    \rb & := & \matA\matW_0\x_0-\b
\end{eqnarray*}
First, we write that the iteration step in the new notation is

\begin{eqnarray*}
    \matZ & = & \matW-\alpha\matA^\T\rb\x^\T \\ 
    \y & = & \x - \alpha \matW^\T \matA^\T \rb
\end{eqnarray*}
then we claim that
\begin{equation*}
\y^\T\y\matZ = \matZ\y\y^\T
\end{equation*}
To see this, we first write
\begin{eqnarray*}
    \y^\T\y\matZ & = & (\x^\T - \alpha \rb^\T\matA\matW)(\x - \alpha \matW^\T\matA^\T\rb)(\matW - \alpha \matA^\T \rb\x^\T) \\ 
    & = & (\x^\T\x -\alpha \x^\T\matW^\T\matA^\T\rb - \alpha \rb^\T\matA\matW\x + \alpha^2 \rb^\T\matA\matW\matW^\T\matA^\T\rb)(\matW - \alpha \matA^\T \rb\x^\T) \\ & = & (\x^T\x)\cdot\matW + (-\alpha \x^T\x)\cdot \matA^\T \rb\x^\T + (-\alpha\x^\T\matW^\T\matA^\T\rb)\cdot \matW + (\alpha^2 \x^\T\matW^\T\matA^\T\rb)\cdot \matA^\T\rb\x^\T + \\ & & (-\alpha\rb^\T\matA\matW\x)\cdot\matW + (\alpha^2\rb^\T\matA\matW\x) \cdot \matA^\T\rb\x^\T + (\alpha^2 \rb^\T\matA\matW\matW^\T\matA^\T\rb)\cdot \matW + \\ & & (-\alpha^3 \rb^\T\matA\matW\matW^\T\matA^\T\rb) \cdot \matA^\T\rb\x^\T
\end{eqnarray*}
For reasons that will become clear, we will change the order of summation and instead write:
\begin{eqnarray*}
    \y^\T\y\matZ & = & (\x^\T\x)\cdot\matW + (-\alpha\x^\T\matW^\T\matA^\T\rb)\cdot \matW + (-\alpha\rb^\T\matA\matW\x)\cdot\matW + (\alpha^2 \rb^\T\matA\matW\matW^\T\matA^\T\rb)\cdot \matW \\& & + (-\alpha \x^\T\x)\cdot \matA^\T \rb\x^\T + (\alpha^2 \x^\T\matW^\T\matA^\T\rb)\cdot \matA^\T\rb\x^\T + (\alpha^2\rb^\T\matA\matW\x) \cdot \matA^\T\rb\x^\T + \\ & & (-\alpha^3 \rb^\T\matA\matW\matW^\T\matA^\T\rb) \cdot \matA^\T\rb\x^\T
\end{eqnarray*}
Similarly, write
\begin{eqnarray*}
\matZ\y\y^\T & = & (\matW - \alpha \matA^\T \rb\x^\T)(\x - \alpha \matW^\T\matA^\T\rb)(\x^\T - \alpha \rb^\T\matA\matW) \\ & = & (\matW - \alpha \matA^\T \rb\x^\T)(\x\x^\T -\alpha \x\rb^\T\matA\matW - \alpha \matW^\T\matA^\T\rb\x^\T + \alpha^2 \matW^\T\matA^\T\rb\rb^\T\matA\matW) \\ & = & \matW \x\x^\T -\alpha \matW\x\rb^\T\matA\matW -\alpha \matW \matW^\T \matA^\T \rb\x^\T +\alpha^2 \matW \matW^\T \matA^\T\rb\rb^\T\matA\matW \\ & & -\alpha \matA^\T\rb\x^\T\x\x^\T + \alpha^2  \matA^\T \rb \x^\T \x\rb^\T \matA \matW + \alpha^2 \matA^\T \rb \x^\T \matW^\T \matA^\T \rb\x^\T - \alpha^3 \matA^\T \rb\x^\T \matW^\T \matA^\T\rb\rb^\T\matA\matW 
\end{eqnarray*}
Both $\y^\T\y\matZ$ and $\matZ\y\y^\T$ have eight terms in their expressions. We claim that the equality is true, since each term is equal to its equivalent term (with respect to order) in the other expression.
\begin{enumerate}
    \item First term:
    \begin{eqnarray*}
    \matW \x\x^\T & = & \matA^\T \v\x^\T \x\x^\T\\ & = & \matA^\T\v (\x^\T\x)\x^\T \\ & = & (\x^\T \x) \cdot \matA^\T \v \x^\T \\ & = & (\x^\T \x) \cdot \matW
    \end{eqnarray*}
    \item Second term:
    \begin{eqnarray*}
    -\alpha \matW\x\rb^\T\matA\matW & = & -\alpha \matA^\T \v \x^\T \x\rb^\T \matA \matA^\T \v\x^\T \\ & = & -\alpha \matA^\T \v (\x^\T \x)(\rb^\T \matA \matA^\T \v)\x^\T \\ & = & -\alpha (\x^\T \x)(\rb^\T \matA \matA^\T \v)\cdot \matA^\T\v\x^\T \\ & = & -\alpha (\x^\T \x)(\rb^\T \matA \matA^\T \v)^\T\cdot \matA^\T\v\x^\T \\ & = & -\alpha (\x^\T \x)(\v^\T \matA\matA^\T \rb)\cdot \matA^\T\v\x^\T \\ & = & (-\alpha \x^\T)(\x\v^\T\matA)(\matA^\T\rb)\cdot \matA^\T\v\x^\T \\ & = & (-\alpha \x^\T \matW^\T \matA^\T\rb)\cdot \matW
    \end{eqnarray*}
    \item Third term:
    \begin{eqnarray*}
    -\alpha \matW\matW^\T\matA^\T\rb\x^\T & = &  -\alpha \matA^\T\v\x^\T\x\v^\T\matA\matA^\T\rb\x^\T \\ & = & -\alpha \matA^\T\v(\x^\T\x)(\v^\T\matA\matA^\T\rb)\x^\T \\ & = & -\alpha (\v^\T\matA\matA^\T\rb)(\x^\T\x)\cdot \matA^\T\v\x^\T \\ & = & -\alpha (\v^\T\matA\matA^\T\rb)^\T(\x^\T\x)\cdot \matA^\T\v\x^\T \\ & = & -\alpha (\rb^\T\matA\matA^\T\v)(\x^\T\x)\cdot \matA^\T\v\x^\T \\ & = & -\alpha (\rb^\T\matA)(\matA^\T\v\x^\T)(\x)\cdot \matA^\T\v\x^\T \\ & = & (-\alpha \rb^\T\matA \matW \x) \cdot \matW
    \end{eqnarray*}
    \item Fourth term:
    \begin{eqnarray*}
    \alpha^2 \matW \matW^\T \matA^\T\rb\rb^\T\matA\matW & = & \alpha^2 \matA^\T\v\x^\T\x\v^\T\matA\matA^\T\rb\rb^\T\matA\matA^\T\v\x^\T \\ & = & \alpha^2 \matA^\T\v(\x^\T\x)(\v^\T\matA\matA^\T\rb)(\rb^\T\matA\matA^\T\v)\x^\T \\ & = & \alpha^2 (\rb^\T\matA\matA^\T\v)(\x^\T\x)(\v^\T\matA\matA^\T\rb)\cdot \matA^\T\v\x^\T \\ & = & \alpha^2 (\rb^\T\matA)(\matA^\T\v\x^\T)(\x\v^\T\matA)(\matA^\T\rb)\cdot \matA^\T\v\x^\T \\ & = & (\alpha^2\rb^\T \matA \matW\matW^\T\matA^\T\rb)\cdot \matW
    \end{eqnarray*}
    \item Fifth term:
    \begin{eqnarray*}
    -\alpha \matA^\T\rb\x^\T\x\x^\T & = & -\alpha \matA^\T\rb(\x^\T\x)\x^\T \\ & = & (-\alpha \x^\T \x)\cdot \matA^\T \rb\x^\T
    \end{eqnarray*}
    \item Sixth term:
    \begin{eqnarray*}
    \alpha^2  \matA^\T \rb \x^\T \x\rb^\T \matA \matW & = & \alpha^2  \matA^\T \rb \x^\T \x\rb^\T \matA \matA^\T\v\x^\T \\ & = & \alpha^2  \matA^\T \rb (\x^\T \x)(\rb^\T \matA \matA^\T\v)\x^\T \\ & = & \alpha^2 (\x^\T \x)(\rb^\T \matA \matA^\T\v)\cdot \matA^\T \rb \x^\T \\ & = & \alpha^2 (\x^\T \x)(\rb^\T \matA \matA^\T\v)^\T \cdot \matA^\T \rb \x^\T \\ & = & \alpha^2 (\x^\T \x)(\v^\T\matA\matA^\T\rb)\cdot \matA^\T \rb \x^\T \\ & = & (\alpha^2 \x^\T)(\x\v^\T\matA)(\matA^\T\rb)\cdot \matA^\T \rb \x^\T \\ & = & (\alpha^2 \x^\T \matW^\T\matA^\T\rb)\cdot \matA^\T\rb\x^\T
    \end{eqnarray*}
    \item Seventh term:
    \begin{eqnarray*}
    \alpha^2 \matA^\T \rb \x^\T \matW^\T \matA^\T \rb\x^\T & = & \alpha^2 \matA^\T \rb \x^\T \x\v^\T\matA \matA^\T \rb\x^\T \\ & = & \alpha^2 \matA^\T \rb (\x^\T \x)(\v^\T\matA \matA^\T \rb)\x^\T \\ & = & \alpha^2 (\v^\T\matA \matA^\T \rb)(\x^\T \x) \cdot \matA^\T \rb\x^\T \\ & = & \alpha^2 (\v^\T\matA \matA^\T \rb)^\T (\x^\T \x) \cdot \matA^\T \rb\x^\T \\ & = & \alpha^2(\rb^\T\matA\matA^\T\v)(\x^\T \x) \cdot \matA^\T \rb\x^\T \\ & = & \alpha^2(\rb^\T\matA)(\matA^\T\v\x^\T) (\x) \cdot \matA^\T \rb\x^\T \\ & = & (\alpha^2 \rb^\T \matA \matW \x) \cdot \matA^\T \rb\x^\T
    \end{eqnarray*}
    \item Eighth term:
    \begin{eqnarray*}
    - \alpha^3 \matA^\T \rb\x^\T \matW^\T \matA^\T\rb\rb^\T\matA\matW & = & - \alpha^3 \matA^\T \rb\x^\T \x\v^\T\matA \matA^\T\rb\rb^\T\matA\matA^\T\v\x^\T \\ & = & - \alpha^3 \matA^\T \rb(\x^\T \x)(\v^\T\matA \matA^\T\rb)(\rb^\T\matA\matA^\T\v)\x^\T \\ & = & -\alpha^3 (\rb^\T\matA\matA^\T\v) (\x^\T \x)(\v^\T\matA \matA^\T\rb) \cdot \matA^\T \rb \x^\T \\ & = & -\alpha^3 (\rb^\T\matA)(\matA^\T\v \x^\T) (\x\v^\T\matA) (\matA^\T\rb) \cdot \matA^\T \rb \x^\T \\ & = & (-\alpha^3 \rb^\T\matA\matW\matW^\T\matA^\T\rb)\cdot \matA^\T\rb\x^\T
    \end{eqnarray*}
\end{enumerate}
This shows that $\y^\T\y\matZ = \matZ\y\y^\T$, which is another way of writing
\begin{equation*}
    \matZ = \frac{1}{\TNormS{\y}}\matZ\y\y^\T
\end{equation*} where we now used the assumption that $\x_k$ is never zero, and thus $\y \neq 0$ and this division is valid.
To conclude the proof, let
\begin{equation*}
    \u := \frac{1}{\TNormS{\y}}(\v-\alpha\rb)\x^\T\y
\end{equation*}
and check that 
\begin{eqnarray*}
\matA^\T\u\y^\T & = & \frac{1}{\TNormS{\y}}\matA^\T (\v-\alpha\rb)\x^\T\y\y^\T \\ & = & \frac{1}{\TNormS{\y}}(\matW - \alpha \matA^\T\rb\x^\T)\y\y^\T \\ & = & \frac{1}{\TNormS{\y}} \matZ \y\y^\T \\ & = & \matZ.
\end{eqnarray*}
Going back to our original notation, we have shown that if $\matW_0 = \matA^\T\v_0\x_0^\T$ for some $\v_0$, then $\matW_1 = \matA^\T \v_1 \x_1^\T$ for some $\v_1$ which we called $\u$ and even gave an expression for. Applying this argument inductively exactly in the same way from iteration $k$ to iteration $k+1$, will give us the desired result.
\end{proof}
Before proceeding, a short discussion on the case $\x_{k+1} = 0$, where an interesting phenomenon occurs, is in order.
Suppose that at iteration $k$ the weights are the non-zero pair $(\matA^\T\v_k\x_k^\T, \x_k^\T)$. If $\x_{k+1} = 0$ then at iteration $k+1$ the weights are the pair $(\matA^\T\v_k\x_k^\T - \alpha \matA^\T \rb_k\x_k^\T, 0)$, which shows that the theorem does not hold at this iteration. But we can notice that at iteration $k+2$ we have
\begin{eqnarray*}
    \x_{k+2} & = & \alpha (\x_k\v_k^\T\matA-\alpha \x_k\rb_k^\T\matA)\matA^\T\b \\
    \matW_{k+2} & = & \matA^\T\v_k\x_k^\T - \alpha \matA^\T \rb_k\x_k^\T
\end{eqnarray*}
and if we define $\v_{k+2}:= \frac{1}{\TNormS{\x_{k+2}}}\matA^{\T^+}\matW_{k+2}\x_{k+2}$ then
\begin{eqnarray*}
    \matA^\T\v_{k+2}\x_{k+2}^\T & = & \frac{1}{\TNormS{\x_{k+2}}}\matA^\T \matA^{\T^+}\matW_{k+2}\x_{k+2}\x_{k+2}^\T \\ & = & \frac{1}{\TNormS{\alpha (\x_k\v_k^\T\matA-\alpha \x_k\rb_k^\T\matA)\matA^\T\b}}\matA^\T \matA^{\T^+}(\matA^\T\v_k\x_k^\T - \alpha \matA^\T \rb_k\x_k^\T)\x_{k+2}\x_{k+2}^\T \\
    & = & \frac{1}{\TNormS{\alpha (\x_k\v_k^\T\matA-\alpha \x_k\rb_k^\T\matA)\matA^\T\b}}\matA^\T(\v_k\x_k^\T - \alpha \rb_k\x_k^\T)\x_{k+2}\x_{k+2}^\T \\ 
    & = & \frac{\matA^\T(\v_k\x_k^\T - \alpha \rb_k\x_k^\T)\x_k(\v_k^\T\matA-\alpha \rb_k^\T\matA)\matA^\T\b\b^\T\matA(\matA^\T\v_k - \alpha \matA^\T \rb_k)\x_k^\T}{\TNormS{ \x_k(\v_k^\T\matA-\alpha \rb_k^\T\matA)\matA^\T\b}} \\ & = & \frac{1}{\TNormS{\x_k}}(\matA^\T \v_k \x_k^\T - \alpha \matA^\T \rb_k\x_k^\T)\x_k\x_k^\T \\ & = & \frac{1}{\TNormS{\x_k}}(\matA^\T \v_k - \alpha \matA^\T \rb_k)(\x_k^\T\x_k)\x_k^\T \\ & = & \matA^\T \v_k\x_k^\T - \alpha \matA^\T \rb_k\x_k^\T \\ & = & \matW_{k+2}
\end{eqnarray*} which shows that in iteration $k+1$ we don't have the desired outcome, but one iteration later it course-corrects and we return to the pattern $\matW_k = \matA^\T\v_k\x_k^\T$.

The reason we refer to this theorem as bi-optimality is the following important corollary, which shows that if we initialize as required by Theorem \ref{theorem:6}, then bi-optimality is guaranteed.
\begin{corollary}
\label{cor:7}
If the conditions of Theorem 6 hold, and $\matW_{\infty}, \x_\infty$ exist and are non-zero, then $\matA\matW_{\infty}\x_{\infty} = \b$ by the discussion following the proof of Lemma \ref{lemma:5}, and the following statements are true:
\begin{enumerate}
    \item $\matW_\infty\x_\infty$ is the minimum norm solution to the problem $\matA \z = \b$
    \item $\x_\infty$ is the minimum norm solution to the problem $(\matA\matW_\infty) \z = \b$
    \item $\text{vec}(\matW_\infty)$ is the minimum norm solution to the problem $(\x_\infty^\T \otimes \matA) \z = \b$, (i.e. $\matW_\infty$ is the minimum Frobenius norm solution to $\matA\matZ\x_\infty=\b$).
\end{enumerate}
\end{corollary}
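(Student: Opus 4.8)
The plan is to reduce all three claims to one structural fact plus one elementary observation. The structural fact is that, under the hypotheses of Theorem~\ref{theorem:6}, the limiting weights inherit the factorization $\matW_\infty=\matA^\T\v_\infty\x_\infty^\T$ for a suitable $\v_\infty\in\R^{n\times1}$. The observation is that a solution of a consistent system $\matM\z=\c$ has minimal Euclidean norm among all solutions if and only if it lies in $\range{\matM^\T}$; this is the ``coefficient-matrix-free'' strengthening of Lemma~\ref{lem:only-sol-in-rowspace}, proved the same way using that $\range{\matM^\T}\perp\nully{\matM}$, so that $\TNormS{\z'}=\TNormS{\z}+\TNormS{\z'-\z}$ whenever $\z,\z'$ are solutions and $\z\in\range{\matM^\T}$.

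First I would establish the factorization. Theorem~\ref{theorem:6} gives $\matW_k=\matA^\T\v_k\x_k^\T$ for every $k$, hence $\matW_k\x_k=\TNormS{\x_k}\matA^\T\v_k$, and since $\matA^\T$ has full column rank, $\v_k=\TNormS{\x_k}^{-1}(\matA^\T)^{+}\matW_k\x_k$. As $\matW_k\to\matW_\infty$ and $\x_k\to\x_\infty\neq0$, the scalars $\TNormS{\x_k}$ are eventually bounded away from $0$, so $\v_k\to\v_\infty:=\TNormS{\x_\infty}^{-1}(\matA^\T)^{+}\matW_\infty\x_\infty$, and taking limits in $\matW_k=\matA^\T\v_k\x_k^\T$ yields $\matW_\infty=\matA^\T\v_\infty\x_\infty^\T$. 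That $\matA\matW_\infty\x_\infty=\b$ is already noted after Lemma~\ref{lemma:5}.

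Then each part is one substitution. (1) $\matW_\infty\x_\infty=\matA^\T\v_\infty(\x_\infty^\T\x_\infty)=\matA^\T(\TNormS{\x_\infty}\v_\infty)\in\range{\matA^\T}$ and solves $\matA\z=\b$, so Lemma~\ref{lem:only-sol-in-rowspace} gives $\matW_\infty\x_\infty=\vtheta^\star$. (2) The coefficient matrix is $\matA\matW_\infty$ and $(\matA\matW_\infty)^\T=\matW_\infty^\T\matA^\T=\x_\infty(\v_\infty^\T\matA\matA^\T)$; the row vector $\v_\infty^\T\matA\matA^\T$ is nonzero (otherwise $\matA\matW_\infty=0$, contradicting $\matA\matW_\infty\x_\infty=\b\neq0$), so $\range{(\matA\matW_\infty)^\T}=\text{span}(\x_\infty)\ni\x_\infty$. (3) From $\vecmat{\matA\matW_\infty\x_\infty}=(\x_\infty^\T\otimes\matA)\vecmat{\matW_\infty}$ the vector $\vecmat{\matW_\infty}$ solves $(\x_\infty^\T\otimes\matA)\z=\b$, and since $\matW_\infty=(\matA^\T\v_\infty)\x_\infty^\T$, the mixed-product rule $(\matA\otimes\matB)(\matC\otimes\matD)=\matA\matC\otimes\matB\matD$ together with $\vecmat{\u\v^\T}=\v\otimes\u$ gives $\vecmat{\matW_\infty}=\x_\infty\otimes(\matA^\T\v_\infty)=(\x_\infty\otimes\matA^\T)\v_\infty\in\range{\x_\infty\otimes\matA^\T}=\range{(\x_\infty^\T\otimes\matA)^\T}$. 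Applying the row-space criterion to each of the three systems yields minimum-norm optimality, and $\TNorm{\vecmat{\matW_\infty}}=\FNorm{\matW_\infty}$ recasts (3) in terms of the Frobenius norm.

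The main obstacle, such as it is, is the limit argument producing $\v_\infty$: it genuinely needs both $\x_\infty\neq0$ and full column rank of $\matA^\T$, and one must remember to invoke Lemma~\ref{lem:only-sol-in-rowspace} in parts (2) and (3) in its general form, since $\matA\matW_\infty$ and $\x_\infty^\T\otimes\matA$ need not have full row rank. Everything else --- in particular the Kronecker bookkeeping in (3) --- is direct computation.
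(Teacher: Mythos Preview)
Your proposal is correct and follows essentially the same route as the paper: establish $\v_\infty$ via $\v_k=\TNormS{\x_k}^{-1}(\matA^\T)^{+}\matW_k\x_k$, pass to the limit to get $\matW_\infty=\matA^\T\v_\infty\x_\infty^\T$, and then verify row-space membership in each of the three systems. The only noteworthy difference is your explicit remark that Lemma~\ref{lem:only-sol-in-rowspace} must be used in its general (pseudoinverse) form for part~(2), since $\matA\matW_\infty$ has rank one and $(\matA\matW_\infty)(\matA\matW_\infty)^\T$ is not invertible when $n>1$; the paper invokes Lemma~\ref{lem:only-sol-in-rowspace} there without flagging this, so your version is slightly more careful on that point.
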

\begin{proof}
\begin{enumerate}
    \item
    The conditions of Theorem 6 are assumed to hold, so for all $k$ $\x_k \neq 0$ and  there exists $\v_k$ such that $\matW_{k}=\matA^\T \v_k\x_k^\T$. First, we mention that $\v_\infty$ must exist:
    \begin{eqnarray*}
    \v_\infty & = & \lim_{k \to \infty} \frac{1}{\TNormS{\x_k}}\matI_n\v_k\x_k^\T\x_k\\ & = & \lim_{k \to \infty} \frac{1}{\TNormS{\x_k}}(\matA^{\T^+}\matA^\T)\v_k\x_k^\T\x_k\\ & = & \lim_{k \to \infty} \frac{1}{\TNormS{\x_k}}\matA^{\T^+}(\matA^\T\v_k\x_k^\T)\x_k \\ & = & \lim_{k \to \infty} \frac{1}{\TNormS{\x_k}}\matA^{\T^+}\matW_k\x_k \\ & = & \frac{1}{\TNormS{\x}}\matA^{\T^+}\matW_\infty\x_\infty.
    \end{eqnarray*} We emphasize that $\matA$ has full rank and so $\matI_n = (\matA^\T)^+ \matA^\T$.
    From the existence of the limits $\matW_\infty, \x_\infty, \v_\infty$, we can now safely deduce by simple multiplication.
    \begin{equation*}
        \matW_\infty = \lim_{k\to\infty}\matW_k = \lim_{k\to\infty}\matA^\T\v_k\x_k^\T = \matA^\T\v_\infty\x_\infty^\T
    \end{equation*}
    So $\matW_\infty\x_\infty \in \range{\matA^\T}$ and $\matA \matW_\infty \x_\infty = \b$, and due to Lemma \ref{lem:only-sol-in-rowspace} we have $\vtheta^\star = \matW_\infty\x_\infty$.
    \item     $\x_\infty$ is trivially a solution of $(\matA\matW_\infty)\z = \b$. To see that it is the minimum norm solution, we prove that $\x_\infty \in \range{(\matA\matW_\infty)^\T}$ by exploiting the facts that $\matW = \matA^\T \v_\infty \x_\infty^\T$ and $\v_\infty \neq 0$, as that would imply $\b = 0$:
    \begin{eqnarray*}
    \x_\infty & = & \x_\infty \frac{(\v_\infty^\T \matA \matA^\T)(\matA \matA^\T\v_\infty)}{\TNormS{\matA \matA^\T\v_\infty}} \\
    & = &  \frac{(\x_\infty\v_\infty^\T \matA) \matA^\T\matA \matA^\T\v_\infty}{\TNormS{\matA \matA^\T\v_\infty}} \\
    & = & \frac{\matW_\infty^\T \matA^\T\matA \matA^\T\v_\infty}{\TNormS{\matA \matA^\T\v_\infty}} \\
    & = & (\matA \matW_\infty)^\T\left(\frac{1}{\TNormS{\matA \matA^\T\v_\infty}}\matA\matA^\T\v_\infty\right) \in \range{(\matA \matW_\infty)^\T}
    \end{eqnarray*}
    Now we apply Lemma \ref{lem:only-sol-in-rowspace}.
    \item 
    $\matW_\infty = \matA^\T\v_\infty\x_\infty^\T$ implies by the properties of the Kronecker product that
    \begin{eqnarray*}
    \text{vec}(\matW_\infty) = \text{vec}(\matA^\T\v_\infty\x_\infty^\T) & = & (\x_\infty\otimes \matA^\T)\v_\infty \in \range{\x_\infty\otimes \matA^\T} \\
    & = & \range{(\x_\infty^\T \otimes \matA)^\T}.
    \end{eqnarray*}
    Combine this with the fact that
    \begin{eqnarray*}
    (\x_\infty^\T \otimes \matA)\text{vec}(\matW_\infty) & = & \text{vec}(\matA\matW_\infty\x_\infty) \\
    & = & \matA\matW_\infty\x_\infty \\
    & = & \b
    \end{eqnarray*} and apply Lemma \ref{lem:only-sol-in-rowspace} to conclude the proof
\end{enumerate}
\end{proof}

Even in this somewhat non-trivial model we see that initialization plays an important role, and if we initialize intelligently, not only can we reach the minimum norm solution $\vtheta^\star$ that we desired, but we can factor it into $\matW\x$ such that each variable is the minimal solution with respect to the other, a sort of bi-optimality where every variable is optimal and no variable has any incentive to move. Of course, since we are still converging to $\vtheta^\star$, this method does not offer better generalization than other methods that converge to $\vtheta^\star$, though we hope that further research will yield useful properties of bi-optimal solutions. %In particular, we conjecture that bi-optimal solution can be useful in meta-learning~\citep{SunEtAl21}.

An additional nice thing about Theorem \ref{theorem:6} is that since $\matW_k = \matA^\T\v_k\x_k^\T$ in every iteration, instead of optimizing over $\matW_k$, we can instead iterate over $\v_k$. The update step for $\x_k$ remains the same (except that we replace $\matW_k$ with $\matA^\T\v_k\x_k^\T$), and the update step for $\v_k$ is as denoted in the theorem. See Algorithm \ref{alg:2} for a pseudocode description.

\begin{algorithm}
\caption{\label{alg:2} One hidden layer bi-optimal network}
\begin{algorithmic}
\State {Inputs: $\matA \in \R^{n \times d}, \b \in \R^{n \times 1}, \alpha \in \R$}
\State $\x_0 \gets \text{arbitrary, not zero}$
\State $\v_0 \gets \text{arbitrary, not zero}$
\For {iteration $k = 0, 1,\dots$ until convergence}
    \State $\x_{k+1} \gets \x_k-\alpha \x_k\v_k^\T\matA\matA^\T(\matA\matA^\T\v_k\x_k^\T\x_k-\b)$
    \State $\v_{k+1} \gets \frac{1}{\TNormS{\x_{k+1}}}(\v_k-\alpha (\matA\matA^\T\v_k\x_k^\T\x_k-\b))\x_k^\T\x_{k+1}$
\EndFor
\State {output $\matA^\T\v_k\x_k^\T, \x_k$}
\end{algorithmic}
\end{algorithm}

We reduced the dimensions of our variables from $d^2+d$ to $d+n$, but we can improve further! Notice that
\begin{equation*}
    \gamma_k := \v_k^\T\matA\matA^\T(\TNormS{\x_k}\matA\matA^\T\v_k-\b)
\end{equation*}
is a scalar, so we can write
\begin{eqnarray*}
\x_{k+1} = (1-\v_k^\T\matA\matA^\T(\TNormS{\x_k}\matA\matA^\T\v_k-\b))\x_k & = & (1-\gamma_k)\x_k \\ & = & \prod_{i=0}^{k}(1-\gamma_i)\x_0\\& = & \rho_{k+1}\x_0
\end{eqnarray*} where $\rho_k:=\prod_{i=0}^{k-1}(1-\gamma_i)$.
Now,
\begin{eqnarray*}
\v_{k+1} & = & \frac{1}{\rho_{k+1}^2\TNormS{x_0}}(\v_k-\alpha (\rho_k^2 \TNormS{\x_0}\matA\matA^\T\v_k-\b))\rho_k^2(1-\gamma_k)\TNormS{\x_0} \\ & = & \frac{1}{1-\gamma_k}(\v_k-\alpha (\rho_k \TNormS{\x_0}\matA\matA^\T\v_k-\b))\TNormS{\x_0}.
\end{eqnarray*}
Since $\x_0$ is arbitrary, to simplify matters, we can sample $\x_0$ from the unit sphere, and arrive at the concise update step
\begin{eqnarray*}
\v_{k+1} & = & \frac{1}{1-\gamma_k}(\v_k-\alpha (\rho_k \matA\matA^\T\v_k-\b)).
\end{eqnarray*}
Notice that we do not even need to iterate on $\x_k$, but rather only on $\gamma_k$, the product $\rho_k$, and $\v_k$. The number of parameters is now $n + 2$, which we remind the reader is possibly much lower than $d$, especially in the setting we consider as $d > n$, and probably $d \gg n$ in many real-world settings.
This yields the following very interesting algorithm (Algorithm \ref{alg:3}).

\begin{algorithm}
\caption{\label{alg:3} Compact hidden layer iteration}
\begin{algorithmic}
\State {$\matA \in \R^{n \times d}, \b \in \R^{n \times 1}, \alpha \in \R$ inputs}
\State $\v_0 \gets \text{arbitrary, not zero}$ \;\;\;\;\;\;\;\;\;\;\;\;\;\;\;\;\;\;\;\;\;\;\;\;\;\;\;\;\;\;\;$O(n)$
\State $\rho_0 \gets 1$ \;\;\;\;\;\;\;\;\;\;\;\;\;\;\;\;\;\;\;\;\;\;\;\;\;\;\;\;\;\;\;\;\;\;\;\;\;\;\;\;\;\;\;\;\;\;\;\;\;\;\;\;\;\;\;\;\;\;$O(1)$
\State $\z_0 \gets \matA^\T \v_0$ \;\;\;\;\;\;\;\;\;\;\;\;\;\;\;\;\;\;\;\;\;\;\;\;\;\;\;\;\;\;\;\;\;\;\;\;\;\;\,\;\;\;\;\;\;\;\;\;\;\;\;$O(T_{\matA})$
\For {iteration $k = 0, 1,\dots$ until convergence}
    \State $\y_k \gets \matA\z_k$ \;\;\;\;\;\;\;\;\;\;\;\;\;\;\;\;\;\;\;\;\;\;\;\;\;\;\;\;\;\;\;\;\;\; \,\;\;\;\;\;\;\;\;\;\;\;\;$O(T_{\matA})$
    \State $\rb_k \gets \alpha(\rho_k^2\y_k - \b)$ \;\;\;\;\;\;\;\;\;\;\;\;\;\;\;\;\;\;\;\;\;\;\;\;\;\;\;\;\;\;\;\;\;\; $O(n)$
    \State $\gamma_k \gets \y_k^\T \rb_k$ \;\;\;\;\;\;\;\;\;\;\;\;\;\;\;\;\;\;\;\;\;\;\;\;\;\;\;\;\;\;\;\;\;\;\,\;\;\;\;\;\;\;\;\;\;\;\;$O(n)$
    \State $\v_{k+1} \gets \frac{1}{1-\gamma_k}(\v_k-\rb_k)$ \;\;\;\;\;\;\;\;\;\;\;\;\;\;\;\, \;\;\;\;\;\;\;\;\;\;\;\;$O(n)$
    \State $\rho_{k+1} \gets \rho_k
    (1-\gamma_k)$\;\;\;\;\;\;\;\;\;\;\;\;\;\;\;\; \;\;\;\;\;\;\;\;\;\;\;\;\;\;\;\;\;\;\;$O(1)$
    \State $\z_{k+1} \gets \matA^\T \v_{k+1}$\;\;\;\;\;\;\;\;\;\;\;\;\;\;\;\;\;\;\;\;\;\;\;\;\;\;\;\;\;\;\;\;\;\;\;\;\;\;\;$O(T_{\matA})$
\EndFor
\State output $\rho_k^2 \z_k$
\end{algorithmic}
\end{algorithm}
The time complexity of running Algorithm \ref{alg:3} for $t > 1$ iterations is $O(t \cdot \text{max}(n, T_{\matA}))$, where we abstract the time it takes to multiply by $\matA$ or $\matA^\T$ by $T_{\matA}$.
One final observation regarding this algorithm is that since 
\begin{equation*}
    \lim_{k \to \infty} \rho_k\matA^\T\v_k = \vtheta^\star = \matA^\T(\matA\matA^\T)^{-1}\b
\end{equation*}
 and $\matA^\T$ has full column rank and thus has a left inverse, is that we know ahead of time that
 \begin{equation*}
      \v_\infty = \lim_{k \to \infty} \frac{1}{\rho_k}(\matA\matA^\T)^{-1}\b.
 \end{equation*} Of course $(\matA\matA^\T)^{-1}\b$ is the unique solution to the problem $\matA\matA^\T\z = \b$, which is a highly ill-conditioned problem in most scenarios as $\kappa (\matA\matA^\T) = \kappa (\matA)^2$. So this algorithm aims to solve $\matA\x=\b$ by way of solving $\matA\matA^\T\x=\b$ with a variable step size, but at the very low cost of optimizing a single layer, plus an additional scalar. This shows that given an intelligent initialization, not only can we converge to the best solution, we can collapse deep networks to have the same per iteration cost of shallow networks, up to a constant factor.
 
 This is in the same spirit as the lottery ticket hypothesis (LTH, \citep{conf/iclr/FrankleC19}), but there are two key differences. The crux of the LTH is that at initialization, a randomly initialized neural network contains a sub-network that if trained in isolation will reach the same or similar accuracy to the complete network after training, and propose an iterative method for finding this sub-network. The similarity is clear in that we reduce the cost of per iteration of tranining/testing, but the differences are that we propose an a priori method for reducing the number of parameters and faster iterations, rather than an a posteriori iterative one. Furthermore, our collapsed model is not composed of sub-weights of the original model, but rather constraining the weights to be of a particular low rank form and optimizing the respective vectors that make up this decomposition. 

This algorithm is completely equivalent to a single hidden layer neural network, but does not give any advantages in generalization. Does it have any advantages when it comes to optimization? Recent work \citep{arora2018optimization} suggests that overparameterization has advantages when it comes to optimization, and that depth preconditions the problem. However, to the best of our knowledge, they did not consider an underdetermined system, which is exactly our setting. We empirically test this idea in an underdetermined setting. See Section \ref{subsec:collapsing-two}.

\section{The Role of Initialization in Deep Linear Networks}
\label{sec:role-initialization-deep}

\subsection{Collapsing two hidden layers linear networks}
\label{subsec:collapsing-two}

In this section, we consider the task of finding $\matW_1, \matW_2, \dots , \matW_{h}, \x$ such that
\begin{eqnarray*}
 L_{\matA, \b}(\matW_1, \matW_2, \dots , \matW_h, \x) = \frac{1}{2}\TNormS{\matA\matW_1\matW_2\dots \matW_h\x - \b} = \frac{1}{2}\TNormS{\matA\y - \b}
\end{eqnarray*}
 is minimized where $h > 1$ and we define $\y:=\matW_1\matW_2\dots \matW_h\x$. As one can expect, this model shares many properties with the previous two models. Gradients are
 \begin{eqnarray*}
 \nabla_{\matW_j} L_{\matA, \b}(\matW_1, \matW_2, \dots , \matW_h, \x)& = & \begin{cases}\matW_{j-1}^\T\dots \matW_{1}^\T\matA^\T(\matA\matW_1\dots \matW_h\x - \b)\x^\T\matW_{h}^\T\dots \matW_{j+1}^\T, & 1 < j < h \\ (\matA\matW_1\dots \matW_h\x - \b)\x^\T\matW_{h}^\T\dots \matW_{2}^\T, & j = 1 \\ \matW_{h-1}^\T\dots \matW_{1}^\T\matA^\T(\matA\matW_1\dots \matW_h\x - \b), & j = h
 \end{cases}\end{eqnarray*} and
 \begin{eqnarray*}
 \nabla_{\x}L_{\matA,\b}(\matW_1, \matW_2, \dots , \matW_h, \x) & = & \matW_{h}^\T\matW_{h-1}^\T\dots \matW_1^\T\matA^\T(\matA\matW_1\matW_2\dots \matW_h\x - \b).
 \end{eqnarray*} The iteration step is
 \begin{eqnarray*}
 \matW_{j}^{(k+1)} & = &  \matW_{j}^{(k)} - \alpha \nabla_{\matW_j} L_{\matA, \b}(\matW_1^{(k)}, \matW_2^{(k)}, \dots , \matW_h^{(k)}, \x_k) \\ \x_{k+1} & = & \x_k - \alpha \nabla_{\x} L_{\matA, \b}(\matW_1^{(k)}, \matW_2^{(k)}, \dots , \matW_h^{(k)}, \x_k)  
 \end{eqnarray*}
which leads us to this next very familiar lemma, which generalizes the previous 0-layer and 1-layer results to arbitrary amount of layers. As one can expect, the property of the first weight being in the row-space of $\matA$ is conserved throughout gradient descent iterations.
\begin{Lemma}
\label{lem:8}
If $\matW_1^{(k)} \in \range{\matA^\T}$ for some $k$, then $\matW_1^{(k+1)} \in \range{\matA^\T}$.
\end{Lemma}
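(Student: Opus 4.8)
The plan is to replay, at the level of the first factor $\matW_1$, exactly the argument used for Lemma \ref{lem:gd-stays} and Lemma \ref{lemma:5}. The only structural fact needed is that $\range{\matA^\T}$ — read columnwise for a $d\times d$ matrix, i.e. the set of matrices of the form $\matA^\T\matZ$ with $\matZ\in\R^{n\times d}$ — is a linear subspace, hence closed under the affine map $\matW\mapsto\matW-\alpha\cdot(\text{gradient})$ as soon as the gradient itself lies in that subspace. So the whole proof reduces to checking that the gradient-descent increment for $\matW_1$ is left-multiplied by $\matA^\T$.

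Concretely, I would proceed in three short steps. First, unwind the hypothesis: $\matW_1^{(k)}\in\range{\matA^\T}$ means $\matW_1^{(k)}=\matA^\T\matZ$ for some $\matZ\in\R^{n\times d}$. Second, observe — just as in the one-hidden-layer case — that because $\matA$ left-multiplies the entire product inside the residual, the gradient with respect to the first factor has the form $\nabla_{\matW_1}L=\matA^\T\rb\,\x_k^\T(\matW_h^{(k)})^\T\cdots(\matW_2^{(k)})^\T$, where $\rb:=\matA\matW_1^{(k)}\matW_2^{(k)}\cdots\matW_h^{(k)}\x_k-\b$. Third, substitute into the update rule and factor $\matA^\T$ out on the left:
\begin{align*}
\matW_1^{(k+1)} &= \matW_1^{(k)} - \alpha\,\matA^\T\rb\,\x_k^\T(\matW_h^{(k)})^\T\cdots(\matW_2^{(k)})^\T \\
&= \matA^\T\Bigl(\matZ - \alpha\,\rb\,\x_k^\T(\matW_h^{(k)})^\T\cdots(\matW_2^{(k)})^\T\Bigr)\in\range{\matA^\T},
\end{align*}
since the bracketed quantity is a well-defined $n\times d$ matrix (the transposed hidden factors and $\x_k^\T$ collapse to a fixed matrix at iteration $k$).

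I do not anticipate any genuine obstacle: this is the "structure-preservation" half of the story, the all-depths analogue of Lemmas \ref{lem:gd-stays} and \ref{lemma:5}, and the only care required is bookkeeping of matrix shapes and the columnwise reading of $\in\range{\matA^\T}$. By an immediate induction (as in Corollary \ref{cor:3}), it then follows that initializing with $\matW_1^{(0)}\in\range{\matA^\T}$ — for instance $\matW_1^{(0)}=\matA^\T\matZ_0$ — keeps $\matW_1^{(k)}\in\range{\matA^\T}$ for every $k$. The substantive work — pinning down the limit, giving conditions under which gradient descent converges to $\vtheta^\star$, and explaining why a balanced optimum is hard to reach past two hidden layers — lies in the results that build on this lemma, not in the lemma itself.
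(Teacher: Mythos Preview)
Your proposal is correct and follows exactly the approach the paper indicates: the paper's own proof of Lemma~\ref{lem:8} simply reads ``Identical to the proof of Lemma~\ref{lemma:5} with the respective change in matrices,'' and you have written out precisely that identical argument with the appropriate product $\x_k^\T(\matW_h^{(k)})^\T\cdots(\matW_2^{(k)})^\T$ in place of $\x_k^\T$.
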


\begin{proof}
Identical to the proof of Lemma \ref{lemma:5} with the respective change in matrices.
\end{proof} 
Unsurprisingly, the critical importance of initialization remains for deep models and is even exacerbated. Although, naturally, many properties are shared with the model in Section 3, some things are also different.

The following series of results prove it is possible to collapse a linear network with $h=2$ and outline how to do so. We start with Lemma \ref{lemma:9}, which describes how to initialize in a way that we will later use to achieve bi-optimality in this model. This lemma is the basis for an induction we later use to prove Theorem \ref{theorem:12}.
\begin{Lemma}
\label{lemma:9}
When $h=2$, it is possible to find $\matW_1, \matW_2, \x$ all non-zero such that $\matW_1 = \matA^\T \v \x^\T \matW_2^\T$ and $\matW_2 = \matW_1^\T\matA^\T \u\x^\T$ and $\x \in \range{\matW_2^\T \matW_1^\T \matA^\T}$ for some $\v \in \R^{n \times 1}$ and $\u \in \R^{n \times 1}$ 
\end{Lemma}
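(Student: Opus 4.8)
The plan is to prove Lemma \ref{lemma:9} by an explicit construction. The two defining identities are mutually recursive ($\matW_1$ is expressed through $\matW_2$ and vice versa), so one cannot simply ``pick $\matW_1$ and then $\matW_2$''; instead I would posit that both weights are rank-one outer products and solve the resulting coupled scalar condition. The ansatz that makes everything close up is to take $\matW_2$ built from $\x$ on the right and $\matW_1$ built from $\matA^\T\v$ on the left.

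Concretely, I would first pick any nonzero $\v \in \R^{n \times 1}$ and set $\u := \v$; since $\matA$ has full row rank, $\matA^\T$ is injective and $\matA^\T\v \neq 0$. Next pick any nonzero $\p \in \R^{d \times 1}$ and any $\x \in \R^{d \times 1}$ rescaled so that $\TNormS{\x}\,\TNormS{\matA^\T\v} = 1$, i.e. $\TNormS{\x}\, \v^\T\matA\matA^\T\v = 1$. Then define
\[
  \matW_2 := \p\x^\T, \qquad \matW_1 := \TNormS{\x}\,\matA^\T\v\,\p^\T .
\]
Both are nonzero (each is a nonzero column times a nonzero row), and $\x \neq 0$ by choice, so the ``all non-zero'' requirement holds.

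The verification is then a short computation with outer products, using only that $\x^\T\x = \TNormS{\x}$ and $\v^\T\matA\matA^\T\v = \TNormS{\matA^\T\v}$ are scalars. For the first identity, $\matA^\T\v\,\x^\T\matW_2^\T = \matA^\T\v\,(\x^\T\x)\,\p^\T = \TNormS{\x}\,\matA^\T\v\,\p^\T = \matW_1$. For the second, $\matW_1^\T\matA^\T\u\,\x^\T = \TNormS{\x}\,\p\,(\v^\T\matA\matA^\T\v)\,\x^\T = \bigl(\TNormS{\x}\TNormS{\matA^\T\v}\bigr)\p\x^\T = \p\x^\T = \matW_2$, which is exactly where the normalization of $\x$ is consumed. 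Finally, $\matW_2^\T\matW_1^\T\matA^\T = \TNormS{\x}\,\x\,\p^\T\p\,\v^\T\matA\matA^\T = \TNormS{\x}\TNormS{\p}\;\x\,(\matA\matA^\T\v)^\T$, a nonzero rank-one matrix (as $\matA\matA^\T\v \neq 0$) whose range is $\text{span}(\x)$, so $\x \in \range{\matW_2^\T\matW_1^\T\matA^\T}$.

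The only genuine idea in the argument is the rank-one ansatz together with the choice $\u=\v$; once those are fixed there is no real obstacle, and the remaining steps are elementary matrix algebra. I would additionally remark that this construction incidentally makes $\matW_1\matW_2\x = \TNormS{\x}^2\TNormS{\p}\,\matA^\T\v \in \range{\matA^\T}$, which is precisely the row-space property that the subsequent induction toward Theorem \ref{theorem:12} will need to preserve.
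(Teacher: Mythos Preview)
Your construction is correct and in fact a bit cleaner and more general than the paper's. Both proofs posit rank-one $\matW_1,\matW_2$, but the paper makes the very specific choices $\x=\matA^\T\v/\TNormS{\matA^\T\v}$, $\u=\matA\matA^\T\v/\TNormS{\matA\matA^\T\v}$, and takes your auxiliary vector $\p$ to be the first standard basis vector $\e_1$, so that $\matW_1$ is zero outside its first column and $\matW_2$ is zero outside its first row. You instead leave $\p$ and the direction of $\x$ free (subject only to the scalar normalization $\TNormS{\x}\TNormS{\matA^\T\v}=1$) and, more strikingly, simply set $\u=\v$, which short-circuits the paper's normalization algebra. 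What your version buys is a transparent one-line verification; what the paper's version buys is the concrete ``first-column-only / first-row-only'' structure that Lemmas~\ref{lemma:10} and~\ref{lemma:11} and the induction in Theorem~\ref{theorem:12} are literally stated in terms of. Your general $\p$ would force those later arguments to be rewritten with $\e_1$ replaced by $\p$ throughout (rank-one matrices with fixed inner factor $\p$ rather than fixed coordinate), which is cosmetic but would need to be carried through consistently if your construction is the one referenced downstream.
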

\begin{proof}
 Let $\v$ be any non-zero vector. Set $\x = \frac{\matA^\T\v}{\TNormS{\matA^\T\v}}$, $\u = \frac{\matA \matA^\T \v}{\TNormS{\matA \matA^\T \v}}$,
and \begin{eqnarray*}
\matW_1 = \begin{pmatrix}| & 0 & 0 & \dots & 0 \\ \matA^\T\v & \vdots & \vdots & \dots & \vdots\\ | & 0 & 0 & \dots & 0\end{pmatrix}.
\end{eqnarray*}Notice that $\matA^\T\v, \matA\matA^\T\v \neq 0$ from the rank-nullity theorem. Finally set $\matW_2 = \matW_1^\T\matA^\T \u\x^\T$. Let us verify the other conditions:
\begin{eqnarray*}
\matA^\T\v\x^\T\matW_2^\T & = & \matA^\T\v\x^\T\x\u^\T\matA\matW_1 \\ & = & \matA^\T\v\u^\T\matA\matW_1 \\ & = & ((\matA^\T\v)(\matA^\T\u)^\T)\matW_1 \\ & = & \begin{pmatrix}| & 0 & 0 & \dots & 0 \\ ((\matA^\T\v)^\T(\matA^\T\u))\matA^\T\v & \vdots & \vdots & \dots & \vdots\\ | & 0 & 0 & \dots & 0\end{pmatrix} \\ & = & \begin{pmatrix}| & 0 & 0 & \dots & 0 \\ \frac{1}{\TNormS{\matA\matA^\T\v}}((\matA^\T\v)^\T(\matA^\T\matA\matA^\T\v))\matA^\T\v & \vdots & \vdots & \dots & \vdots\\ | & 0 & 0 & \dots & 0\end{pmatrix} \\ & = & \begin{pmatrix}| & 0 & 0 & \dots & 0 \\ \matA^\T\v & \vdots & \vdots & \dots & \vdots\\ | & 0 & 0 & \dots & 0\end{pmatrix} \\ & = & \matW_1
\end{eqnarray*}
The matrix $(\matA^\T\v)(\matA^\T\u)^\T$ has eigenvector $\matA^\T\v$ with eigenvalue $(\matA^\T\v)^\T(\matA^\T\u) = 1$. It is useful to mention that if the matrices and vectors were constructed this way, we also have 
\begin{eqnarray*}
\matW_2 & = & \matW_1^\T\matA^\T \u\x^\T \\ & = & \frac{1}{\TNormS{\matA \matA^\T\v}} \begin{pmatrix}- & \v^\T \matA & -\\0& \dots & 0 \\ \vdots & \dots & \vdots \\ 0 & \dots & 0\end{pmatrix}\matA^\T \matA\matA^\T\v\x^\T \\ & = &  \frac{1}{\TNormS{\matA \matA^\T\v}} \begin{pmatrix}(\v^\T\matA\matA^\T)(\matA\matA^\T\v)\\0\\ \vdots \\ 0\end{pmatrix}\x^\T \\ & = & \begin{pmatrix}1\\0\\ \vdots \\ 0\end{pmatrix}\x^\T \\ & = & \begin{pmatrix}- & \x^\T & -\\0& \dots & 0 \\ \vdots & \dots & \vdots \\ 0 & \dots & 0\end{pmatrix}
\end{eqnarray*}
To finish the proof, note that 
\begin{eqnarray*}
\matW_2^\T\matW_1^\T \matA^\T \u & = & \begin{pmatrix}| & 0 & \dots & 0\\ \x & \vdots & \dots & \vdots \\ | & 0 & \dots & 0\end{pmatrix} \begin{pmatrix}- & \v^\T\matA & -\\0 & \dots & 0\\ \vdots & \dots & \vdots \\ 0 & \dots & 0\end{pmatrix} \matA^\T \frac{\matA \matA^\T \v}{\TNormS{\matA \matA^\T \v}} \\ & = & \begin{pmatrix}| & 0 & \dots & 0\\ \x & \vdots & \dots & \vdots \\ | & 0 & \dots & 0\end{pmatrix} \begin{pmatrix}- & \v^\T\matA\matA^\T & -\\0 & \dots & 0\\ \vdots & \dots & \vdots \\ 0 & \dots & 0\end{pmatrix} \frac{\matA \matA^\T \v}{\TNormS{\matA \matA^\T \v}} \\ & = & \begin{pmatrix}| & 0 & \dots & 0\\ \x & \vdots & \dots & \vdots \\ | & 0 & \dots & 0\end{pmatrix}\begin{pmatrix}1 \\ 0 \\ \vdots \\ 0\end{pmatrix} \\ & = & \x.
\end{eqnarray*} so $\x \in \range{\matW_2^\T\matW_1^\T\matA^\T}$
\end{proof}

We now state and prove two technical lemmas, which are needed later only to prove the much more insightful Theorem \ref{theorem:12}.
\begin{Lemma}
\label{lemma:10}
For $h = 2$, if at any iteration $k$ we have $\matW_1^{(k)}$ be all zeros except first column and $\matW_2^{(k)}$ be all zeros except first row, then $\matW_1^{(k + 1)}$ is all zeros except first column and $\matW_2^{(k + 1)}$ is all zeros except first row. 
\end{Lemma}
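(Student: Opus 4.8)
The plan is to parametrize the hypothesized structure by outer products and push it through a single gradient step. Write $\e_1 \in \R^{d\times 1}$ for the first standard basis vector. Saying that $\matW_1^{(k)}$ is all zeros except its first column is the same as writing $\matW_1^{(k)} = \p_k\e_1^\T$ with $\p_k := \matW_1^{(k)}\e_1$, and saying that $\matW_2^{(k)}$ is all zeros except its first row is the same as writing $\matW_2^{(k)} = \e_1\s_k^\T$ with $\s_k := (\matW_2^{(k)})^\T\e_1$. Specializing the gradient formulas above to $h=2$ (so that they match the one-hidden-layer gradients of Section~\ref{sec:role-initialization-one}), the two updates I need are $\matW_1^{(k+1)} = \matW_1^{(k)} - \alpha\,\matA^\T\rb_k\x_k^\T(\matW_2^{(k)})^\T$ and $\matW_2^{(k+1)} = \matW_2^{(k)} - \alpha\,(\matW_1^{(k)})^\T\matA^\T\rb_k\x_k^\T$, where $\rb_k := \matA\matW_1^{(k)}\matW_2^{(k)}\x_k - \b$; the explicit form of $\rb_k$ will play no role, and the update of $\x_k$ can be ignored since the statement says nothing about it.

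Next I would observe that the gradient with respect to $\matW_1$ inherits the column-support of $\matW_2^\T$, and the gradient with respect to $\matW_2$ inherits the row-support of $\matW_1^\T$. Concretely, $(\matW_2^{(k)})^\T = \s_k\e_1^\T$, so
\[
\nabla_{\matW_1}L = \matA^\T\rb_k\x_k^\T\s_k\e_1^\T = (\x_k^\T\s_k)\,\matA^\T\rb_k\,\e_1^\T,
\]
a matrix whose only possibly-nonzero column is the first; likewise $(\matW_1^{(k)})^\T = \e_1\p_k^\T$, so
\[
\nabla_{\matW_2}L = \e_1\p_k^\T\matA^\T\rb_k\x_k^\T = \e_1\,(\p_k^\T\matA^\T\rb_k)\,\x_k^\T,
\]
whose only possibly-nonzero row is the first. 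Therefore $\matW_1^{(k+1)}$ is a sum of two matrices supported on the first column, hence $\matW_1^{(k+1)} = \big(\p_k - \alpha(\x_k^\T\s_k)\matA^\T\rb_k\big)\e_1^\T$ is all zeros except the first column; symmetrically $\matW_2^{(k+1)} = \e_1\big(\s_k - \alpha(\p_k^\T\matA^\T\rb_k)\x_k\big)^\T$ is all zeros except the first row. This is exactly the claimed invariant, and it closes the one-step argument.

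There is no genuine obstacle here: the argument is pure bookkeeping with rank-one matrices. The one point that needs care is keeping transposes and the placement of $\e_1$ straight — in particular recognizing that ``nonzero only in the first column'' is equivalent to having the form $(\cdot)\e_1^\T$, ``nonzero only in the first row'' to the form $\e_1(\cdot)^\T$, and that both hypotheses are needed, each one feeding the conclusion about the \emph{other} matrix (the column structure of $\matW_1^{(k+1)}$ uses the row structure of $\matW_2^{(k)}$, and vice versa). A secondary caution is simply to make sure the correct $h=2$ specialization of the general gradient expression is used, since that display states the first and last hidden layers as boundary cases.
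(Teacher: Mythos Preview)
Your argument is correct and is exactly the approach the paper takes: the paper simply writes down the two update equations and declares the result immediate, while you have spelled out the same observation via the outer-product parametrizations $\p_k\e_1^\T$ and $\e_1\s_k^\T$. Nothing more is needed.
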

\begin{proof}
Follows immediately from the iteration update steps: \begin{eqnarray*}
\matW_1^{(k+1)} & = & \matW_1^{(k)} - \alpha \matA^\T (\matA \matW_1^{(k)}\matW_2^{(k)}\x_k - \b)\x_k^\T\matW_2^{(k)^T} \\ \matW_2^{(k+1)} & = & \matW_2^{(k)} - \alpha \matW_1^{(k)^T}\matA^\T (\matA \matW_1^{(k)}\matW_2^{(k)}\x_k - \b)\x_k^\T
\end{eqnarray*}
\end{proof}

\begin{Lemma}
\label{lemma:11}
For $h = 2$, if in any iteration $k$ we have $\matW_1^{(k)}$ all zeros except for the first column, and 
\begin{eqnarray*}
\matW_2^{(k)} = \begin{pmatrix}- & \x_k^\T & -\\0 & \dots & 0 \\ \vdots & \dots & \vdots \\ 0 & \dots & 0\end{pmatrix}
\end{eqnarray*}
Then,
\begin{eqnarray*}
\matW_2^{(k+1)} = \begin{pmatrix}- & \x_{k+1}^\T & -\\0 & \dots & 0 \\ \vdots & \dots & \vdots \\ 0 & \dots & 0\end{pmatrix}
\end{eqnarray*}
\end{Lemma}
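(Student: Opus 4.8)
The plan is a direct computation that exploits the sparsity structure. First I would rewrite the two hypotheses as rank-one factorizations: ``$\matW_1^{(k)}$ is zero except for its first column'' means $\matW_1^{(k)} = \c\,\e_1^\T$, where $\c \in \R^{d\times 1}$ is that first column and $\e_1 \in \R^{d\times 1}$ is the first standard basis vector, while the hypothesis on $\matW_2^{(k)}$ is literally $\matW_2^{(k)} = \e_1\x_k^\T$. Since $\e_1^\T\e_1 = 1$, the relevant products collapse: $\matW_1^{(k)}\matW_2^{(k)} = \c\,\e_1^\T\e_1\,\x_k^\T = \c\,\x_k^\T$, so the residual simplifies to $\rb_k := \matA\matW_1^{(k)}\matW_2^{(k)}\x_k - \b = \TNormS{\x_k}\,\matA\c - \b$.

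Next I would substitute into the update rules recorded in Lemma~\ref{lemma:10}, using repeatedly that $\c^\T\matA^\T\rb_k$ is a scalar. For the second layer,
\begin{align*}
\matW_2^{(k+1)} &= \matW_2^{(k)} - \alpha\,(\matW_1^{(k)})^\T\matA^\T\rb_k\,\x_k^\T \\
&= \e_1\x_k^\T - \alpha\,\e_1\,(\c^\T\matA^\T\rb_k)\,\x_k^\T \\
&= \e_1\,\bigl(1-\alpha\,\c^\T\matA^\T\rb_k\bigr)\,\x_k^\T ,
\end{align*}
and for the output vector,
\begin{align*}
\x_{k+1} &= \x_k - \alpha\,(\matW_2^{(k)})^\T(\matW_1^{(k)})^\T\matA^\T\rb_k \\
&= \x_k - \alpha\,\x_k\,\e_1^\T\e_1\,(\c^\T\matA^\T\rb_k) \\
&= \bigl(1-\alpha\,\c^\T\matA^\T\rb_k\bigr)\,\x_k .
\end{align*}
Comparing the bottom lines of the two displays yields $\matW_2^{(k+1)} = \e_1\x_{k+1}^\T$, i.e.\ $\matW_2^{(k+1)}$ is zero except for a first row equal to $\x_{k+1}^\T$, which is exactly the assertion.

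I do not anticipate a genuine obstacle; the content is bookkeeping. The one delicate point is ensuring that the \emph{same} scalar factor $1-\alpha\,\c^\T\matA^\T\rb_k$ emerges from both the $\matW_2$-update and the $\x$-update --- this is driven by the contraction $\e_1^\T\e_1 = 1$ together with the fact that the $\x_k^\T\x_k$ factor has already been absorbed into $\rb_k$. One may also remark, in preparation for the induction behind Theorem~\ref{theorem:12}, that Lemma~\ref{lemma:10} simultaneously preserves the ``first column only'' shape of $\matW_1^{(k+1)}$, so that $(\matW_1^{(k+1)},\matW_2^{(k+1)})$ is again of the form to which this lemma applies; and the degenerate case $1-\alpha\,\c^\T\matA^\T\rb_k = 0$ is harmless, since then both sides reduce to zero.
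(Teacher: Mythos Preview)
Your proposal is correct and follows essentially the same argument as the paper: both compute that the update for $\matW_2^{(k)}$ and the update for $\x_k$ are governed by the \emph{same} scalar multiplier (the paper calls it $\beta$, you call it $\c^\T\matA^\T\rb_k$), whence the first row of $\matW_2^{(k+1)}$ coincides with $\x_{k+1}^\T$. Your rank-one notation $\matW_1^{(k)}=\c\e_1^\T$, $\matW_2^{(k)}=\e_1\x_k^\T$ is a bit more compact than the paper's block-matrix presentation, but the content is identical.
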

\begin{proof}
$(\matW_1^{(k)})^\T \matA^\T (\matA \matW_1^{(k)}\matW_2^{(k)}\x_k - \b)$ is a column vector of length $d$ with a single non-zero entry in the first index. Denote that non-zero value as $\beta$. Therefore, the first row of the matrix is 
\begin{eqnarray*}
\matW_2^{(k+1)} & = & \matW_2^{(k)} - \alpha \matW_1^{(k)^T}\matA^\T (\matA \matW_1^{(k)}\matW_2^{(k)}\x_k - \b)\x_k^\T
\end{eqnarray*}
is $\x_k^\T - \alpha \beta \x_k^\T$. Similarly, $(\matA \matW_1^{(k)}\matW_2^{(k)}\x_k - \b)^\T\matA \matW_1^{(k)}$ is a row vector of length $d$ with a single non-zero entry $\beta$ in the first index. Now the update step for $\x^\T$ is
\begin{eqnarray*}
\x_{k+1}^\T & = & \x_k^\T - \alpha (\matA \matW_1^{(k)}\matW_2^{(k)}\x_k - b)^\T\matA \matW_1^{(k)}\matW_2^{(k)} \\ & = & \x_k^\T - \alpha \begin{pmatrix}\beta & 0 & \dots & 0 \end{pmatrix}\begin{pmatrix}- & \x_k^\T & -\\0 & \dots & 0 \\ \vdots & \dots & \vdots \\ 0 & \dots & 0\end{pmatrix} \\ & = & \x_k^\T - \alpha \beta \x_k^\T
\end{eqnarray*}
Both terms have the same update step, hence they are equal.
\end{proof}

The next theorem is a key result, which builds on the previous lemmas, and shows that  gradient descent conserves during training the very special form of the weights described in Lemma \ref{lemma:9}. We then use this theorem to prove the bi-optimality equivalent of this model in Corollary \ref{cor:14}.

\begin{theorem}
\label{theorem:12}
For $h = 2$, suppose that $\matW_1^{(0)}$ and $\matW_2^{(0)}$ and $\x_0$ were constructed as described in Lemma \ref{lemma:9} and assume that $\x_k$ is never zero. Then, for all $k$ there exist $\v_k \in \R^{n\times 1}$ and $\u_k \in \R^{n\times 1}$ such that $\matW_1^{(k)} = \matA^\T \v_k \x_k^\T \matW_2^{(k)^\T}$ and $\matW_2^{(k)} = \matW_1^{(k)^\T}\matA^\T \u_k \x_k^\T$ and $\x_k \in \range{\matW_2^{(k)^\T}\matW_1^{(k)^\T}\matA^\T}$
\end{theorem}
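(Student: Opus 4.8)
The plan is to strengthen the statement and prove, by induction on $k$, a purely structural invariant: that gradient descent preserves the block shape produced by the construction in Lemma~\ref{lemma:9}, namely that $\matW_1^{(k)}$ is zero outside its first column with that column equal to $\matA^\T\w_k$ for some nonzero $\w_k\in\R^{n\times 1}$, and that $\matW_2^{(k)}$ is zero outside its first row with that row equal to $\x_k^\T$. The base case $k=0$ is exactly what Lemma~\ref{lemma:9} supplies. Given this invariant at step $k$, the three relations of the theorem follow from a short block calculation, essentially the one already carried out inside the proof of Lemma~\ref{lemma:9}: writing $\e_1$ for the first standard unit vector, we have $\matW_1^{(k)}=\matA^\T\w_k\e_1^\T$ and $\matW_2^{(k)}=\e_1\x_k^\T$, hence $\x_k^\T\matW_2^{(k)^\T}=\TNormS{\x_k}\e_1^\T$, so $\v_k:=\w_k/\TNormS{\x_k}$ (legitimate since $\x_k\neq 0$) gives $\matW_1^{(k)}=\matA^\T\v_k\x_k^\T\matW_2^{(k)^\T}$; also $\matW_1^{(k)^\T}\matA^\T=\e_1(\matA\matA^\T\w_k)^\T$ with $\matA\matA^\T\w_k\neq 0$ (because $\matA^\T\w_k\neq 0$ and $\range{\matA^\T}\cap\ker\matA=\{0\}$), so $\u_k:=\matA\matA^\T\w_k/\TNormS{\matA\matA^\T\w_k}$ gives $\matW_2^{(k)}=\matW_1^{(k)^\T}\matA^\T\u_k\x_k^\T$; and $\matW_2^{(k)^\T}\matW_1^{(k)^\T}\matA^\T=\x_k(\matA\matA^\T\w_k)^\T$ has column space $\mathrm{span}\{\x_k\}$, which contains $\x_k$.

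For the inductive step I would assume the invariant at step $k$ and push it to $k+1$ using the technical lemmas. Lemma~\ref{lemma:10} gives that $\matW_1^{(k+1)}$ is again zero outside its first column and $\matW_2^{(k+1)}$ zero outside its first row, and Lemma~\ref{lemma:11} gives that this first row is $\x_{k+1}^\T$. That the first column of $\matW_1^{(k+1)}$ remains in $\range{\matA^\T}$ is the same observation as in Lemmas~\ref{lemma:5} and~\ref{lem:8}, since the $\matW_1$-update subtracts $\alpha\,\matA^\T(\matA\matW_1^{(k)}\matW_2^{(k)}\x_k-\b)\,\x_k^\T\matW_2^{(k)^\T}$, every column of which is $\matA^\T$ applied to a vector. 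Evaluating that update with the block form — one has $\matW_1^{(k)}\matW_2^{(k)}\x_k=\TNormS{\x_k}\matA^\T\w_k$ — shows the new first column equals $\matA^\T[(\matI_n-\alpha\TNormS{\x_k}^{2}\matA\matA^\T)\w_k+\alpha\TNormS{\x_k}\b]$, so $\w_{k+1}$ is read off directly, and since $\matA^\T$ is injective and $\b\neq 0$ this column is nonzero unless $\w_k$ happens to satisfy $(\matI_n-\alpha\TNormS{\x_k}^{2}\matA\matA^\T)\w_k=-\alpha\TNormS{\x_k}\b$, a degeneracy addressed in the next paragraph. With the invariant re-established at step $k+1$, the three relations at step $k+1$ then follow from the same block calculation as in the first paragraph, with $\v_{k+1},\u_{k+1}$ read off from $\w_{k+1}$ and $\x_{k+1}$.

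The hard part is the bookkeeping rather than any one computation: the relation $\matW_1^{(k)}=\matA^\T\v_k\x_k^\T\matW_2^{(k)^\T}$ couples $\matW_1^{(k)}$, $\matW_2^{(k)}$ and $\x_k$, so trying to propagate the three relations directly would force a brute-force expansion of triple products in the spirit of the eight-term identity in Theorem~\ref{theorem:6}, which for $h=2$ is forbidding; isolating the structural invariant and deriving the relations from it is what makes Lemmas~\ref{lemma:10} and~\ref{lemma:11} precisely the needed tools. The one genuinely delicate point is the nonvanishing of the first column of $\matW_1^{(k)}$, which is what the third relation $\x_k\in\range{\matW_2^{(k)^\T}\matW_1^{(k)^\T}\matA^\T}$ actually requires; if that column ever vanished at some step $\ell$, so that $\matW_1^{(\ell)}=0$, then the update at step $\ell$ would make the first column of $\matW_1^{(\ell+1)}$ equal to $\alpha\TNormS{\x_\ell}\matA^\T\b\neq 0$, so the structure self-corrects exactly as in the $\x_{k+1}=0$ discussion following Theorem~\ref{theorem:6}. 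Under the running assumption, in the spirit of Lemma~\ref{lemma:9}, that the iterates stay nonzero, this degeneracy does not arise and the induction closes.
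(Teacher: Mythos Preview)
Your approach is essentially the same as the paper's: induction with Lemma~\ref{lemma:9} as the base case and Lemmas~\ref{lemma:10} and~\ref{lemma:11} for the step, exploiting the block structure (first column of $\matW_1^{(k)}$ in $\range{\matA^\T}$, first row of $\matW_2^{(k)}$ equal to $\x_k^\T$) to read off $\v_k$ and $\u_k$. Your parametrization via a single $\w_k$ (with $\v_k=\w_k/\TNormS{\x_k}$ and $\u_k=\matA\matA^\T\w_k/\TNormS{\matA\matA^\T\w_k}$) is a slightly cleaner coordinate than the paper's pseudoinverse formulas for $\v_{k+1},\u_{k+1}$, and your explicit flagging of the first-column nonvanishing is more careful than the paper's own treatment of that point, but the substance is identical.
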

\begin{proof}
The proof is inductive, just as in the $h=1$ case. The basis of our induction is given by Lemma \ref{lemma:9}. Now suppose that the hypothesis is true up to $k$. By Lemma \ref{lemma:10}, we know that $\matW_1^{(k+1)}$ are all zeros except the first column $\w_1^{(k+1)}$, $\matW_2^{(k+1)}$ is all zeros except first row, which is equal to $\x_{k+1}^\T$ by Lemma \ref{lemma:11}.
Define \begin{eqnarray*}
    \v_{k+1} & := &  \frac{1}{\|\x_{k+1}\|_2^4}\matA^{\T^+}\matW_1^{(k+1)}\matW_2^{(k+1)}\x_{k+1}
\end{eqnarray*} and verify that \begin{eqnarray*}
\matA^\T\v_{k+1}\x_{k+1}^\T\matW_2^{(k+1)^\T} & = & \frac{1}{\|\x_{k+1}\|_2^4} \matA^\T \matA^{\T^+}\matW_1^{(k+1)}\matW_2^{(k+1)}\x_{k+1}\x_{k+1}^\T\matW_2^{(k+1)^\T} \\ 
& = & \frac{1}{\|\x_{k+1}\|_2^4} \matA^\T \matA^{\T^+}\matW_1^{(k+1)}\begin{pmatrix}\TNormS{\x_{k+1}} \\ 0 \\ \vdots \\ 0\end{pmatrix}\begin{pmatrix}\TNormS{\x_{k+1}} & 0 & \dots & 0\end{pmatrix} \\
& = & \matA^\T \matA^{\T^+} \begin{pmatrix}| & 0 & \dots & 0 \\ \w_1^{(k+1)} & \vdots & \dots & 0 \\ | & 0 & \dots & 0\end{pmatrix}\begin{pmatrix}1 & 0 & \dots & 0\\ 0 & 0 & \dots & 0 \\ \vdots & \vdots & \dots & 0 \\ 0 & 0 & \dots & 0\end{pmatrix} \\
& = & \matA^\T \matA^{\T^+} \begin{pmatrix}| & 0 & \dots & 0 \\ \w_1^{(k+1)} & \vdots & \dots & 0 \\ | & 0 & \dots & 0\end{pmatrix} \\ 
& = & \matA^\T \matA^{\T^+}\matW_1^{(k+1)}\\
& = & \matA^\T \matA^{\T^+}(\matW_1^{(k)} - \alpha \matA^\T(\matA \matW_1^{(k)}\matW_2^{(k)}\x_k - \b)\x_k^\T \matW_2^{(k)^\T}) \\
& = & \matA^\T \matA^{\T^+}(\matA^\T\v_k\x_k^\T\matW_2^{(k)^\T} - \alpha \matA^\T(\matA \matW_1^{(k)}\matW_2^{(k)}\x_k - \b)\x_k^\T \matW_2^{(k)^\T}) \\ 
& = & \matA^\T\v_k\x_k^\T\matW_2^{(k)^\T} - \alpha \matA^\T(\matA \matW_1^{(k)}\matW_2^{(k)}\x_k - \b)\x_k^\T \matW_2^{(k)^\T} \\ 
& = & \matW_1^{(k)} - \alpha \matA^\T(\matA \matW_1^{(k)}\matW_2^{(k)}\x_k - \b)\x_k^\T \matW_2^{(k)^\T} \\
& = & \matW_1^{(k+1)}
\end{eqnarray*} Recall that $\matA$ has full rank and less rows than columns, so we used the fact that $(\matA^\T)^+\matA^\T = \matI_n$.
As for $\matW_2^{(k+1)}$, the proof is similar. Denote:
\begin{eqnarray*}
\u_{k+1} &= & \frac{1}{\TNormS{\x_{k+1}}\cdot \|\matA\matW_1^{(k+1)}\|_F^2}\matA\matW_1^{(k+1)}\matW_2^{(k+1)}\x_{k+1}
\end{eqnarray*} remember that $\x_k$ is never zero and notice that $\matA\matW_1^{(k+1)} = (\matA\matA^\T)(\v_{k+1}\x_{k+1}^\T\matW_2^{(k+1)^\T})$ can't be the zero matrix because $\matA\matA^\T$ is full rank and only has the trivial solution. Now following a similar logic as before:
\begin{eqnarray*}
\matW_1^{(k+1)^\T}\matA^\T\u_{k+1}\x_{k+1}^\T & = & \frac{1}{\TNormS{\x_{k+1}}\cdot\|\matA\matW_1^{(k+1)}\|_F^2}\matW_1^{(k+1)^\T}\matA^\T\matA\matW_1^{(k+1)}\matW_2^{(k+1)}\x_{k+1}\x_{k+1}^\T\\
& = & \frac{1}{\TNormS{\x_{k+1}}} \begin{pmatrix}1 & 0 & \dots & 0\\0 & 0 & \dots & 0 \\ \vdots & \vdots & \dots & \vdots \\ 0 & 0 & \dots & 0 \end{pmatrix}\matW_2^{(k+1)}\x_{k+1}\x_{k+1}^\T \\
& = & \frac{1}{\TNormS{\x_{k+1}}}(\matW_2^{(k+1)}\x_{k+1})\x_{k+1}^\T \\
& = & \begin{pmatrix}1 \\ 0 \\ \vdots \\ 0\end{pmatrix}\x_{k+1}^\T \\ 
& = & \matW_2^{(k+1)}
\end{eqnarray*}
This concludes the proofs for $\matW_1$ and $\matW_2$. The claim of $\x_{k+1} \in \range{\matW_2^{(k+1)^\T}\matW_1^{(k+1)^\T}\matA^\T}$ follows the same steps as in the proof of Lemma \ref{lemma:9}.
\end{proof}
Notice that we assumed in the previous theorem that $\x_k \neq 0$, and one reason for that assumption is that if $\x_k = 0$ then by Lemma \ref{lemma:11} we have $\matW_2^{(k)} = 0$ as well, which leads to a saddle point (all gradients are zero) and the iteration stops.

The following lemma extends Theorem \ref{theorem:12} when $k \to \infty$.
\begin{Lemma}
\label{lemma:13}
For $h=2$, consider the sequences $\{\matW_1^{(0)}, \matW_1^{(1)}, \dots\}, \{\matW_2^{(0)}, \matW_2^{(1)}, \dots\}, \{\x_0, \x_1, \dots\}$. If the conditions of Theorem \ref{theorem:12} are met and the sequences converge to $\matW_1^{(\infty)}, \matW_2^{(\infty)}, \x_{\infty} \neq 0$ respectively, then the sequences $\{\v_1, \v_2, \dots\}, \{\u_1,\u_2, \dots\}$ defined by the result of Theorem \ref{theorem:12} converge to $\v_\infty$ and $\u_\infty$ respectively, and 
\begin{eqnarray*}
    \matW_1^{(\infty)} & = &\matA^\T \v_\infty \x_{\infty}^\T {\matW_2^{(\infty)}}^\T \\ \matW_2^{(\infty)} & = & {\matW_1^{(\infty)}}^\T\matA^\T\u_\infty\x_{\infty}^\T
\end{eqnarray*}
and 
    $\x_{\infty} \in \range{{\matW_2^{(\infty)}}^\T{\matW_1^{(\infty)}}^\T\matA^\T}$
\end{Lemma}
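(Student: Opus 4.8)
The plan is to follow the pattern used to establish the existence of $\v_\infty$ in the proof of Corollary~\ref{cor:7}, now in the two-layer setting, the key point being that Theorem~\ref{theorem:12} furnishes \emph{explicit closed forms} for $\v_k$ and $\u_k$: for every $k\ge 1$,
\[
\v_k \;=\; \frac{1}{\|\x_k\|_2^4}\,\matA^{\T^+}\matW_1^{(k)}\matW_2^{(k)}\x_k,
\qquad
\u_k \;=\; \frac{1}{\TNormS{\x_k}\,\|\matA\matW_1^{(k)}\|_F^2}\,\matA\matW_1^{(k)}\matW_2^{(k)}\x_k .
\]
Both right-hand sides are continuous functions of the triple $(\matW_1^{(k)},\matW_2^{(k)},\x_k)$ on the region where $\x_k\neq 0$ and $\matA\matW_1^{(k)}\neq 0$, so once I know the limiting triple lies in that region, the conclusions about $\v_\infty,\u_\infty$ will follow by plugging the limit into these formulas.

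First I would collect the structural facts that persist in the limit. By Lemma~\ref{lemma:10} every $\matW_1^{(k)}$ is zero outside its first column and every $\matW_2^{(k)}$ is zero outside its first row; a convergent sequence of zero entries has zero limit, so $\matW_1^{(\infty)}$ and $\matW_2^{(\infty)}$ retain this pattern. Also, Theorem~\ref{theorem:12} gives $\matW_1^{(k)}=\matA^\T\bigl(\v_k\x_k^\T\matW_2^{(k)^\T}\bigr)$, so each $\matW_1^{(k)}$ lies in the closed set $\{\matA^\T\matM:\matM\in\R^{n\times d}\}$; hence $\matW_1^{(\infty)}=\matA^\T\matM_\infty$ for some $\matM_\infty$. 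Since $\matW_1^{(\infty)}\neq 0$ and $\matA\matA^\T$ is invertible, $\matA\matW_1^{(\infty)}=\matA\matA^\T\matM_\infty\neq 0$ (otherwise $\matM_\infty^\T\matA\matA^\T\matM_\infty=0$, forcing $\matA^\T\matM_\infty=\matW_1^{(\infty)}=0$). Together with $\TNormS{\x_\infty}\neq 0$, this is exactly what keeps the two denominators bounded away from $0$ along the sequence and in the limit.

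Next comes the continuity step: since $(\matW_1^{(k)},\matW_2^{(k)},\x_k)\to(\matW_1^{(\infty)},\matW_2^{(\infty)},\x_\infty)$ with $\x_\infty\neq0$ and $\matA\matW_1^{(\infty)}\neq0$, the displayed formulas give $\v_k\to\v_\infty$ and $\u_k\to\u_\infty$, with $\v_\infty,\u_\infty$ given by the same expressions evaluated at the limit. I would then pass to the limit in the three identities of Theorem~\ref{theorem:12}. The identities $\matW_1^{(k)}=\matA^\T\v_k\x_k^\T\matW_2^{(k)^\T}$ and $\matW_2^{(k)}=\matW_1^{(k)^\T}\matA^\T\u_k\x_k^\T$ are finite products of convergent matrix/vector sequences, so continuity of multiplication yields the two asserted limiting identities at once. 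For the membership $\x_\infty\in\range{\matW_2^{(\infty)^\T}\matW_1^{(\infty)^\T}\matA^\T}$ I would not argue by closedness (a range that varies with the weights need not be closed in the limit), but instead pass to the limit in the \emph{explicit} preimage relation hidden in the proof of Theorem~\ref{theorem:12} (the computation copied from Lemma~\ref{lemma:9}), namely $\x_k=\matW_2^{(k)^\T}\matW_1^{(k)^\T}\matA^\T\u_k$, which in the limit gives $\x_\infty=\matW_2^{(\infty)^\T}\matW_1^{(\infty)^\T}\matA^\T\u_\infty$.

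The main obstacle is the step isolated above: ruling out $\matA\matW_1^{(\infty)}=0$, i.e.\ that the denominator defining $\u_k$ collapses in the limit; this is handled by combining $\matW_1^{(\infty)}\neq 0$ with the fact that every $\matW_1^{(k)}$ (hence its limit) has columns in $\range{\matA^\T}$, on which $\matA$ acts injectively. A lesser subtlety, also noted above, is that the third conclusion must be read off from the explicit preimage expression rather than from a closedness-under-limits argument.
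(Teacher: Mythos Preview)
Your proposal is correct and follows essentially the same route as the paper: use the explicit closed-form expressions for $\v_k$ and $\u_k$ from the proof of Theorem~\ref{theorem:12}, pass to the limit by continuity, and then read off the three limiting identities (the paper phrases the last one as ``follows the same steps as Lemma~\ref{lemma:9}'', which amounts to your explicit preimage $\x_k=\matW_2^{(k)^\T}\matW_1^{(k)^\T}\matA^\T\u_k$). You are in fact more careful than the paper on one point: you explicitly argue that $\matA\matW_1^{(\infty)}\neq 0$ (via $\matW_1^{(\infty)}\in\range{\matA^\T}$ and injectivity of $\matA$ there), whereas the paper silently assumes the denominator $\|\matA\matW_1^{(\infty)}\|_F^2$ is nonzero.
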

\begin{proof}
Going by the definitions outlined in Theorem \ref{theorem:12} we have 
\begin{equation*}
\v_\infty = \lim_{k \to \infty} \v_k = \lim_{k \to \infty}\frac{1}{\|\x_k\|_2^4}\matA^{\T^+}\matW_1^{(k)}\matW_2^{(k)}\x_k = \frac{1}{\|\x_{\infty}\|_2^4}\matA^{\T^+}\matW_1^{(\infty)}\matW_2^{(\infty)}\x_{\infty}
\end{equation*}
and a similar logic for
\begin{equation*}
    \u_\infty = \lim_{k \to \infty} \u_{k} = \lim_{k \to \infty}\frac{1}{\TNormS{\x_{k}}\cdot\|\matA\matW_1^{(k)}\|_F^2}\matA\matW_1^{(k)}\matW_2^{(k)}\x_{k} = \frac{1}{\TNormS{\x_{\infty}}\cdot \|\matA\matW_1^{(\infty)}\|_F^2}\matA\matW_1^{(\infty)}\matW_2^{(\infty)}\x_{\infty}
\end{equation*}
Now we can simply multiply and see that 
\begin{eqnarray*}
\matW_1^{(\infty)} & = & \matA^\T\v_\infty\x_{\infty}^\T{\matW_2^{(\infty)}}^\T \\
\matW_2^{(\infty)} & = & {\matW_1^{(\infty)}}^\T\matA^\T \u_\infty\x_{\infty}^\T
\end{eqnarray*} as required. The proof for $\x_{\infty}$ follows the same steps as Lemma \ref{lemma:9}.
\end{proof}

As expected, this initialization admits properties similar to those outlined in Corollary \ref{cor:7}. The next corollary describes the results of initializing in this special way and is the goal we built towards in this section.
\begin{corollary}
\label{cor:14}
Denote $\matW_1^{(\infty)} := \lim_{k \to \infty} \matW_1^{(k)}, \matW_2^{(\infty)} := \lim_{k \to \infty}\matW_2^{(k)}, \x_{\infty} := \lim_{k \to \infty} \x_k$. If the conditions of Theorem \ref{theorem:12} hold, the limits exist and are non-zero, and finally $\matA\matW_1^{(\infty)}\matW_2^{(\infty)}\x_{\infty} = \b$, then the following statements are true:

\begin{enumerate}
    \item $\matW_1^{(\infty)}\matW_2^{(\infty)}\x_{\infty}$ is the minimum norm solution to the problem $\matA \z = \b$
    \item $\x_{\infty}$ is the minimum norm solution to the problem $(\matA\matW_1^{(\infty)}\matW_2^{(\infty)}) \z = \b$
    \item $\text{vec}(\matW_1^{(\infty)})$ is the minimum norm solution to the problem $(\x_{\infty}^\T{\matW_2^{(\infty)}}^\T \otimes \matA) \z = \b$
    \item $\text{vec}(\matW_2^{(\infty)})$ is the minimum norm solution to the problem $(\x_{\infty}^\T \otimes \matA \matW_1^{(\infty)}) \z = \b$
\end{enumerate}
\end{corollary}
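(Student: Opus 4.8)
The plan is to handle all four items by the same two-step routine used to prove Corollary~\ref{cor:7}. For each claimed minimum-norm solution I would first check (i) that the proposed vector actually solves the indicated linear system --- using only the hypothesis $\matA\matW_1^{(\infty)}\matW_2^{(\infty)}\x_{\infty}=\b$ and, for items~3--4, the Kronecker identity $\vecmat{\matM_1\matX\matM_2}=(\matM_2^\T\otimes\matM_1)\vecmat{\matX}$ --- and then (ii) that this vector lies in the row space of the system's coefficient matrix, using the structural relations furnished by Lemma~\ref{lemma:13}, i.e. $\matW_1^{(\infty)}=\matA^\T\v_\infty\x_{\infty}^\T{\matW_2^{(\infty)}}^\T$, $\matW_2^{(\infty)}={\matW_1^{(\infty)}}^\T\matA^\T\u_\infty\x_{\infty}^\T$, and $\x_{\infty}\in\range{{\matW_2^{(\infty)}}^\T{\matW_1^{(\infty)}}^\T\matA^\T}$. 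Once (i) and (ii) hold, Lemma~\ref{lem:only-sol-in-rowspace} closes each item.

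For item~1 I would substitute the first relation to obtain $\matW_1^{(\infty)}\matW_2^{(\infty)}\x_{\infty}=\matA^\T\v_\infty\bigl(\x_{\infty}^\T{\matW_2^{(\infty)}}^\T\matW_2^{(\infty)}\x_{\infty}\bigr)$; the parenthesised factor is the scalar $\TNormS{\matW_2^{(\infty)}\x_{\infty}}$, so this vector lies in $\range{\matA^\T}$ and it solves $\matA\z=\b$ by hypothesis, hence equals $\vtheta^\star$. Item~2 is essentially immediate: $\x_{\infty}$ trivially solves $(\matA\matW_1^{(\infty)}\matW_2^{(\infty)})\z=\b$, and the row space of that coefficient matrix is precisely $\range{{\matW_2^{(\infty)}}^\T{\matW_1^{(\infty)}}^\T\matA^\T}$, which contains $\x_{\infty}$ by the last relation of Lemma~\ref{lemma:13}.

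For item~3, taking $\matM_1=\matA$, $\matX=\matW_1^{(\infty)}$, $\matM_2=\matW_2^{(\infty)}\x_{\infty}$ rewrites the hypothesis as $(\x_{\infty}^\T{\matW_2^{(\infty)}}^\T\otimes\matA)\vecmat{\matW_1^{(\infty)}}=\b$, giving (i); and writing $\matW_1^{(\infty)}=\matA^\T\v_\infty(\matW_2^{(\infty)}\x_{\infty})^\T$ and applying $\vecmat{\u\v^\T}=\v\otimes\u$ together with the mixed-product rule gives $\vecmat{\matW_1^{(\infty)}}=(\matW_2^{(\infty)}\x_{\infty}\otimes\matA^\T)\v_\infty\in\range{(\x_{\infty}^\T{\matW_2^{(\infty)}}^\T\otimes\matA)^\T}$, giving (ii). Item~4 is the mirror image: with $\matM_1=\matA\matW_1^{(\infty)}$, $\matX=\matW_2^{(\infty)}$, $\matM_2=\x_{\infty}$ the hypothesis reads $(\x_{\infty}^\T\otimes\matA\matW_1^{(\infty)})\vecmat{\matW_2^{(\infty)}}=\b$, while $\matW_2^{(\infty)}=({\matW_1^{(\infty)}}^\T\matA^\T\u_\infty)\x_{\infty}^\T$ yields $\vecmat{\matW_2^{(\infty)}}=(\x_{\infty}\otimes{\matW_1^{(\infty)}}^\T\matA^\T)\u_\infty\in\range{(\x_{\infty}^\T\otimes\matA\matW_1^{(\infty)})^\T}$.

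I expect the only delicate points to be clerical: keeping the transposes straight inside the Kronecker products, and noting that Lemma~\ref{lem:only-sol-in-rowspace} is here applied to rank-deficient coefficient matrices (e.g. $\matA\matW_1^{(\infty)}\matW_2^{(\infty)}$ has rank at most one). For that step I would either appeal to the general form of the lemma --- for any matrix $\matM$ there is exactly one solution of $\matM\z=\b$ in $\range{\matM^\T}$, since $\range{\matM^\T}=\ker(\matM)^{\perp}$ meets the affine solution set in a single point --- or simply follow the precedent already set in Corollary~\ref{cor:7}. Beyond the hypotheses carried over from Theorem~\ref{theorem:12} and Lemma~\ref{lemma:13} (in particular $\x_{\infty}\neq 0$, so the normalizations defining $\v_\infty$ and $\u_\infty$ are legitimate), no new obstacle arises.
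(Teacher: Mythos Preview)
Your proposal is correct and follows exactly the approach the paper intends: the paper's own proof consists of the single sentence ``Use Lemma~\ref{lemma:13} and follow the same logic as Corollary~\ref{cor:7},'' and your write-up is precisely that logic spelled out for the two-hidden-layer case. If anything, you are more careful than the paper in flagging that Lemma~\ref{lem:only-sol-in-rowspace} is being invoked for rank-deficient coefficient matrices and in noting the easy generalization via $\range{\matM^\T}=\ker(\matM)^{\perp}$; the paper already relies on this tacitly in Corollary~\ref{cor:7} without comment.
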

\begin{proof}
Use Lemma \ref{lemma:13} and follow the same logic as Corollary \ref{cor:7}
\end{proof}

Another similarity to the $h=1$ linear model is that this can be collapsed to a more compact algorithm. We do not need to iterate over $\matW_1$ and $\matW_2$. Suppose that we know $\x_k$ and $\v_k$ at some iteration $k$. Then we can construct
\begin{equation*}
    \matW_2^{(k)} = \begin{pmatrix} - & \x_k^\T & -\\0 & \dots & 0 \\ \vdots & \dots & \vdots \\ 0 & \dots & 0\end{pmatrix}
\end{equation*} trivially as we have shown from Lemma \ref{lemma:11}. This now allows us to compute $\matW_1^{(k)} = \matA^\T \v_k\x_k^\T \matW_2^{(k)^\T}$.

Thus, if we wanted to stop at this iteration and produce a result, knowing $\x_k$ and $\v_k$ is all the information we need. It is also all we need for the iteration step. We can write $\x_{k+1}$ as follows:
\begin{eqnarray*}
    \x_{k+1} & = & \x_k - \alpha \matW_2^{(k)^\T}\matW_1^{(k)^\T}\matA^\T(\matA\matW_1^{(k)}\matW_2^{(k)}\x_k-\b) \\
    & = & \x_k - \alpha \matW_2^{(k)^\T}\matW_2^{(k)}\x_k\v_k^\T\matA\matA^\T(\matA\matA^\T\v_k\x_k^\T\matW_2^{(k)^\T}\matW_2^{(k)}\x_k-\b) \\
    & = & \x_k - \alpha \begin{pmatrix}| & 0 & \dots & 0\\\x_k & \vdots & \dots & \vdots\\| & 0 & \dots & 0\end{pmatrix}\begin{pmatrix}\TNormS{\x_k}\\0\\ \vdots\\0\end{pmatrix}\v_k^\T\matA\matA^\T(\TNorm{\x_k}^4\matA\matA^\T\v_k -\b) \\
    & = & (1 - \alpha \TNormS{\x_k}\v_k^\T\matA\matA^\T(\TNorm{\x_k}^4\matA\matA^\T\v_k -\b))\x_k
\end{eqnarray*}
and by using the iteration step for $\v_{k+1}$ written in the proof of Theorem \ref{theorem:12}, we can write:
\begin{eqnarray*}
    \v_{k+1} & = & \frac{1}{\TNorm{\x_{k+1}}^4}\matA^{\T^+}\matW_1^{(k+1)}\matW_2^{(k+1)}\x_{k+1} \\
    & = & \frac{1}{\TNorm{\x_{k+1}}^4}\matA^{\T^+}(\matW_1^{(k)}-\alpha \matA^\T(\matA\matW_1^{(k)}\matW_2^{(k)}\x_k-\b)\x_k^\T\matW_2^{(k)^\T})\matW_2^{(k+1)}\x_{k+1} \\
    & = & \frac{1}{\TNorm{\x_{k+1}}^4}\matA^{\T^+}(\matA^\T\v_k\begin{pmatrix}\TNormS{\x_k} & 0 & \dots & 0\end{pmatrix} \\ & & -\alpha \matA^\T(\TNorm{\x_k}^4\matA\matA^\T\v_k-\b)\begin{pmatrix}\TNormS{\x_{k}} & 0 & \dots & 0\end{pmatrix})\begin{pmatrix}\TNormS{\x_{k+1}} \\0 \\ \vdots \\ 0\end{pmatrix} \\ & = & \frac{1}{\TNormS{\x_{k+1}}}(\v_k-\alpha (\TNorm{\x_k}^4\matA\matA^\T\v_k-\b))\begin{pmatrix}\TNormS{\x_k} & 0 & \dots & 0\end{pmatrix}\begin{pmatrix}1 \\0 \\ \vdots \\ 0\end{pmatrix} \\
    & = & \frac{\TNormS{\x_k}}{\TNormS{\x_{k+1}}}(\v_k-\alpha (\TNorm{\x_k}^4\matA\matA^\T\v_k-\b))
\end{eqnarray*}
So even for the iteration we just need $\x_k$ and $\v_k$, and can iterate over them only, reducing the number of parameters from $2d^2 + d$ to $d + n$, but just as before we can do better.

Notice that the iteration step for $\x_k$ again looks like
\begin{eqnarray*}
    \x_{k+1} & = & (1-\gamma_k)\x_k \\
    & = & \prod_{i=0}^{k}(1-\gamma_i)x_0 \\
    & = & \rho_{k+1}x_0
\end{eqnarray*}
where $\gamma_k = \alpha \TNormS{\x_k}\v_k^\T\matA\matA^\T(\TNorm{\x_k}^4\matA\matA^\T\v_k-\b)$ and $\rho_{k} = \prod_{i=0}^{k - 1}(1-\gamma_i)$. We can use that $\TNorm{\x_0} = 1$ to rewrite $\gamma_k$ as \begin{equation*}
    \gamma_k = \alpha \rho_k^2\v_k^\T\matA\matA^\T(\rho_k^4 \matA\matA^\T\v_k - \b)
\end{equation*}
We can use $\gamma_k$ and $\rho_k$ to get a succinct and simple update step for $\v_k$:
\begin{eqnarray*}
    \v_{k+1} & = & \frac{\TNormS{\x_k}}{\TNormS{x_{k+1}}}(\v_k-\alpha (\TNorm{\x_k}^4\matA\matA^\T\v_k-\b)) \\
    & = & \frac{\rho_k^2}{\rho_{k+1}^2}(\v_k-\alpha(\rho_k^4\matA\matA^\T\v_k-\b)) \\
    & = & \frac{1}{(1-\gamma_k)^2}(\v_k-\alpha(\rho_k^4\matA\matA^\T\v_k-\b))
\end{eqnarray*}

This allows us to effectively collapse a two hidden layers linear network to $O(n)$ variables, much like we did in the one hidden layer model. The algorithm is outlined below (Algorithm 4).

\begin{comment}
\begin{algorithm}
\caption{Compact two hidden layers iteration}
\begin{algorithmic}
\State {$\matA \in \R^{n \times d}, \b \in \R^{n \times 1}, \alpha \in \R$ inputs}
\State $\v_0 \gets \random(\R^{n \times 1})$
\State $\rho_0 \gets 1$
\For {iteration $j = 0, 1,\dots$ until convergence}
    \State $\gamma_j \gets \alpha \rho_j^2 \v_j^\T\matA\matA^\T(\rho_j^4 \matA\matA^\T\v_j-\b)$
    \State $\v_{j+1} \gets \frac{1}{(1-\gamma_j)^2}(\v_j-\alpha (\rho_j ^4\matA\matA^\T\v_j-\b))$
    \State $\rho_{j+1} \gets \rho_j (1-\gamma_j)$
\EndFor
\State {output $\rho_j^4 \matA^\T\v_j$}
\end{algorithmic}
\end{algorithm}
\end{comment}

\begin{algorithm}
\caption{\label{alg:4} Compact two hidden layers iteration}
\begin{algorithmic}
\State {$\matA \in \R^{n \times d}, \b \in \R^{n \times 1}, \alpha \in \R$ inputs}
\State $\v_0 \gets \text{arbitrary, not zero}$ \;\;\;\;\;\;\;\;\;\;\;\;\;\;\;\;\;\;\;\;\;\;\;\;\;\;\;\;\;\;\;$O(n)$
\State $\rho_0 \gets 1$ \;\;\;\;\;\;\;\;\;\;\;\;\;\;\;\;\;\;\;\;\;\;\;\;\;\;\;\;\;\;\;\;\;\;\;\;\;\;\;\;\;\;\;\;\;\;\;\;\;\;\;\;\;\;\;\;\;\;$O(1)$
\State $\z_0 \gets \matA^\T \v_0$ \;\;\;\;\;\;\;\;\;\;\;\;\;\;\;\;\;\;\;\;\;\;\;\;\;\;\;\;\;\;\;\;\;\;\;\;\;\;\,\;\;\;\;\;\;\;\;\;\;\;\;$O(T_{\matA})$
\For {iteration $k = 0, 1,\dots$ until convergence}
    \State $\y_k \gets \matA\z_k$ \;\;\;\;\;\;\;\;\;\;\;\;\;\;\;\;\;\;\;\;\;\;\;\;\;\;\;\;\;\;\;\;\;\; \,\;\;\;\;\;\;\;\;\;\;\;\;$O(T_{\matA})$
    \State $\e_k \gets \alpha(\rho_k^4\y_k - \b)$ \;\;\;\;\;\;\;\;\;\;\;\;\;\;\;\;\;\;\;\;\;\;\;\;\;\;\;\;\;\;\;\;\;\; $O(n)$
    \State $\gamma_k \gets \rho_k^2(\y_k^\T \e_k)$ \;\;\;\;\;\;\;\;\;\;\;\;\;\;\;\;\;\;\;\;\;\;\;\;\;\;\;\;\;\;\;\;\;\;\,\;\;\;\;\;\,$O(n)$
    \State $\v_{k+1} \gets \frac{1}{(1-\gamma_k)^2}(\v_k-\e_k)$ \;\;\;\;\;\;\;\;\;\;\;\;\;\;\;\, \;\;\;\;\;\;\;\;$O(n)$
    \State $\rho_{k+1} \gets \rho_k
    (1-\gamma_k)$\;\;\;\;\;\;\;\;\;\;\;\;\;\;\;\; \;\;\;\;\;\;\;\;\;\;\;\;\;\;\;\;\;\;\;$O(1)$
    \State $\z_{k+1} \gets \matA^\T \v_{k+1}$\;\;\;\;\;\;\;\;\;\;\;\;\;\;\;\;\;\;\;\;\;\;\;\;\;\;\;\;\;\;\;\;\;\;\;\;\;\;\;$O(T_{\matA})$
\EndFor
\State output $\rho_k^4 \z_k$
\end{algorithmic}
\end{algorithm}
The time complexity of running Algorithm \ref{alg:4} for $t > 1$ iterations is $O(t \cdot \text{max}(n, T_{\matA})$. The similarities between Algorithm 3 and Algorithm 4 are striking, but not entirely surprising.

We tested both these algorithms against the baseline gradient descent algorithm to answer two questions. Can these two new algorithms outperform gradient descent and take different paths to $\vtheta^\star$? To answer the first question, we used the rcv1 multiclass test set, removed zero columns, and divided the feature matrix $\matA$ and the target vector $\b$ by $52$. We then trained three models using the methods mentioned above, and the results in Figure \ref{fig:4.1} show that the new methods we propose are competitive and even beat gradient descent, but begin to zigzag wildly after a certain amount of iterations.

\begin{figure*}
    \centering
    \includegraphics[scale=0.75]{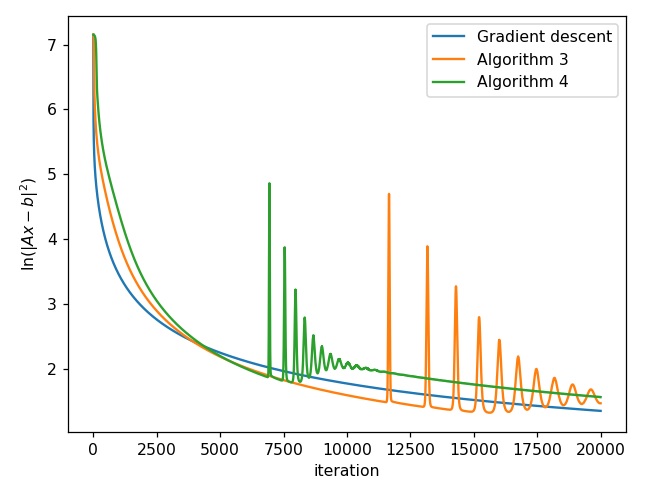}
    \caption{\label{fig:4.1} On rcv1 multiclass test set, each method with its largest learning rate in exponents of 10 (Algorithm 3 with $\alpha = 10^{-2}$, Algorithm \ref{alg:4} with $\alpha = 10^{-3}$, gradient descent with $\alpha = 10$)}
\end{figure*}

Looking further into the matter, we see that Algorithm \ref{alg:3} begins to zigzag as soon as $\frac{1}{1-\gamma_k} > 1$ and Algorithm \ref{alg:4} begins to zigzag as soon as $\frac{1}{(1-\gamma_k)^2} > 1$, which both zigzag back and forth between a bit more than $1$ and a bit less than $1$. An illustration of this is shown in Figure \ref{fig:4.2}. 

\begin{figure*}
    \centering
    \includegraphics[scale=0.75]{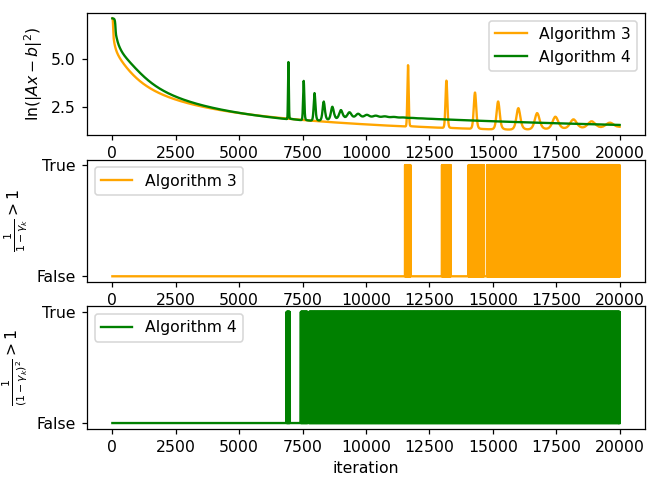}
    \caption{\label{fig:4.2} The new algorithms begin to zigzag when the corresponding coefficients zigzag between a bit more and a bit less than $1$.}
\end{figure*}

As for the second question, the answer is a definite "No", as can be clearly seen in Figure \ref{fig:4.3} where we solved random $100 $ by $1000$ problems with specified condition numbers, and then projected the iteration path unto a 2d plane with a random projection to see if the two methods take the same path. They don't take the same path, Algorithm \ref{alg:4} seems to take a longer path, but it steps through that path more quickly as can be seen empirically by the constraint on $\alpha$ in the experiments on the rcv1 dataset.

\begin{figure*}
    \centering
    \includegraphics[scale=0.45]{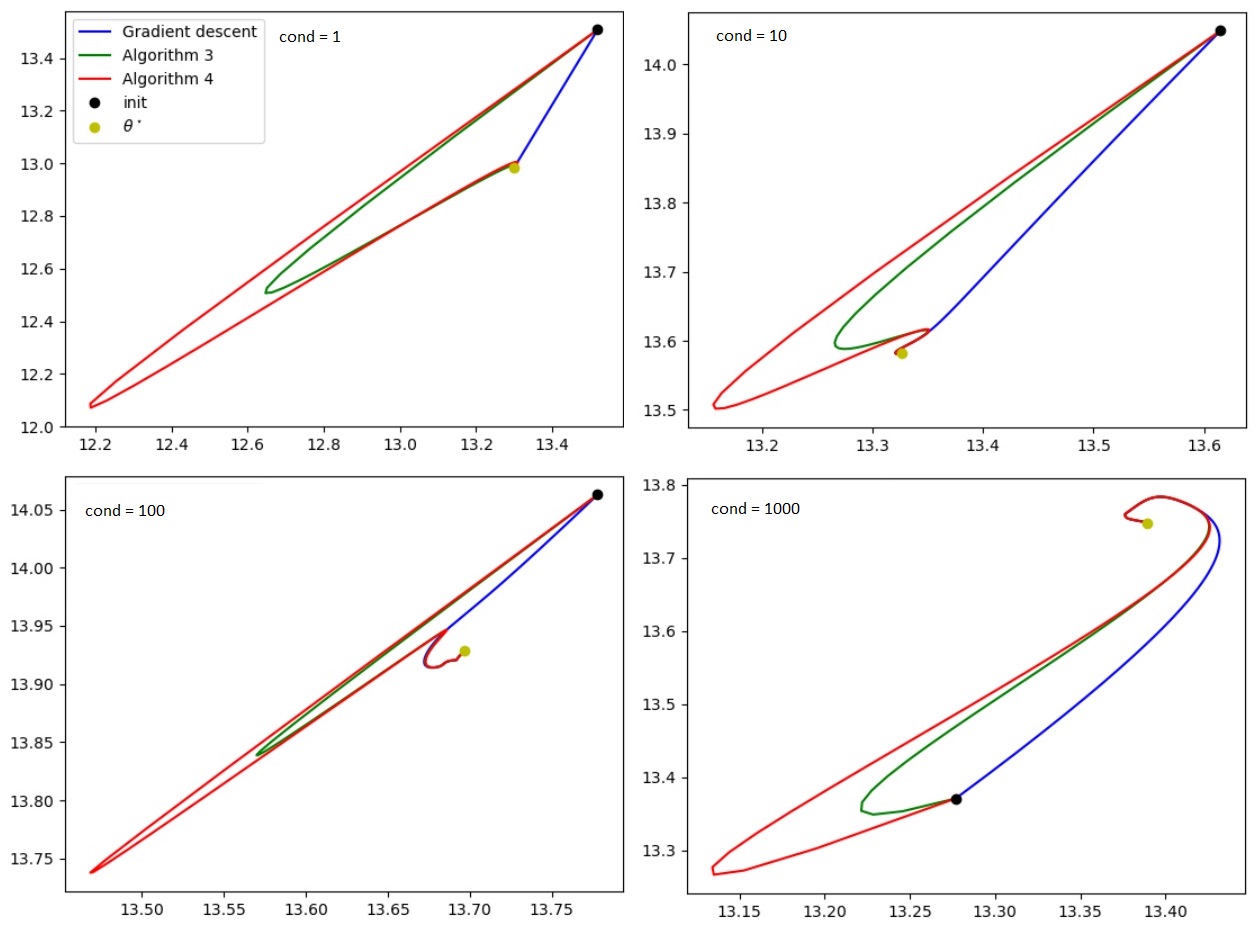}
    \caption{\label{fig:4.3} Random projection of path. $n = 100, d = 1000, \text{condition number}$  varies}
\end{figure*}

% \begin{figure*}
%     \centering
%     \includegraphics[scale=0.75]{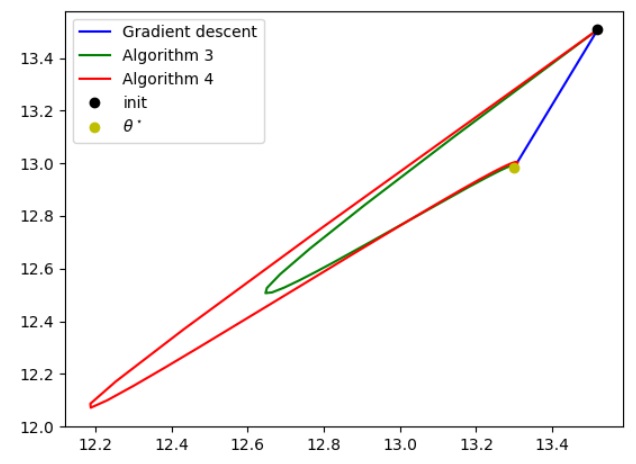}
%     \caption{Random projection of path. $n = 100, d = 1000, \text{cond} = 1$}
%     \label{Ordinary linear regression}
% \end{figure*}

% \begin{figure*}
%     \centering
% \includegraphics[scale=0.75]{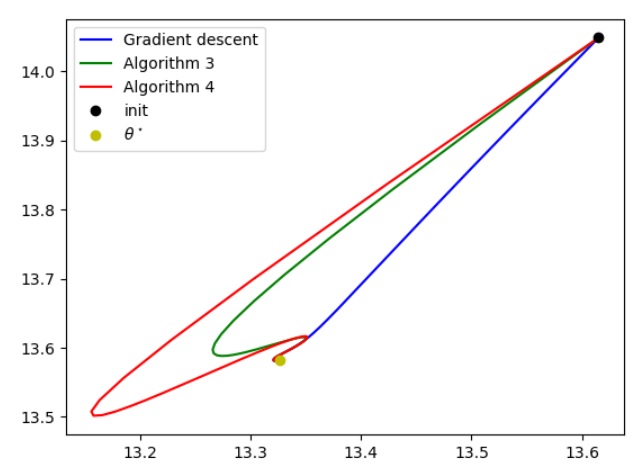}
%     \caption{Random projection of path. $n = 100, d = 1000, \text{cond} = 10$}
%     \label{Ordinary linear regression}
% \end{figure*}

% \begin{figure*}
%     \centering
% \includegraphics[scale=0.75]{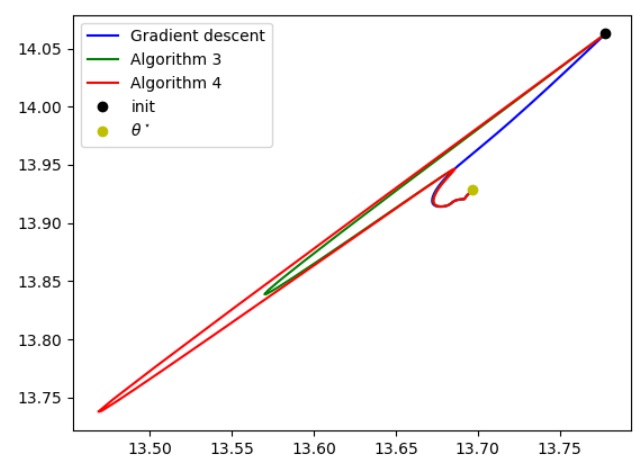}
%     \caption{Random projection of path. $n = 100, d = 1000, \text{cond} = 100$}
%     \label{Ordinary linear regression}
% \end{figure*}

% \begin{figure*}
%     \centering
% \includegraphics[scale=0.75]{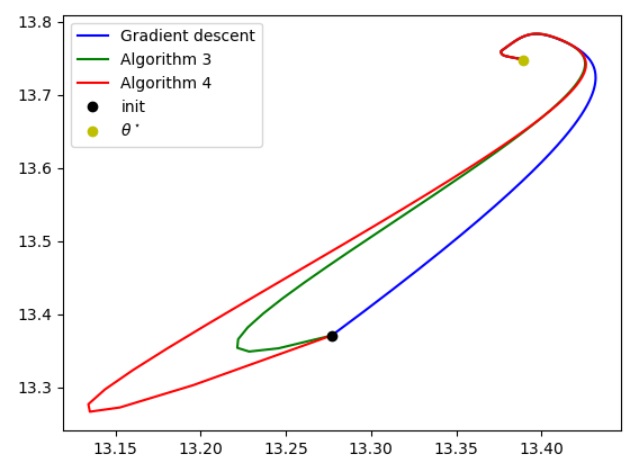}
%     \caption{Random projection of path. $n = 100, d = 1000, \text{cond} = 1000$}
%     \label{Ordinary linear regression}
% \end{figure*}

% \begin{figure*}
%     \centering
% \includegraphics[scale=0.75]{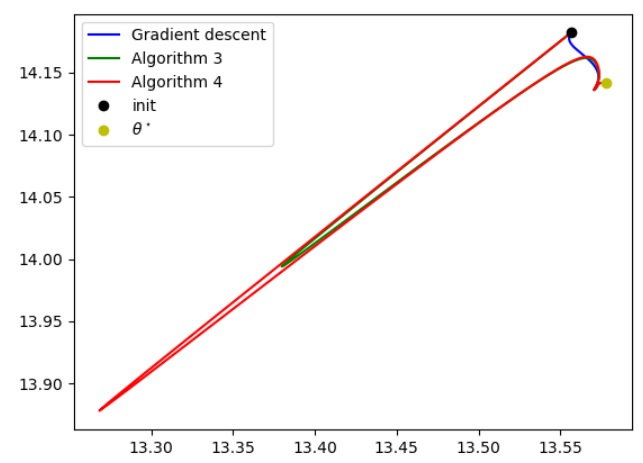}
%     \caption{Random projection of path. $n = 100, d = 1000, \text{cond} = 10000$}
%     \label{Ordinary linear regression}
% \end{figure*}

Is it possible to "collapse" deep models for $h > 2$? We conjecture that no. We do not have a formal proof but a heuristic argument.
Consider, for example, the model when $h=3$, $\matA \matW_1 \matW_2 \matW_3 \x = \b$.
As before, we would like $\matW_1 = \matA^\T \v \x^\T \matW_3^\T \matW_2^\T, \matW_2 = \matW_1^\T \matA^\T \u \x^\T \matW_3^\T, \matW_3 = \matW_2^\T \matW_1^\T \matA^\T \s \x^\T$ for some $u, v, s$.
Using the same strategy, we would have $\matW_1 = (\matA^\T \v\x^\T \matW_3^\T)(\matA^\T \u\x^\T \matW_3^\T)^\T \matW_1$ and $\matW_2 = (\matW_1^\T \matA^\T \u)(\matW_1^\T \matA^\T \s)^\T\matW_2$ which means we need to choose $\u, \v, \s$ such that $(\matA^\T \v\x^\T \matW_3^\T)^\T(\matA^\T \u\x^\T \matW_3^\T) = (\matW_1^\T \matA^\T \u)^\T(\matW_1^\T \matA^\T \s) = 1$. This does not seem feasible for weight matrices that are strongly coupled like that. It only worked in the case $h=2$ because the term for $\matW_1$ was without any mention of $\matW_2$, but here there is seemingly no way to decouple the weight matrices from each other. The key to solving the problem in the $h=2$ case does not work in the $h>2$ case, and there is no clear way of overcoming this problem. We do not claim the statement is true, we leave it as an open problem. We only claim that the previous strategy does not work.

A final question is how our initialization methods compare to industry standard popular initializations. Unfortunately, there is little relation, as our initializations, while random, are supported on a set of zero measure. In contrast, most popular methods today sample scalar entries individually, and the support has a positive measure (possibly even the entire space). Two prominent examples of industry standard initialization are Xavier~\citep{pmlr-v9-glorot10a} and He Initializations~\citep{he2015delving}.

In a Xavier Initialization we generate all entries from a uniform distribution on $-\frac{1}{\sqrt{s}}$ and $\frac{1}{\sqrt{s}}$, where $s$ is the number of neurons in the previous layer. The goal of this initialization, which is widely used for the activation functions $\frac{1}{1+e^{-x}}$ and $\tanh(x)$, is to have constant variance across all layers. This prevents the gradients from vanishing or exploding.
He Initialization was invented to solve the problem that Xavier does not work well when the activation function is ReLU. When performing He initialization, we generate numbers from a normal distribution with mean $0$ and variance $\frac{2}{s}$.

Both of these initializations, and indeed most initialization techniques today, sample entries individually, and so they miss the big picture of possible dependency on the data given and how to use it. They are designed with optimization in mind, rather than generalization, and are very different from the methods we propose. Initializing with these methods will almost surely not yield $\vtheta^\star$ and will not take advantage of the collapsing property we have outlined. It is possible, however, that these initialization schemes avoid possible exploding/vanishing gradient phenomena better than our proposed methods.

\subsection{Stability analysis of deep linear networks}
\label{subsec:stability-analysis}

We have shown in Lemma \ref{lem:8} that if $\matW_1^{(0)} \in \range{\matA^\T}$ and $\matA\matW_1^{(\infty)}\dots \matW_{h}^{(\infty)}\x_\infty = \b$ then the limit $\matW_1^{(\infty)}\dots \matW_{h}^{(\infty)}\x_\infty$ exists and equals $\vtheta^\star$. However, it is not always easy to achieve this perfectly, and due to machine precision or other reasons we might have $\matW_1^{(0)} \notin \range{\matA^\T}$. Thus, a natural question to ask is what would happen if $\matW_1^{(0)} \notin \range{\matA^\T}$, but is close to $\range{\matA^\T}$ in some sense. We formalize this question by first writing
\begin{eqnarray*}
\matW_1^{(k)} & = & \matA^\T\matP_k+\matC_k
\end{eqnarray*}
where 
\begin{eqnarray*}
    \matP_k & = & \mat(\matA^\T)^{+}\matW_1^{(k)}\\\matC_k & = & \matW_1^{(k)} - \matA^\T\matP_k.
\end{eqnarray*}
and we assume $\matC_0 \neq 0$.
First, notice that $\matA\matC_k = 0$ is retained throughout our iterations. This is because \begin{eqnarray*}
\matA\matC_{k} &= & \matA\matW_1^{(k)} - \matA\matA^\T(\matA\matA^\T)^{-1}\matA\matW_1^{(k)}\\ &  = & \matA\matW_1^{(k)} - \matA\matW_1^{(k)} \\ & = & 0 
\end{eqnarray*} We can use this to arrive at the conclusion that $\matC_k$ never changes, as 
\begin{eqnarray*}
\matC_{k+1}  &=& \matW_1^{(k+1)}-\matA^\T \matP_{k+1} \\ & = & \matW_1^{(k+1)} - \matA^\T\matA^{\T^+}\matW_1^{(k+1)} \\ & = & \matW_1^{(k)} - \matA^\T(\matA \matW_1^{(k)}\x_k-\b)\x_k^\T - \matA^\T\matA^{\T^+}(\matW_1^{(k)} - \matA^\T(\matA \matW_1^{(k)}\x_k-\b)\x_k^\T) \\ & = & \matW_1^{(k)} - \matA^\T(\matA \matW_1^{(k)}\x_k-\b)\x_k^\T - \matA^\T\matA^{\T^+}\matW_1^{(k)} + \matA^\T(\matA \matW_1^{(k)}\x_k-\b)\x_k^\T \\ & = & \matW_1^{(k)} - \matA^\T\matA^{\T^+}\matW_1^{(k)} \\ & = & \matA^\T\matP_k+\matC_k - \matA^\T\matA^{\T^+}(\matA^\T\matP_k+\matC_k)  \\ & = & \matC_k - \matA^\T\matA^{\T^+}\matC_k \\ & = & \matC_k - \matA^\T(\matA\matA^\T)^{-1}\matA\matC_k \\ & = & \matC_k-\matA^\T(\matA\matA^\T)^{-1}\cdot 0 \\ & = & \matC_k
\end{eqnarray*}
Thus, we instead write $\matW_1^{(k)} = \matA^\T\matP_k + \matC$ where $\matC$ is constant and only $\matP_k$ is being iterated upon. 

A second observation is that if all limits are assumed to exist and $\matA\matW_1^{(\infty)}\matW_{2}^{(\infty)}\dots \matW_{h}^{(\infty)}\x_\infty = \b$, then $\matA^\T\matP_\infty \matW_{2}^{(\infty)}\dots \matW_{h}^{(\infty)}\x_\infty = \vtheta^\star$. An easy way to see this is that
\begin{eqnarray*}
\matA\matW_{1}^{(\infty)}\matW_{2}^{(\infty)}\dots \matW_{h}^{(\infty)}\x_\infty & = & \matA\matA^\T\matP_{\infty}\matW_{2}^{(\infty)}\dots \matW_{h}^{(\infty)}\x_\infty + 0\\ & = & \b
\end{eqnarray*} so $\matA^\T\matP_{\infty}\matW_{2}^{(\infty)}\dots \matW_{h}^{(\infty)}\x_\infty$ is a solution and it is trivially in $\range{\matA^\T}$ so it is equal to $\vtheta^\star$ by Lemma \ref{lem:only-sol-in-rowspace}.

Now observe that,
\begin{eqnarray*}
\TNorm{\matW_1^{(\infty)}\matW_{2}^{(\infty)}\dots \matW_{h}^{(\infty)}\x_\infty -\vtheta^\star} & = & \TNorm{\matA^\T\matP_{\infty}\matW_{2}^{(\infty)}\dots \matW_{h}^{(\infty)}\x_\infty + \matC\matW_{2}^{(\infty)}\dots \matW_{h}^{(\infty)}\x_\infty - \vtheta^\star} \\ & = & \TNorm{\matC\matW_{2}^{(\infty)}\dots \matW_{h}^{(\infty)}\x_\infty}\\ & \leq & \|\matW_{2}^{(\infty)}\|\dots \|\matW_{h}^{(\infty)}\|\cdot \TNorm{\x_\infty} \cdot \|\matC\|
\end{eqnarray*}

We again see the importance of initialization on the constant $\|\matC\|$. Can depth fix this constant however? The inequality suggests that if $h$ is large and the weight norms are smaller than $1$ at convergence, then this fixes large $\|\matC\|$. Conversely, if the norms are greater than $1$, the bound explodes and a small perturbation during initialization can result in radically different solutions. We tested this empirically on randomly generated problems to see if depth helps. We created a linear neural network of varying depth, with $\matW_1 = \matA^\T\matP_0+\matC$ where $\|\matC\| = 1$ and tested whether depth helps or harms the distance to $\vtheta^\star$.  The other initial weights were all $\matI_d$ except $\x_0 = \random(\mathbb{S}^{d-1})$.

Quite surprisingly, we see that the product $\|\matW_2\|\dots \|\matW_h\|\TNorm{\x}$ increases as the depth increases, but the distance to $\vtheta^\star$ could decrease nonetheless. It could increase, decrease, or be non-monotonic (see Figure \ref{fig:4.4}). In every experiment, the norm product always increased with depth. In the vast majority of experiments, the distance to $\vtheta^\star$ increased monotonically with depth, signaling that depth causes the error to explode and does not help with generalization.
%contrary to some recent works that suggest otherwise.

\vfill
\newpage

\begin{figure*}
    \centering
    \includegraphics[scale=0.55]{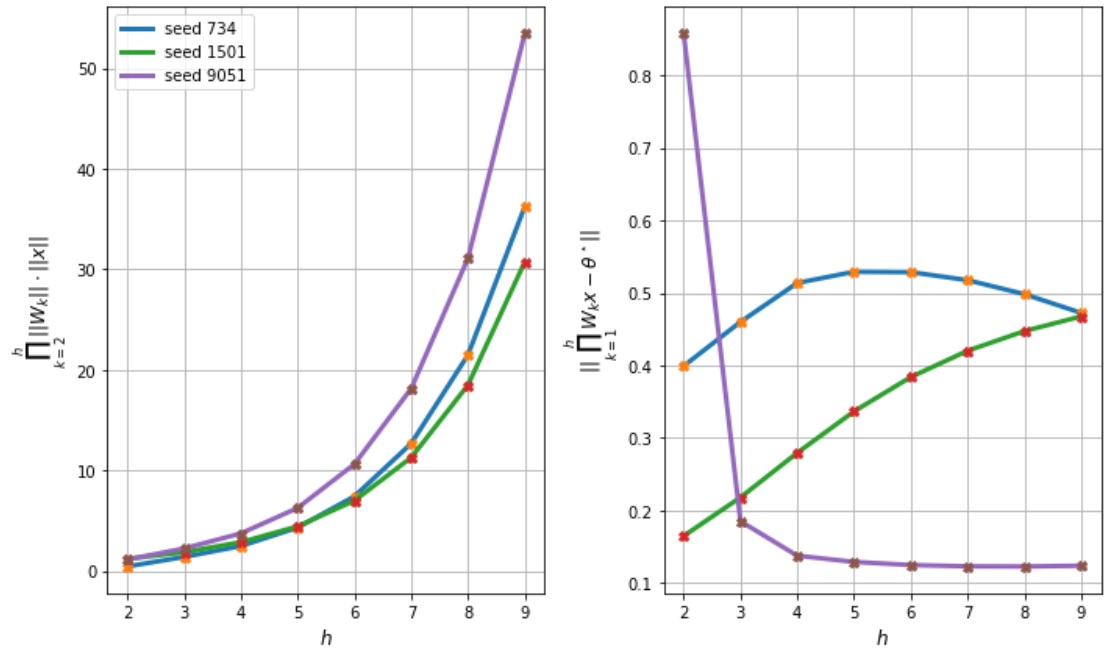}
    \caption{\label{fig:4.4} Norm product and distance to $\vtheta^\star$ at the end of training, with varying depths and seeds, exhibiting different properties regarding distance to $\vtheta^\star$}
\end{figure*}

% \begin{figure*}
%     \centering
%     \includegraphics[scale=0.75]{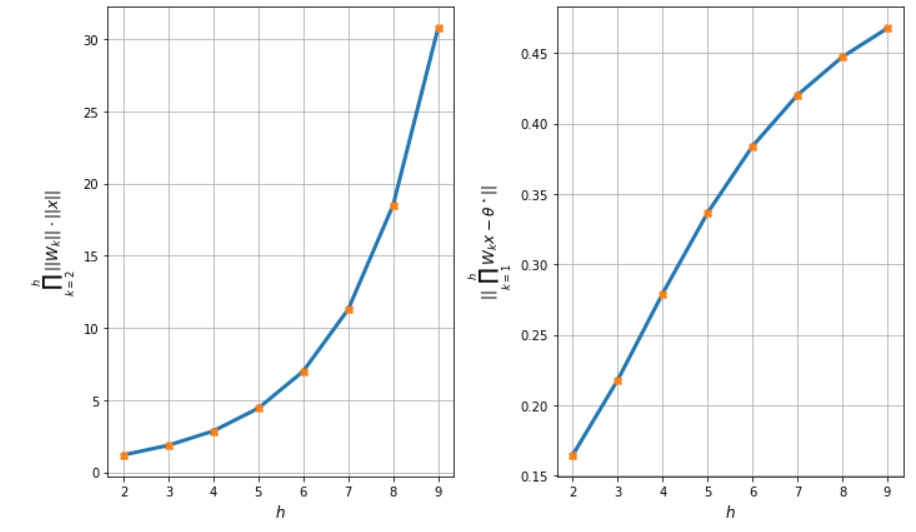}
%     \caption{Norm product and distance to $\theta^*$ at the end of training, with varying depth, seed 1501. Depth harms distance to $\vtheta^\star$}
%     \label{Ordinary linear regression}
% \end{figure*}

% \begin{figure*}
%     \centering
%     \includegraphics[scale=0.75]{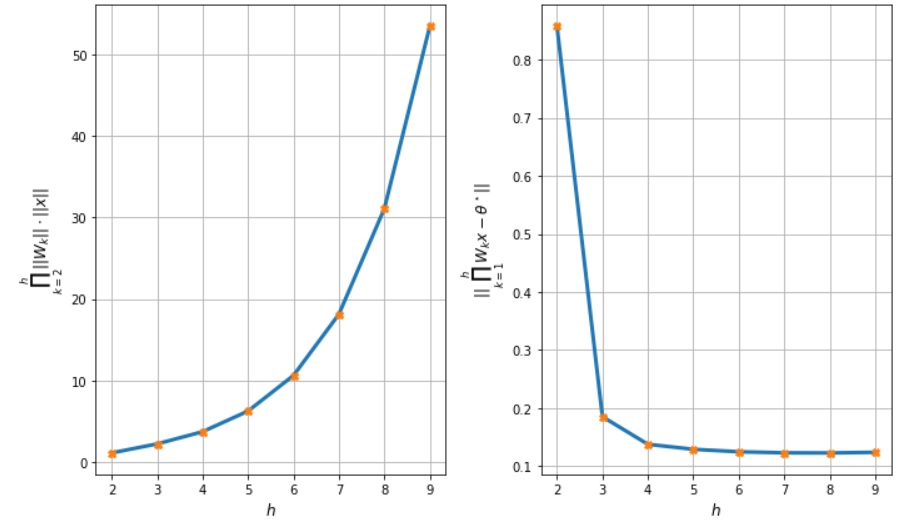}
%     \caption{Norm product and distance to $\theta^*$ at the end of training, with varying depth, seed 9051. Depth helps distance to $\vtheta^\star$}
%     \label{Ordinary linear regression}
% \end{figure*}

% \begin{figure*}
%     \centering
%     \includegraphics[scale=0.75]{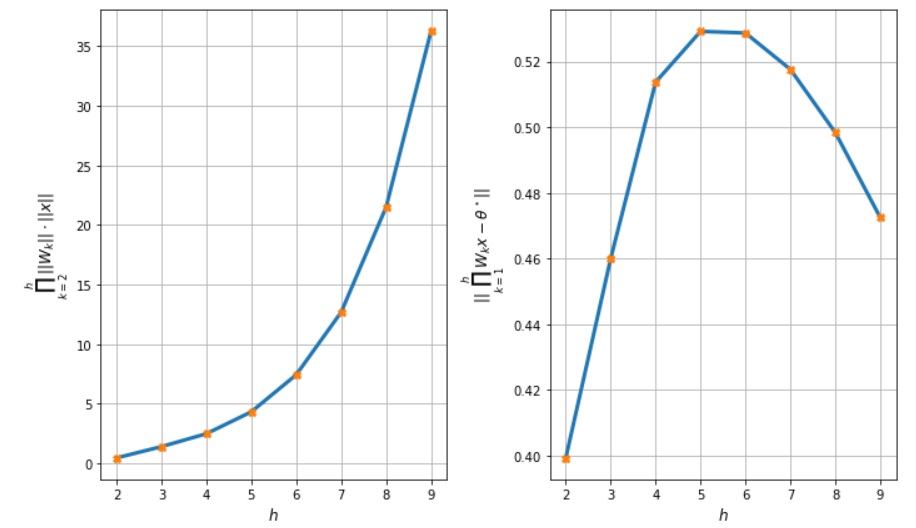}
%     \caption{Norm product and distance to $\theta^*$ at the end of training with varying depth, seed 734. Distance to $\vtheta^\star$ is non-monotonic function of depth}
%     \label{Ordinary linear regression}
% \end{figure*}

\section{Riemannian Linear Neural Networks}
\label{sec:riemannian}

In this section, we consider a deep linear model $\matA\y = \b$ where $\y$ is parameterized as $\y:= \matW_1\matW_2\dots \matW_h\x$ where 
\begin{eqnarray*}
\matW_1, \matW_2,\dots ,\matW_h \in \text{Stiefel}(d, d) := \{\matW \in \R^{d \times d}: \matW\matW^\T = \matI_d\}
\end{eqnarray*} and $\x$ remains unconstrained. The motivation for this model is clear from the previous section. The inequality in Section \ref{subsec:stability-analysis} tempts us to enforce that $\|\matW_k\| = 1$ and then $\TNorm{\matW_1\matW_2\dots \matW_h\x - \vtheta^\star} \leq \TNorm{\x}\cdot \|\matC\|$, which if $\TNorm{\x}$ is not large, hopefully fixes the damage by a poor initialization, or at the very least does not harm it like deep linear networks might. This model makes it so that adding more layers does not increase the upper bound on the error, which can often happen in regular deep linear networks, as shown in the figures in Section \ref{subsec:stability-analysis}, where in every model adding layers increased the product of norms (an upper bound). However, we shall see that while the product of hidden weight norms is constant, depth in a Riemannian model can have both a positive and negative effect, and results are inconclusive.

\subsection{Brief Informal Background on Riemannian Optimization}
\label{subsec:brief-informal}

This explanation, while simplistic and informal, is meant to convey the essential notion rather than to provide a detailed and formal account of Riemannian optimization. Additional, formalized and detailed information is provided by \citet{AbsMahSep2008, boumal2022intromanifolds}.

Suppose we wish to find a vector $\x \in \mathbb{S}^{d-1}$ that minimizes the function $\TNormS{\matA\z-\b}$ where $\TNorm{\z} = 1$, like we would encounter in Lagrange Multipliers for instance. Neural networks (whether linear or not) do not allow us to specify which domain we want our weights to be in. It does not allow us to constrain them. But in real-world applications, we often want to constrain the parameters. For instance, we might have a problem where we are looking for the correct orientation of an object in space, thus our search domain is only rotation matrices, which is not a linear space, but it is a smooth manifold that is locally linearizable at every point.

Back to our problem of minimizing $f(z) = \TNormS{\matA\z-\b}$ over the unit sphere. The unit sphere is not a linear space, so we cannot define an inner product on it, and as such there is no notion of gradient. However, it is locally linearizable at every point. We can find the tangent space $T_\z \mathbb{S}^{d-1}$ at every point $\z$, choose an inner product for it (there are many choices; conceptually, this is not far from preconditioning); an obvious choice is the standard inner product inherited from the Euclidean space $\R^{d}$. This tangent space is now a linear space endowed with an inner product, so we can now have a clear notion about the gradients in it.

The gradient of $f(\z)$ will not, in general, be in $T_\z\mathbb{S}^{d-1}$, so we will define the Riemannian gradient as the vector $\text{rgrad}f(\z)$ which is the unique vector in $T_\z\mathbb{S}^{d-1}$ such that $\langle\text{rgrad}f(\z), \v\rangle = \matD_{f}(\x)\v$ for all $\v$ in $T_\z\mathbb{S}^{d-1}$, where $\matD_{f}(\x)\v := \lim_{\delta \to 0}\frac{f(\x + \delta \v) - f(\x)}{\delta}$. As a consequence of this definition, we can easily calculate it with $\text{rgrad}f(\z) = \text{Proj}_{\z}(\nabla f(\z))$ where $\text{Proj}_{\z}$ is the orthogonal projection operator from $\R^{d}$ to the tangent space $T_\z\mathbb{S}^{d-1}$.

We now have $\z - \text{rgrad}f(\z)$ be in $T_\z\mathbb{S}^{d-1}$, but it is not on $\mathbb{S}^{d-1}$. What we need is a mapping from the tanget space onto the manifold. Such a mapping is called a retraction, and for this case an example is the normalizing function. Now we can define a Riemannian version of gradient descent: move in the direction opposite the Riemannian gradient and retract back to the manifold. This procedure allows us to optimize functions over smooth non-linear manifolds, and not all $\R^d$. This is also a form of regularization, as we can choose "simple" manifolds and, we hope, get "simple" solutions.

This procedure for optimizing over the manifold $\mathbb{S}^{d-1}$ can be extended to any manifold we wish. All we need is the tangent space at every point on the manifold, an inner product on that tangent space, the orthogonal projection operator onto that tangent space, and a retraction. In Section \ref{subsec:role-initialization} we consider Riemannian optimization where our target manifold is the product of Stiefel manifolds (orthogonal matrices).

\subsection{The Role of Initialization in Riemannian Linear Neural Networks}
\label{subsec:role-initialization}

In this section we consider the problem of solving $\matA\matW\x = \b$ where $\matW$ is either on the Stiefel manifold, or overparametrized as a product of such matrices, and the effects of initialization on this problem. We begin with a definition. The Frobenius distance of an orthogonal $d \times d$ matrix $\matW$ from the range of a $d \times n$ full-rank matrix $\matM$ is $d_{\range{\matM}}(\matW):= \|\matM\matM^{+}\matW - \matW\|_F$. This definition is sensible because, indeed, $\matM^+\matW$ minimizes $\|\matM\matX-\matW\|_F$ from the properties of Moore-Penrose pseudoinverse. This definition motivates the following theorem. This theorem is not specifically related to our use cases and models, but we use it to show that we cannot initialize like in the previous sections, which is a key difference to the previous models.
\begin{theorem}
\label{theorem:15}
Let $\matW \in \R^{d \times d}$ be an orthogonal matrix and $\matM \in \R^{d \times n}$ be of full rank. Then $d_{\range{\matM}}(\matW) = \sqrt{d-n}$.
\end{theorem}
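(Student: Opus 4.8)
The plan is to rewrite $\matM\matM^{+}\matW - \matW$ as (minus) an orthogonal projector applied to $\matW$, and then exploit the fact that the Frobenius norm is unchanged by multiplication by an orthogonal matrix. First I would recall from the preliminaries that, since $\matM\in\R^{d\times n}$ has full rank (hence full column rank, so implicitly $n\le d$), $\matM^{+} = (\matM^\T\matM)^{-1}\matM^\T$ and $\matP := \matM\matM^{+}$ is the orthogonal projection of $\R^{d}$ onto $\range{\matM}$, an $n$-dimensional subspace. Set $\matP^{\perp} := \matI_d - \matP$, the orthogonal projection onto the orthogonal complement of $\range{\matM}$, which has dimension $d-n$; both $\matP$ and $\matP^{\perp}$ are symmetric and idempotent, and $\Trace{\matP^{\perp}} = d-n$ (the trace of an orthogonal projector equals the dimension of its range, as it is similar to a diagonal $0/1$ matrix with exactly that many ones).

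Next I would note that $\matM\matM^{+}\matW - \matW = -(\matI_d-\matP)\matW = -\matP^{\perp}\matW$, so $d_{\range{\matM}}(\matW) = \FNorm{\matP^{\perp}\matW}$. Since $\matW$ is orthogonal, right-multiplication by $\matW$ preserves the Frobenius norm, so $\FNorm{\matP^{\perp}\matW} = \FNorm{\matP^{\perp}}$. Finally, $\FNorm{\matP^{\perp}}^{2} = \Trace{\matP^{\perp\T}\matP^{\perp}} = \Trace{\matP^{\perp}} = d-n$, using $\matP^{\perp\T}\matP^{\perp} = \matP^{\perp}$. Taking square roots gives $d_{\range{\matM}}(\matW) = \sqrt{d-n}$. (Equivalently, one can avoid invoking norm-invariance by writing $\FNorm{\matP^{\perp}\matW}^{2} = \Trace{\matW^\T\matP^{\perp}\matW} = \Trace{\matP^{\perp}\matW\matW^\T} = \Trace{\matP^{\perp}}$ via the cyclic property of the trace and $\matW\matW^\T = \matI_d$.)

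There is really no substantive obstacle here; the only two points deserving a sentence of justification are that $\matM\matM^{+}$ is genuinely the \emph{orthogonal} projector onto $\range{\matM}$ (which uses full column rank of $\matM$) and that the trace of an orthogonal projector equals the dimension of its range. Everything else is a short manipulation with trace identities and the orthogonality relation $\matW\matW^\T=\matI_d$.
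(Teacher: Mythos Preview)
Your proof is correct and follows essentially the same approach as the paper: both rewrite $\matM\matM^{+}\matW-\matW$ as $(\matP-\matI_d)\matW$ with $\matP=\matM(\matM^\T\matM)^{-1}\matM^\T$, use the orthogonality of $\matW$ to drop it from the Frobenius norm, and then reduce $\FNorm{\matI_d-\matP}^2$ to $d-n$ via trace identities. The only cosmetic difference is that you phrase the last step abstractly (trace of an orthogonal projector equals the rank of its range), whereas the paper expands the trace computation explicitly.
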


\begin{proof}
The closest matrix to $\matW$ in $\range{\matM}$ is
\begin{eqnarray*}
\matZ & = & \matM\matM^{+}\matW \\ & = & \matM(\matM^\T\matM)^{-1}\matM^\T\matW.
\end{eqnarray*}
 All we need to do is calculate the distance between $\matW$ and $\matZ$.
\begin{equation*}
    \|\matZ-\matW\|_F^2 = \|\matM(\matM^\T\matM)^{-1}\matM^\T\matW-\matW\|_F^2 = \|(\matM(\matM^\T\matM)^{-1}\matM^\T-\matI_d)\matW\|_F^2 = \|\matM(\matM^\T\matM)^{-1}\matM^\T-\matI_d\|_F^2
\end{equation*}
 since $\matW$ is orthogonal.

Now 
\begin{eqnarray*}
    \|\matM(\matM^\T\matM)^{-1}\matM^\T-\matI_d\|_F^2 & = & \text{trace}[(\matM(\matM^\T\matM)^{-1}\matM^\T-\matI_d)^\T(\matM(\matM^\T\matM)^{-1}\matM^\T-\matI_d)] \\ &= & \text{trace}[\matM(\matM^\T\matM)^{-1}\matM^\T\matM(\matM^\T\matM)^{-1}\matM^\T-2\matM(\matM^\T\matM)^{-1}\matM^\T+\matI_d] \\ & = &  \text{trace}[\matM(\matM^\T\matM)^{-1}\matM^\T-2\matM(\matM^\T\matM)^{-1}\matM^\T]+d \\ & = & d - \text{trace}[\matM(\matM^\T\matM)^{-1}\matM^\T]=d-\text{trace}[\matM^\T\matM(\matM^\T\matM)^{-1}]=d-\text{trace}[\matI_n] \\ & =  & d-n
\end{eqnarray*}
\end{proof}

A consequence of the previous theorem is that
it is impossible for us to have $\matW_1^{(k)} \in \range{\matA^\T}$ in the orthogonal network case. The theorem shows that if $\matW_1^{(k)}$ is orthogonal and $\matW_1^{(k)} = \matA^{\T^{+}}\matP_k + \matC_k$, then we always have $\|\matC_k\|_F = \sqrt{d-n}$, Hence $\matC_k \neq 0$ for all $k$.
While we have not shown that $\matC_k$ is conserved like in Section \ref{subsec:stability-analysis} (in fact, it is not conserved), but this proves that $\|\matC_k\| = \sqrt{d-n}$ is conserved across iterations. It also shows that we cannot ever have $\matW_1^{(k)} \in \range{\matA^\T}$. Thus, we do not necessarily find the minimum norm solution!

$\matW_1^{(k)}$ may never be in $\range{\matA^\T}$ but we could still have $\matW_1^{(\infty)}\matW_2^{(\infty)}\dots \matW_{h}^{(\infty)}\x_\infty \in \range{\matA^\T}$. We wanted to check whether this happens empirically in orthogonal linear networks, so we tested several problems with pymanopt \citep{10.5555/2946645.3007090}, with the default QR retraction and no line search to keep things as simple as possible (although this did not seem to have an effect regardless). 

Figures \ref{fig:5.1} and \ref{fig:5.2} are two examples of such experiments, where we solved the same problem (outlined below) using deep orthogonal linear networks with two different initializations, and Figure \ref{fig:5.3} was entirely a different problem. The goal of these experiments was to check whether depth helps us or not in orthogonal linear networks and whether we converge to the minimum norm solution, perhaps regardless of initialization.
We clarify that in Figures \ref{fig:5.1} and \ref{fig:5.2} both experiments solved the same problem $$\matA = \begin{pmatrix}5 & -3 & 1 \\ 3 & 1 & -1\end{pmatrix}\quad \b = \begin{pmatrix}6 \\ 4\end{pmatrix}.$$
In all experiments the hidden weights were optimized on the $d \times d$ Stiefel manifold while the outmost layer was unconstrained. The only difference between Figures \ref{fig:5.1} and \ref{fig:5.2} was the seed that governed the initialization.

We immediately see that the paths may diverge with depth. This disagrees with \citep{ablin2020deep}, which states that deep orthogonal networks are shallow and that depth has no effect. \citet{ablin2020deep}\'s result holds only in the matrix factorization case (that is, trying to decompose a given (square) matrix as a product of orthogonal matrices, and to do so, they initialize the weights, all square matrices, to be orthogonal and strictly optimize on Stiefel manifolds). This is unlike our setting, which is not matrix factorization, but rather regression, where the outermost layer is a vector optimized on $\R^d$ and is unconstrained, only the inner layers have the orthogonality constraint. These are two separate problems which are related in the sense that they are both linear models where the weights are simply multiplied together but are in essence distinct in dimension of the objective and the constraints on the parameters. Hence, the trajectories and biases are different as well, as empirically shown in this work.

We also see in Figure \ref{fig:5.2} that even though we had a random initialization, as we have no choice on that with orthogonal networks because of Theorem \ref{theorem:15}, we still converged to $\vtheta^\star$. This is very interesting. In a regular deep linear network with random initialization, the odds of converging to $\vtheta^\star$ are very low, we have never encountered that happening randomly, but for orthogonal networks it happens quite frequently. This is mysterious and we haven't managed yet to find a convincing argument as to why this is the case. 

We also see that depth can have both a positive and a negative effect, as seen in Figure \ref{fig:5.1} showing that depth brings us closer to $\vtheta^\star$ while in Figure \ref{fig:5.3} depths displace us further from $\vtheta^\star$

The experiments disprove the idea that the distance to $\vtheta^\star$ is related to the norms of the individual weights, as indicated in Section \ref{subsec:stability-analysis}. The inequality is very lenient, and depth does not fix a bad initialization, not even in the orthogonal case, as illustrated in Figure \ref{fig:5.3}, where depth even makes us farther away from $\vtheta^\star$.

To further assess the behavior of Riemannian networks optimized on the product of Stiefel manifolds, we have conducted 10000 trials with random initializations on the above linear system of equations, with the goal of exploring whether statistically depth helps or harms the distance to the minimum norm solution. Convergence to $\vtheta^\star$ is not guaranteed, and while it is interesting to explore when it convergence to $\vtheta^\star$ happens, it is also useful to ask whether depth helps when it does not happen? Figures \ref{fig:5.4} and \ref{fig:5.5} aim to answer this question.

In Figure \ref{fig:5.4} we draw the histogram of the distance from $\vtheta^\star$ for depths $h = 1, 3, 6$ and see that as depth increases, the histograms become more centered to the left (smaller error) and also more tightly clustered (smaller variance). This indicates that while all options are possible, statistically when solving the above problem, depth helps us. In Figure \ref{fig:5.5} we plot the 25th, 50th and 75th percentile distances for each respective $h$, and the shaded area represents one variance. %the possible spread of values we got in the 10000 experiments. 
We observe that percentile distances decay and the shaded areas become thinner as $h$ increases, indicating that statistically, depth is beneficial.

\begin{figure*}
    \centering
    \includegraphics[scale=0.5]{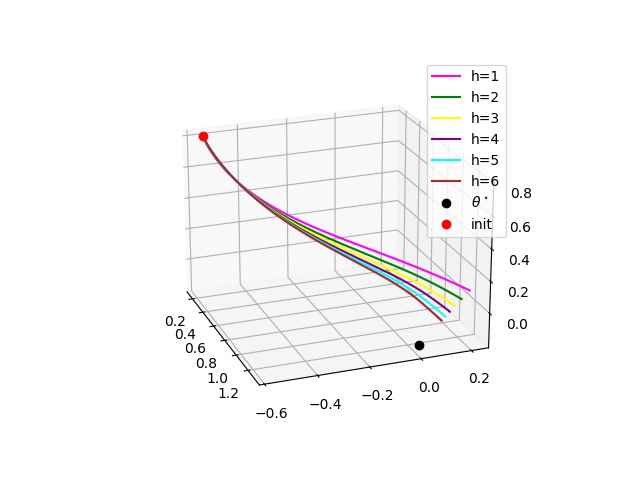}
    \caption{\label{fig:5.1} Deep orthogonal linear network, seed = 5. Solving $\begin{pmatrix}5 & -3 & 1 \\ 3 & 1 & -1\end{pmatrix}\x = \begin{pmatrix}6 \\ 4\end{pmatrix}$}
\end{figure*}

%\vfill
%\newpage

\begin{figure*}
    \centering
    \includegraphics[scale=0.5]{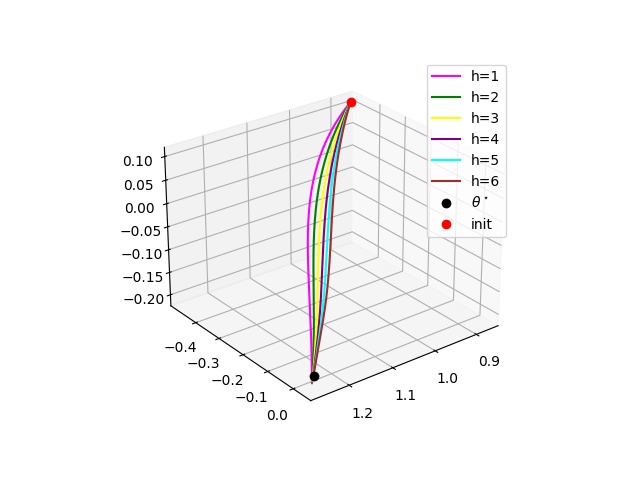}
    \caption{\label{fig:5.2} Deep orthogonal linear network, seed = 351. Solving $\begin{pmatrix}5 & -3 & 1 \\ 3 & 1 & -1\end{pmatrix}\x = \begin{pmatrix}6 \\ 4\end{pmatrix}$}
\end{figure*}

\begin{figure*}
    \centering
    \includegraphics[scale=0.5]{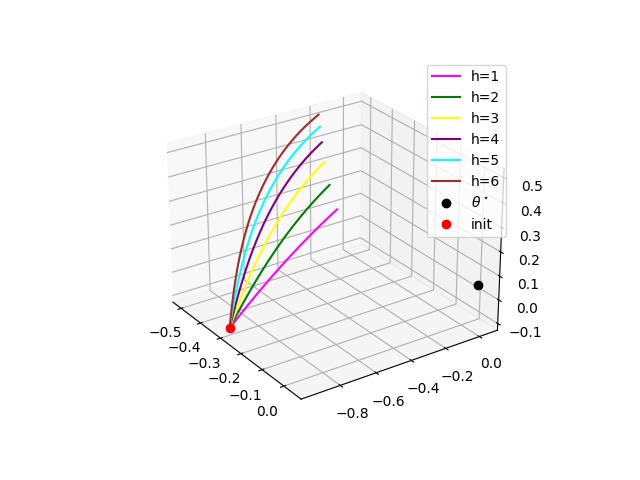}
    \caption{\label{fig:5.3} Deep orthogonal linear network, seed = 12. Solving a randomly generated problem.}
\end{figure*}

\begin{figure*}
    \centering
    \includegraphics[scale=0.5]{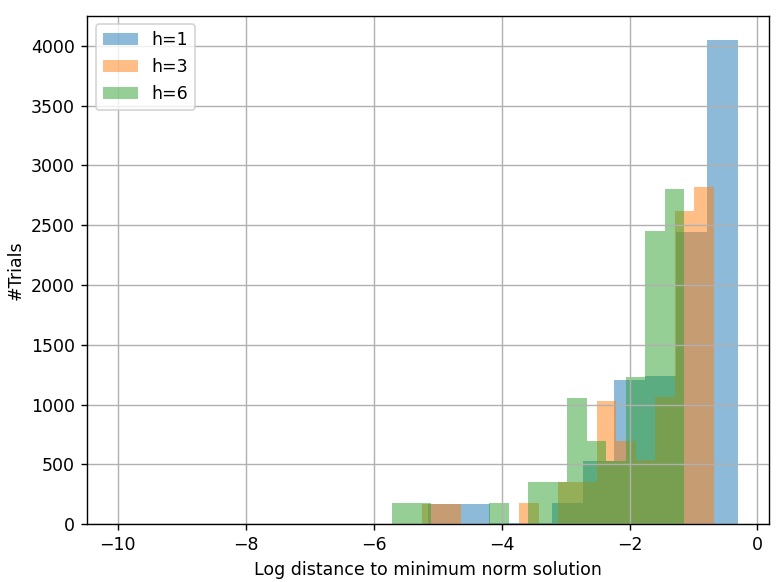}
    \caption{\label{fig:5.4} Histogram depicting amount of experiments vs distance to $\vtheta^\star$ observed for different $h$ values in Riemannian setting}
\end{figure*}

\begin{figure*}
    \centering
    \includegraphics[scale=0.5]{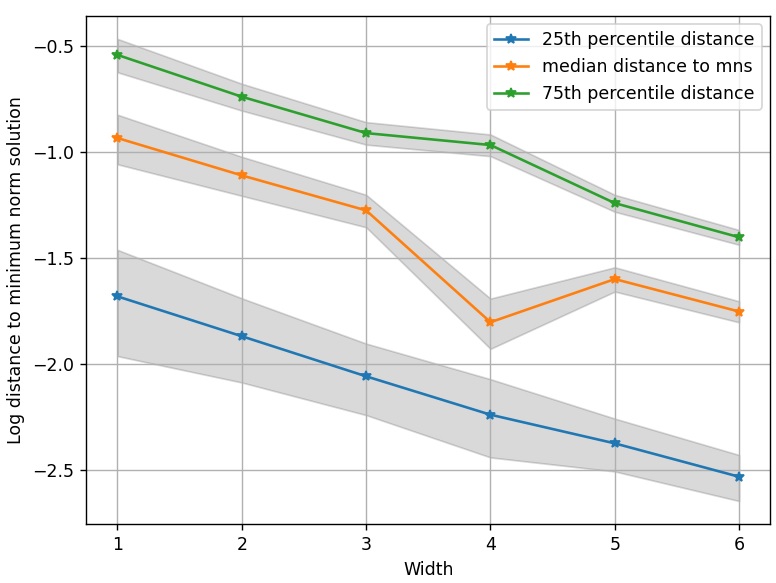}
    \caption{\label{fig:5.5} Different percentile log distance to $\vtheta^\star$ for different $h$ values, shaded area is possible values seen in experiments.}
\end{figure*}

\section{Conclusions}

We hope that this work clearly illustrates the pivotal role of initialization in deep learning. For linear networks, when we can control the initialization freely, we have clear advantages of choosing where to converge (Theorem \ref{thm:control-zero-depth}, Corollary \ref{cor:7}, Corollary \ref{cor:14}), and we can collapse the problem from a high-dimensional problem to an equivalent problem with low dimensions (Algorithms \ref{alg:3} and \ref{alg:4}). We can ensure convergence to an optimal solution (since we converge to a solution rather than a saddle point) and give a very rough error bound if we can not initialize exactly where we wish (Section \ref{subsec:stability-analysis}). We saw that where we cannot control the initialization (Section \ref{lemma:5}), the best we can do is hope to converge to a good solution, and depth often will not fix bad initializations.

The implicit bias determined by initialization is a key question to solve in deep neural networks, and in our work, we attempted to convey the importance of this seemingly innocent part of any parametric method, but there is more work to be done. The new algorithms we propose (Algorithms 3 and 4) need to be looked at further and given bounds on rate of convergence, and any other advantages these methods may have that we hope will come to light. Specifically, we believe that there may be advantages to data-based initializations and possibly other initializations apart from ours that take advantage of the data given to reach a desired bias, but that is something that needs to be carefully and thoroughly researched further, as the industry standard currently is simple random initialization that does not depend on the data. It is also very tempting to show under which circumstances an orthogonal linear network will converge to the minimum-norm solution, as we saw that it happens quite frequently, which is very surprising. A natural next step will be to try generalize our work to the nonlinear case and prove a criterion that will assure convergence to the least norm solution (or a low norm solution) in ordinary deep networks. The issue of extending our work to linear networks of depth greater than $h=2$ is another matter that requires resolution - a general method for collapsing deep linear networks, or proof that such a method does not exist when $h>2$. Finally, we hope to find an explanation and perhaps a fix for the zigzag phenomenon that we see in Figure \ref{fig:4.1} that would make the new algorithms even better, and test that solution on non-linear networks, which is the main motivation since it is unlikely the new algorithms will be better than modern methods for linear regression like Krylov subspace solutions. 

\bibliography{arxiv}
\bibliographystyle{plainnat}

\end{document}